\newtheorem{lemma}{Lemma}[section]
\newtheorem{theorem}{Theorem}[section]
\newtheorem{proposition}{Proposition}[section]
\newtheorem{assumption}{Assumption}[section]
\newtheorem{remark}{Remark}
\newtheorem{definition}{Definition}
\newtheorem{example}{Example}
\newcommand{\trace}[1]{\text{tr}\left(#1\right)}
\def\Id{\operatorname{I}}
\newcommand*{\colorboxedAux}[3]{%
	% #1: optional argument for color model
	% #2: color specification
	% #3: formula
	\begingroup
	\colorlet{cb@saved}{.}%
	\color#1{#2}%
	\boxed{%
		\color{cb@saved}%
		#3%
	}%
	\endgroup
}
\DeclareMathOperator*{\argmax}{argmax}
\DeclareMathOperator*{\argmin}{argmin}
\newcommand{\floor}[1]{\lfloor #1 \rfloor}
\newcommand{\R}{\mathbb{R}}
\newcommand{\pro}{{\mathbb P}}
\newcommand{\expec}[2]{\mathbb{E}_{#2}[ #1 ] }
\newcommand{\norm}[1]{\lVert #1 \rVert}
\newcommand\numberthis{\addtocounter{equation}{1}\tag{\theequation}}  %number one line in align*
\def\fn[#1]#2{{f_{#1}\left(x_{#2}\right)}}
\newcommand{\prox}{\text{prox}}
\newcommand{\M}{\mathcal{M}}
\newcommand{\cM}{\mathcal{M}}
\DeclareMathOperator{\dom}{dom}
\newcommand*{\colorboxed}{}
\def\colorboxed#1#{%
	\colorboxedAux{#1}%
}
\newcommand{\inner}[1]{ \left\langle {#1} \right\rangle }
\newcommand{\tangentM}[1]{\tangent{\cM}{#1}}
\newcommand{\tangent}[2]{T_{#1}({#2})}
\def\exp{{\rm exp}}
\def\cP{{\cal P}}
\def\cM{{\cal M}}
\def\cH{{\cal H}}
\def\cX{{\cal X}}
\def\cI{{\cal I}}
\def\cF{{\cal F}}
\def\cU{{\cal U}}
\def\dom{{\rm dom}}
\def\perturb{\nu}
\def\RR{\mathbb{R}}
\def\localconstant{C}
\def\dist{\text{dist}}
\newcommand{\indic}[1]{\mathbbm{1}_{\tau_{k_s,\delta}>#1}}
\newcommand{\EE}{\mathbb{E}}
\newcommand{\stepsize}{\eta}
\newcommand{\proj}{P}
\newcommand{\dotp}[1]{\left\langle #1\right\rangle}
\newcommand{\ks}{k_s}
\newcommand{\cub}{C_{\text{ub}}}
\newcommand{\lm}{L_\cM}
\newcommand{\strong}{\gamma}
\newenvironment{talign*}
{\csname align*\endcsname}
{\endalign}
\numberwithin{equation}{section}
\title{Online Covariance Estimation in Nonsmooth Stochastic Approximation}
\author{Liwei Jiang\thanks{H. Milton Stewart School of Industrial and Systems Engineering, Georgia Institute of Technology, Atlanta, GA 30332, USA. Emails: \href{mailto:ljiang306@gatech.edu}{ljiang306@gatech.edu}, \href{mailto:senna@gatech.edu}{senna@gatech.edu}. }\and 
Abhishek Roy\thanks{Department of Statistics,  Texas A\&M University,  College Station, TX, 77843, USA. Email: \href{mailto:abhishekroy@tamu.edu}{abhishekroy@tamu.edu}.  Abhishek Roy contributed to this work equally with Liwei Jiang.}\and
Krishna  Balasubramanian\thanks{Department of Statistics, University of California, Davis, CA 95616, USA. Email: \href{mailto:kbala@ucdavis.edu}{kbala@ucdavis.edu}. } \;\;
\and
Damek Davis\thanks{Wharton Department of Statistics and Data Science, University of Pennsylvania, Philadelphia, PA 19104, USA. Email: \href{mailto:damek@wharton.upenn.edu}{damek@wharton.upenn.edu}. }\and 
Dmitriy Drusvyatskiy\thanks{Department of Mathematics, University of Washington, Seattle, WA 98195. Email: \href{mailto:ddrusv@uw.edu}{ddrusv@uw.edu}. }
\and Sen Na\footnotemark[1]}
\date{}
\begin{document}
	
\maketitle
%\tableofcontents
\begin{abstract}

We consider applying stochastic approximation (SA) methods to solve nonsmooth variational inclusion problems. 
Existing studies have shown that the averaged iterates of SA methods exhibit asymptotic normality, with an optimal limiting covariance matrix in the local minimax sense of  H\'{a}jek and Le Cam. However, no methods have been proposed to estimate this covariance matrix in a nonsmooth and potentially non-monotone (nonconvex) setting. 
In this paper, we study an~online batch-means covariance matrix estimator introduced in \cite{zhu2023online}. The estimator groups the SA iterates appropriately and computes the sample covariance among batches as an estimate of the limiting covariance. Its construction does not require prior knowledge of the total sample size, and updates can be performed recursively as new data arrives. 
We establish that, as long as the batch size sequence is properly specified (depending on the stepsize sequence), the estimator achieves a convergence rate of order $O(\sqrt{d}n^{-1/8+\varepsilon})$ for any $\varepsilon>0$, where $d$ and $n$ denote the problem dimensionality and the number of iterations (or samples) used. Although the problem is nonsmooth and potentially non-monotone (nonconvex), our convergence rate matches the best-known rate for covariance estimation methods using only first-order information in smooth and strongly-convex settings. The consistency of this covariance estimator enables asymptotically valid statistical inference, including constructing confidence intervals and performing hypothesis testing.

\end{abstract}

\section{Introduction}\label{sec:1}

A landmark result by \cite{polyak1992acceleration} shows that for smooth and strongly convex~optimization, Stochastic Gradient Descent (SGD) exhibits a central limit theorem: the averaged SGD iterates with a proper scaling factor converge to a normal distribution; see~\cite{toulis2017asymptotic,duchi2021asymptotic} for extensions and~\cite{anastasiou2019normal,shao2022berry,samsonov2024gaussian} for quantitative non-asymptotic bounds. Recently, \cite{davis2024asymptotic} extended this result to nonsmooth problems, showing that when solutions vary smoothly with respect to perturbations, the averaged generic stochastic approximation (SA) iterates remain asymptotically normal. This limiting distribution paves the way for constructing confidence intervals and statistical tests, critical tools for uncertainty quantification in machine learning and optimization. However, to perform (asymptotically) valid statistical inference, we need to estimate the covariance matrix of the limiting distribution. While efficient online estimators are well understood in the smooth setting, estimation in the nonsmooth setting has remained completely open. In this paper, we develop an online estimator with computation and memory scaling quadratically in dimension, and establish~its~rate~of~convergence in expectation (matching the smooth setting).

The theory encompasses many important problems in machine learning and operations research. Consider a two-player zero-sum game. To find the Nash equilibrium, the two players aim to solve:
\begin{equation*}
\min_{x_1 \in \cX_1} \max_{x_2 \in \cX_2} \mathop\EE_{\nu \sim \cP}[f(x_1,x_2,\perturb)],
\end{equation*}
where $f(x_1,x_2,\perturb)$ is a random payoff function and $\cX_1, \cX_2$ are strategy sets. Players update their strategies based on noisy observations, projecting onto their respective strategy sets. Another example is stochastic nonlinear programming; we solve:
\begin{equation}\label{eqn:constrained_opt}
\min_{x} \mathop\EE_{\nu \sim \cP}[f(x,\nu)] \quad \text{subject to} \quad g_i(x) \leq 0, \quad i=1,\ldots,m,
\end{equation}
where the objective depends on random data. Both settings, along with many others, can be unified through stochastic variational inequalities of the form:
\begin{equation}\label{eqn:variation_inclu_intro}
0 \in F(x) := \mathop\EE_{\nu \sim \cP}[A(x, \nu)] + N_{\cX}(x),
\end{equation}
where $A(\cdot, \nu)$ is a smooth operator for each $\nu$, and $N_{\cX}$ denotes the normal cone to the constraint~set~$\cX$. Throughout, we fix a solution $x^\star$ of this inclusion.

To solve the above problems in an online fashion, we consider SA algorithms based on a~generalized gradient mapping, $G: \RR_{++} \times \RR^d \times \RR^d \mapsto \RR^d$, of $F$. Given $x_0$, the algorithm iterates~as
\begin{equation}\label{snequ:1}
x_{k+1} = x_k - \eta_{k+1}G_{\eta_{k+1}}(x_k, \nu_{k+1}),
\end{equation}
where $\eta_{k+1} > 0$ is a stepsize sequence and $\nu_k$ is stochastic noise. As we show in Section~\ref{sec:SA_algorithms},~this~framework unifies many online algorithms -- in games it captures simultaneous gradient play; in constrained optimization it yields projected gradient methods; and more generally, it encompasses stochastic forward-backward splitting.

\cite{davis2024asymptotic} showed that when solutions to the perturbed system vary smoothly -- that is, when the graph of the solution map $S(v) = \{x : v \in F(x)\}$ locally coincides with the graph~of~some smooth function $\sigma(\cdot)$ -- the averaged iterates of \eqref{snequ:1} are asymptotically normal:
\begin{equation} \label{eqn:asymptotic_normality}
\sqrt{k}(\bar x_k - x^{\star}) \xrightarrow{D} N(0, \Sigma),
\end{equation}
where $\bar x_k = \sum_{i=1}^kx_i/k$ and $\Sigma = \nabla \sigma(0)\cdot\text{Cov}(A(x^{\star}, z))\cdot\nabla \sigma(0)^{\top}$. For example, in stochastic nonlinear programming~\eqref{eqn:constrained_opt}, $A(x^\star,\nu) = \nabla f(x^\star, \nu)$ and $\nabla \sigma (0)$ takes a particularly elegant form
\begin{equation*}
\nabla \sigma (0) = (P_{T}\nabla^2_{xx}\mathcal{L}(x^\star, y^\star)P_{T})^{\dagger},
\end{equation*}
where $(x^\star, y^\star)$ is the primal-dual solution of \eqref{eqn:constrained_opt}, $\mathcal{L}(x,y) = f(x) + \sum_{i=1}^{n+m} y_i g_i(x)$ is the Lagrangian function, and $P_T$ projects onto the tangent space of active constraints at the solution $x^\star$.

In order to leverage the aforementioned result in practice to construct confidence sets, it is required to estimate the asymptotic covariance matrix $\Sigma$. The batch-means estimator~\citep{lahiri2003resampling, flegal2010batch} from the larger Markov chain literature has been recently adapted in the literature for developing \emph{online} estimators of $\Sigma$; see, for example,~\cite{zhu2023online} and~\cite{roy2023online}. The key idea is to divide the iterates into blocks of increasing size,~with each block providing an approximately independent estimate of the covariance matrix. The block sizes are carefully~chosen to balance the bias-variance tradeoff while maintaining the~desirable~convergence~rate. Specifically, let $\{a_m\}_m$ be a strictly increasing sequence of integers with $a_1=1$. For any $k=1,2,\ldots$, we construct a block $B_k$ consisting of the iterates $\{x_{t_k},x_{t_k+1},\ldots,x_k\}$ where $t_k=a_m$ for $k\in [a_m,a_{m+1})$. Let $l_k=|B_k|$ denote the size of the block $B_k$. After $n$ iterations, the batch-means covariance estimator is given by:
\begin{equation}\label{eqn:online_batch_means}
\hat{\Sigma}_n=\frac{\sum_{i=1}^n\left(\sum_{k=t_i}^ix_k-l_i\bar{x}_n\right)\left(\sum_{k=t_i}^ix_k-l_i\bar{x}_n\right)^
\top}{\sum_{i=1}^nl_i}.
\end{equation}
\cite{zhu2023online} showed that for SGD with i.i.d. data stream, $\hat{\Sigma}_n$ (asymptotically) consistently~estimates $\Sigma$ with a convergence rate of order $O(n^{-1/8})$. Subsequently, \cite{roy2023online} extended this result to Markovian data. However, these limited existing works on online covariance estimation for first-order methods apply only to smooth and strongly convex problems, and their analyses do not apply to generic iterations as in \eqref{snequ:1}.

\paragraph{Main Contribution.} 
Our main contribution is to show that, despite significant complexity~introduced by nonsmooth geometry, we can achieve the same convergence rate as in the smooth~case using the same covariance estimator \eqref{eqn:online_batch_means}.
In particular,  we establish that under reasonable conditions and with a properly chosen batch size control sequence $\{a_m\}_m$, the online batch-means estimator $\hat \Sigma_n$ in \eqref{eqn:online_batch_means} with generic SA iterates \eqref{snequ:1} satisfies 
\begin{equation*}
\mathbb{E}\|\hat{\Sigma}_n - \Sigma\|_2 = O(\sqrt{d}n^{-1/8+\varepsilon}) \quad \text{for any }\; \varepsilon>0.
\end{equation*}
We also emphasize that when applying our result to stochastic optimization problems, the objective does not need to be strongly convex or even convex. This is in contrast with all existing works that heavily rely on global strong convexity \citep{chen2020statistical,zhu2023online, roy2023online}. Our analysis addresses the following main challenges:
\begin{enumerate}
\item 
Due to the nonsmooth nature of problem~\eqref{eqn:variation_inclu_intro}, Taylor's theorem -- on which all existing~methods \citep{chen2020statistical,zhu2023online, roy2023online} are based -- is~no longer applicable. Our key insight is that, despite the problem being nonsmooth, typical~instances exhibit partial smoothness near the solution. In other words, there exists a distinctive manifold containing the solution 
and capturing the hidden smoothness of the map $F$.
In a~local neighborhood around the solution, we project all iterates onto this manifold, forming what we call the \textit{shadow sequence}. We then prove that the shadow sequence behaves almost as if~it were generated by a smooth dynamic.

\item Our analysis of the shadow sequence builds on prior work on nonsmooth asymptotic normality \citep{davis2024asymptotic}; however, their asymptotic guarantees are insufficient for our~non-asymptotic study. In this work, we provide a more refined analysis and establish a tighter~bound on the distance between the original iterates and their shadows. Our results show that the~hypothetical batch-means estimator constructed from the shadow sequence converges to the same limit -- and at the same rate -- as the estimator based on the original~sequence~\eqref{eqn:online_batch_means}.~Consequently, the problem reduces to analyzing the estimator derived from smooth dynamics.

\item Due to the local nature of both the manifold and the shadow sequence, the above argument~holds only when the iterates remain within a local neighborhood of the solution. To~address this,~we introduce a stopping time. Under light-tailed noise, we apply a martingale~concentration~inequality to show that, with high probability, the original iterates stay within the local neighborhood after a certain number of iterations. Consequently, the shadow sequence always exists, and the stopping time can finally be dropped in the convergence guarantee.
\end{enumerate}
 We should mention that our above techniques extend beyond the covariance estimation problem, offering a template for analyzing other nonsmooth SA algorithms whose dynamics are implicitly~governed by an underlying local smooth structure.

\paragraph{Paper organization.}
In Section \ref{sec:2}, we introduce the notations and preliminaries, including smooth manifolds and nonsmooth analysis. In Section \ref{sec:3}, we present the assumptions and main results. In Section \ref{sec:highprob}, we address the issue of the stopping time involved in our main results by providing a high-probability guarantee. In Section \ref{sec:SA_algorithms}, we present specific examples of SA algorithms for~nonsmooth problems, and we conclude and discuss future work in Section \ref{sec:6}. Concrete examples of nonsmooth variational inclusion problems satisfying our assumptions, as well as the proofs of theoretical results, are deferred to the appendix.

\section{Notations and preliminaries}\label{sec:2}

\paragraph{Notations.} 
Throughout the paper, the symbol $\R^d$ denotes a Euclidean space with 
inner product $\langle\cdot,\cdot \rangle$ and the induced norm $\|x\|_2=\sqrt{\langle x,x\rangle}$. The symbol $\mathbb{B}$ denotes the closed unit ball in $\R^d$, while $B_r(x)$ denotes the closed ball of radius $r$ around a point $x$. When $A \in \RR^{m\times n}$ is a matrix, $\|A\|_2$ denotes the spectral norm of $A$.
For any function $f\colon\R^d\to\R\cup\{+\infty\}$,  its {\em domain} is defined as $\dom\, f:=\{x\in \R^d: f(x)<\infty\}.$ We say $f$ is {\em closed} if its epigraph is a closed set, or equivalently if $f$ is lower-semicontinuous. The {\em proximal map of $f$ with parameter $\alpha>0$} is given by 
$$\prox_{\alpha f}(x):=\argmin_y \left\{f(y)+\frac{1}{2\alpha}\|y-x\|_2^2\right\}.$$
The {\em distance} and the {\em projection} of a point $x\in\R^d$ onto a set $Q\subset\R^d$ are, respectively, 
\begin{align*}
d(x,Q):=\inf_{y\in Q}\|y-x\|_2 \qquad\textrm{and}\qquad P_Q(x):=\argmin_{y\in Q}\|y-x\|_2.
\end{align*}
The indicator function of  $Q$, denoted by $\delta_Q(\cdot)$, is defined to be zero on $Q$ and $+\infty$ off it.  The symbol $o(h)$ stands for any function $o(\cdot)$ satisfying $o(h)/h\to 0$ as $h\searrow 0$.

\paragraph{Smooth manifold.} To be self-contained, we make a few definitions for smooth manifold; we~refer the reader to \cite{lee2013smooth,boumal2020introduction} for details. Throughout the paper, all smooth manifolds $\mathcal{M}$ are assumed to be embedded in $\R^d$, and we consider the tangent and normal spaces to $\mathcal{M}$ as subspaces of $\R^d$. In particular, for any $x\in \M$, we denote the tangent and normal spaces of $M$ at $x$ by $T_\cM(x)$ and $N_\cM(x)$, respectively.  A map $F\colon\cM\to\R^m$ is called $C^p$ ($p\ge 1$) smooth near a point $ x$ if there exists a $C^p$-smooth map $\hat F\colon U\to\R^d$ defined on some neighborhood $U\subset\R^d$ of $ x$ that agrees with $F$ on $\cM$ near $ x$. In this case, we define the {\em covariant Jacobian} $\nabla_\cM F( x)\colon T_{\cM}( x)\to\R^m$ by the expression $\nabla_{\cM} F(x)(u)=\nabla \hat F( x) u$ for all $u\in T_{\cM}(x)$.

\paragraph{Nonsmooth analysis.}\label{sec:normalcones}
Next, we introduce a few terminologies used in nonsmooth and variational analysis. The introduction follows \cite{rockafellar2009variational}. Consider a function $f\colon\R^d\to\R\cup\{+\infty\}$ and a point $x\in \dom\, f$. The  {\em Fr\'{e}chet subdifferential  of $f$ at $x$}, denoted $\hat \partial f(x)$, consists~of~all vectors $v\in \R^d$ satisfying the approximation property:
$$f(y)\geq f(x)+\langle v,y-x\rangle+o(\|y-x\|)\quad \textrm{as}\quad y\to x.$$
The {\em limiting subdifferential  of $f$ at $x$}, denoted $\partial f(x)$, consists of all vectors $v\in \R^d$ such that~there exist sequences $x_i\in \R^d$ and Fr\'{e}chet subgradients $v_i\in \hat \partial f(x_i)$ satisfying $(x_i,f(x_i),v_i)\to (x,f(x),v)$ as $i\to\infty$. A point $ x$ satisfying $0\in\partial f(x)$ is called {\em critical} for $f$. For any set $Q$ and $x\in Q$, the {\em Fr\'{e}chet normal cone  of $Q$ at $x$} is defined by $\hat N_Q(x):= \hat \partial \delta_Q(x),$ where $\delta_Q$ is the indicator function of $Q$. Similarly, the {\em limiting normal cone  of $Q$ at $x$} is defined by $N_Q(x):= \partial \delta_Q(x).$

\section{Assumptions and main results}\label{sec:3}

Setting the stage, our goal is to find a point $x$ satisfying the inclusion
\begin{align}\label{eqn:inclusion_problem}
0 \in F(x),
\end{align}
where $F: \RR^d \rightrightarrows \RR^d$ is a set-valued map. Throughout, we fix one such solution $x^\star$ of~\eqref{eqn:inclusion_problem}.~We~assume the existence of a distinctive manifold $\cM$ that contains $x^\star$ and satisfies the property that~the map $x \mapsto P_{T_\cM(x)} F(x)$ is single-valued and $C^p$-smooth on $\cM$ near $x^\star$. The following assumption provides a precise statement of this assumption.

\begin{assumption}[Smooth structure]\label{ass:basic_assumpt}
{\rm 
Suppose that there exists a $C^p$ $(p \ge 1)$ manifold $\mathcal{M}\subset\R^d$~such that the map $F_{\cM}\colon \mathcal{M}\to \R^d$ defined by $F_{\cM}(x) \coloneqq P_{T_{\cM}(x)}F(x)$ is single-valued and $C^p$ smooth on some neighborhood $V$ of $x^\star$ in $\cM$.  Moreover, there exists $\strong >0$ and $\lm >0$ such that $F_\cM$ is $\lm$-Lipschitz in $V \cap \cM$, and for any $x \in V \cap \cM$, 
\begin{equation}\label{eqn:QG}
\dotp{F_\cM(x), x - x^\star} \ge \strong \|x - x^\star\|^2.
\end{equation}
}
\end{assumption}

Note that in the case when $F = \nabla f$ for some smooth function $f$, the manifold $\cM$ is simply $\RR^d$, and the condition~\eqref{eqn:QG} is equivalent to the local quadratic growth condition \citep{davis2022nearly}. To illustrate the role of manifold $\cM$ for nonsmooth map $F$, we consider the following two examples: $\ell_1$-regularization problems and nonlinear programming. A detailed discussion of these and more examples can be found in Appendix~\ref{sec:examples_active}.

\begin{example}[$\ell_1$-regularization]\label{ex:l1reg_intro}
{\rm
Consider the stochastic optimization problem with $\ell_1$ regularization
\begin{equation} \label{eqn:l1reg_intro}
\min_{x}~ g(x) = f(x) + \lambda \|x\|_1,
\end{equation}
where $f(x) = \EE_{\nu \in \cP}[f(x, \nu)]$ is a $C^p$-smooth function in $\RR^d$. Consider now $x^\star \in \RR^d$, a critical point of the function $g$, and define the index set $\cI = \{i \colon x^\star_i = 0\}$. Then, the set $\cM = \{ x\colon x_i = 0,~\forall i \in \cI\}$ is an affine space, hence a smooth manifold. It is easy to show that when $\nabla^2 f(x^\star)$ is positive definite restricted onto $T_\cM(x^\star)$, the map $F = \partial g$ satisfies Assumption~\ref{ass:basic_assumpt} with manifold $\cM$.
}
\end{example}

\begin{example}[Nonlinear programming]\label{ex:nlp_intro}
{\rm
Consider the problem of nonlinear programming
\begin{equation}\label{eqn:nl0_intro}
\begin{aligned}
\min_{x}~ &f(x),\\
{\rm s.t.}~\,&g_i(x)\leq 0\qquad \textrm{for }i=1,\ldots,m,\\
&g_i(x)= 0\qquad \textrm{for }i=m+1,\ldots,n,
\end{aligned}
\end{equation}
where $f$ and $g_i$ are $C^p$-smooth functions on $\R^d$. Let $\cX$ denote the set of all feasible points to the problem. Consider now a point $x^\star\in \cX$ that is critical for the function $f+\delta_{\cX}$, and define~the~active index set $\mathcal{I}=\{i: g_i(x^\star)=0\}$. Suppose the Linear Independence Constraint Qualification (LICQ) condition holds, i.e., the gradients $\{\nabla g_i(x^\star)\}_{i\in \mathcal{I}}$ are linearly independent. Then, the~set~$\cM=\{x: g_i(x)=0~\forall i\in \cI\}$ is a $C^p$ smooth manifold locally around $x^\star$. In the literature on nonlinear programming, the manifold $\cM$ is also referred to as the active set \citep{nocedal2006numerical}.~Define the Lagrangian function
$$\mathcal{L}(x,y):=f(x)+\sum_{i=1}^{n+m} y_i g_i(x).$$
The criticality of $x^\star$ and LICQ ensure that there exists a (unique) Lagrange multiplier vector~$y^\star \in \R^{m}_+\times \R^n$ satisfying $\nabla_x \mathcal{L}(x^\star,y^\star)=0$ and $ y_i^\star=0$ for all $i\notin \cI$. Assume in addition that $\nabla^2_{xx} \mathcal{L}(x^\star,y^\star)$ is positive definite when restricted onto $T_\cM(x^\star)$, often called the  Second-Order Sufficient Condition (SOSC); we can then show that  $F = \nabla f + N_\cX$ satisfies Assumption~\ref{ass:basic_assumpt} with the manifold~$\cM$.
}
\end{example}

The stochastic approximation (SA) algorithms we consider in this work assume access to a \emph{generalized gradient mapping} $G: \RR_{++} \times \RR^d \times \RR^d \mapsto \RR^d$. As stated in Section \ref{sec:1}, given~$x_0$,~our~generic~SA algorithm iterates as
\begin{align}\label{eqn: updaterule}
x_{k+1}=x_k-\eta_{k+1}G_{\eta_{k+1}}(x_k,\nu_{k+1}), \quad \forall k \ge 0,
\end{align}
where $\eta_{k+1} >0$ is a stepsize sequence and $\nu_k$ is stochastic noise. We now state two assumptions on $G$ that are required in~\cite{davis2024asymptotic} for establishing the asymptotic normality of the averaged iterates of \eqref{eqn: updaterule}. The first assumption is similar to classical Lipschitz assumptions and ensures~that~the stepsize length can only scale linearly in $\|\nu\|$.

\begin{assumption}[Steplength] \label{assumption:localbound}
{\rm
We suppose there exist a constant $\localconstant>0$ and a neighborhood $\cU$ of $x^\star$ such that the map $G$ satisfies $\sup_{x \in \cU_F} \|G_\eta(x, \perturb)\| \leq \localconstant(1+\|\perturb\|)$ for any $\nu \in \RR^d$ and $\eta > 0$, where  we set $\cU_F \coloneqq \cU \cap \dom F$.
}
\end{assumption}

The second assumption precisely characterizes the relationship between two mappings, $G$ and $F_\cM$. For simplicity, we abuse the notation $C$ to denote a general upper bound.

\begin{assumption}\label{assumption:Aproposed}
{\rm 
We suppose that there exist constants $\localconstant, \mu > 0$, a manifold $\cM$ containing $x^\star$, and a neighborhood $\cU$ of $x^\star$ such that the following hold for any $\nu \in \RR^d$ and $\eta > 0$, where~we~set~$\cU_F \coloneqq \cU \cap \dom F$:
\begin{enumerate}
\item\label{assumption:smoothcompatibility} {\bf (Tangent comparison)}
For any $x \in \cU_F$, we have
\begin{align*}
\|P_{T_\cM({P_{\cM}(x))}}(G_\eta(x, \perturb) - F(P_{\cM}(x)) - \nu)\| \leq C (1 + \|\perturb\|)^2(\dist(x, \cM) +\eta).
\end{align*}
\item \label{assumption:aiming} {\bf (Proximal Aiming)} For any $x \in \cU_F$, we have
\begin{align*}
\inner{G_\eta(x, \perturb) - \nu, x - P_{\cM}(x)} &\geq \mu \cdot \dist(x, \cM) - (1+\|\perturb\|)^2(o(\dist(x, \cM)) + C\eta).
\end{align*}
\end{enumerate}}
\end{assumption}

In the above assumption, Item~\ref{assumption:smoothcompatibility} asserts that in the tangent directions of $\cM$, the gradient map~$G$ accurately approximates the map $F$; while Item~\ref{assumption:aiming} asserts that in the normal directions, the gradient map $G$ points outward from $\cM$. 
In the context of stochastic optimization, Assumptions~\ref{ass:basic_assumpt}--\ref{assumption:Aproposed} neither imply global strong convexity nor global convexity. See Example~\ref{example:nonconvex} in Appendix~\ref{sec:examples_active} for a concrete example. These broader and weaker assumptions extend the scope of existing online inference works, which have focused solely on strongly convex problems \citep{chen2020statistical,zhu2023online,roy2023online}.

In the next two assumptions, we consider the choice of stepsize and the conditions on stochastic noise for online covariance estimation.

\begin{assumption}\label{assumption:zero}
{\rm~We assume the following conditions hold.
\begin{enumerate}[noitemsep]
\item The map $G_\eta$ is measurable.
\item The stepsize $\eta_k =\eta k^{-\alpha}$ for some $\eta > 0$ and $\alpha \in (\frac{1}{2}, 1)$.
\item $\{\perturb_{k+1}\}$ is a martingale difference sequence w.r.t.\ to the increasing sequence of $\sigma$-fields 
$\cF_k = \sigma(x_{0:k}, \perturb_{1:k})$. Furthermore, there exists a function $q \colon \RR^d \rightarrow \RR_+$ that is bounded on bounded sets satisfying $\expec{\|\perturb_{k+1}\|^8}{k} \leq q(x_{k})$, where $\expec{\cdot}{k}=\expec{\cdot\mid \cF_k}{}$.
\item The inclusion $x_{k} \in \dom F$ holds for all $k \geq 0$.
\end{enumerate}}
\end{assumption}

Assumption \ref{assumption:zero} on the stepsize and noise is almost identical to \cite[Assumption~I]{davis2024asymptotic} for establishing asymptotic normality guarantees. The only difference is the requirement of the~eighth moment of $\|\nu_k\|$, whereas \cite{davis2024asymptotic} requires only the fourth moment. A stricter noise~moment condition appears to be natural for the covariance estimation problem. For example, the noise moment condition for covariance estimation of simple SGD method is also stricter than the moment condition needed for asymptotic normality; see \cite{polyak1992acceleration} and \cite{chen2020statistical,zhu2023online} for comparisons.

We next impose an additional assumption concerning the covariance of the stochastic noise $\nu_k$. Similar assumptions also widely appear in the literature on both first-order methods \citep{duchi2021asymptotic,davis2024asymptotic,chen2020statistical,zhu2023online,roy2023online} and second-order methods \citep{bercu2020efficient, na2022statistical}.

\begin{assumption}\label{assumption:martinagle}
{\rm 
Fix $x^\star \in \dom F$ at which Assumption~\ref{ass:basic_assumpt} holds and let $U$ be a matrix whose~columns form an orthogonal basis of $T_\cM(x^\star)$. We assume the gradient noise can be decomposed~as $\nu_{k+1} = \nu_{k+1}^{(1)} + \nu_{k+1}^{(2)}(x_k)$, where $\nu_{k+1}^{(2)} \colon \dom F \rightarrow \RR^d$ is a random function satisfying for some $C>0$,
$$\EE_k[\| \nu_{k+1}^{(2)}(x)\|^2] \leq C\|x - x^\star\|^2 \qquad \text{for all $x\in \dom F$},$$
and $\EE_k[\perturb_{k+1}^{(2)}(x)] = \EE_k[\perturb_{k+1}^{(1)}] = 0$. In addition, we assume the following covariance matrix is~constant for all $k \ge 1$:
\begin{align}\label{eqn:def_of_S}
S \coloneqq \EE_k[ U^\top\perturb_{k}^{(1)} {\nu_k^{(1)}}^\top U].
\end{align}
}
\end{assumption}

Note that all the previous assumptions regulate only the local behavior of the maps $F$ and $G$. To control the behavior of the iterates far from $x^\star$, we impose the following mild assumption and rigorously show that it holds for a variety of nonsmooth SA methods in Appendix~\ref{sec:global_guarantee_lemma}.

\begin{assumption}[Bounded sequence in expectation] \label{assum:bounded_seq}
{\rm 
There exists a constant $\cub>0$ such that $\EE[\|x_k - x^\star\|^2 ] \le \cub$.
}
\end{assumption}

Let $U$ be a matrix whose columns form an orthonormal basis of $T_\cM(x^\star)$. We recall that~the~limiting covariance matrix in the nonsmooth asymptotic normality result takes the following form \cite[Theorem 5.1]{davis2024asymptotic}:
\begin{align}
\Sigma:=  U(U^\top \nabla_{\cM} F_\cM(x^\star) U)^{-1} S(U^\top \nabla_{\cM} F_\cM(x^\star) U)^{-\top}U^\top,\numberthis\label{eq:covdef}
\end{align}
where $\nabla_\cM F_\cM(x^\star)$ is the covariant Jacobian of $F_\cM$, and $S$ is defined in~\eqref{eqn:def_of_S}.

We are now ready to state our main result on the convergence of the online batch-means covariance estimator~\eqref{eqn:online_batch_means}. The formal statement of our result crucially relies on local arguments and frequently refers to the following stopping time: given an index $k \geq 0$ and a constant $\delta \in (0,1)$, we define
\begin{align*}
\tau_{k, \delta} := \inf\{l \geq k \colon x_l  \notin B_{\delta}(x^\star)\},
\end{align*}
which is the first time after $k$ that the iterate leaves $B_{\delta}(x^\star)$. The following is our main convergence theorem, with its proof provided in Appendix~\ref{sec:proof_of_main}.

\begin{theorem}\label{th:mainthmcov}
Under Assumptions~\ref{ass:basic_assumpt}--\ref{assum:bounded_seq}, let us set $a_m  = \floor{Cm^{\beta}}$ for some constant $C\ge 1$ and $\beta > \frac{1}{1-\alpha}$. Then, for the iteration scheme \eqref{eqn: updaterule} and any $\ks \le n$, we have\footnote{In the rest of the paper, we use $a_n \lesssim b_n$ to denote $a_n \le C b_n$ for some constant $C$ independent of $\ks$ (if applicable), $d$ and $n$, and $a_n \asymp b_n$ to denote $a_n \lesssim b_n$ and $b_n \lesssim a_n$.}
\begin{align*}
\EE[\norm{\hat{\Sigma}_n-\Sigma }_2 \indic{n}] \lesssim \ks^3 (dn^{\frac{(\alpha-1) +\beta}{\beta}} + \sqrt{d} n^{\frac{(\alpha-1) +\beta}{2\beta}} + \sqrt{d} n^{-\frac{1}{2\beta}}).
\end{align*}
\end{theorem}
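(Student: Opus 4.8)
The plan is to reduce the nonsmooth problem to a smooth one via the shadow sequence, and then to analyze the batch-means estimator on the smooth dynamics. First, I would work on the event $\{\tau_{\ks,\delta} > n\}$, so that all iterates from step $\ks$ onward lie in $B_\delta(x^\star)$, where $\delta$ is chosen small enough that Assumptions~\ref{ass:basic_assumpt}--\ref{assumption:Aproposed} apply. On this event, define the shadow sequence $\tilde x_k = P_\cM(x_k)$. Using Assumption~\ref{assumption:Aproposed}(\ref{assumption:aiming}) (proximal aiming) one controls the normal component: $\dist(x_k,\cM)$ contracts in expectation, giving a bound of order $\eta_k$ (up to noise-moment factors) on $\EE[\dist(x_k,\cM)^2]$. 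Combined with Assumption~\ref{assumption:Aproposed}(\ref{assumption:smoothcompatibility}) (tangent comparison), one shows the shadow iterates satisfy a perturbed smooth recursion $\tilde x_{k+1} = \tilde x_k - \eta_{k+1} U(U^\top\nabla_\cM F_\cM(x^\star)U)U^\top(\tilde x_k - x^\star) - \eta_{k+1}\nu_{k+1}^{(1)} + (\text{higher-order remainder})$, where the remainder is quantitatively small — this is where the refined analysis improving on \cite{davis2024asymptotic} enters, since we need a non-asymptotic $L^2$ bound on $\|x_k - \tilde x_k\|$ rather than just $o(1)$.

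Next, I would split $\|\hat\Sigma_n - \Sigma\|_2$ into three pieces: (i) the difference between $\hat\Sigma_n$ (built from $\{x_k\}$) and the hypothetical estimator $\tilde\Sigma_n$ built from the shadow sequence $\{\tilde x_k\}$; (ii) the difference between $\tilde\Sigma_n$ and the estimator built from the linearized smooth recursion; and (iii) the error of the linearized estimator relative to $\Sigma$. Piece (i) is controlled by the $L^2$ shadow-distance bound, propagated through the bilinear structure of \eqref{eqn:online_batch_means} using \cs and the block-size bookkeeping ($\sum_i l_i \asymp n$, $l_i \lesssim a_{m+1}-a_m \asymp m^{\beta-1}$ with $m \asymp n^{1/\beta}$). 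Piece (iii) is essentially the known smooth-case analysis (in the spirit of \cite{zhu2023online}), where the bias from finite block sizes contributes the $n^{-1/(2\beta)}$ term, the martingale/variance fluctuations across blocks contribute the $\sqrt d\, n^{((\alpha-1)+\beta)/(2\beta)}$ term, and the coupling/higher-order terms contribute the $d\, n^{((\alpha-1)+\beta)/\beta}$ term. The condition $\beta > 1/(1-\alpha)$ ensures $a_m = \floor{Cm^\beta}$ grows fast enough that consecutive blocks are near-independent yet the number of blocks still diverges, which is exactly what makes all three exponents negative.

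The $\ks^3$ prefactor comes from the fact that working on $\{\tau_{\ks,\delta} > n\}$ forces crude bounds on the contributions of the first $\ks$ blocks (which are not controlled by the local structure), and these enter the sums defining $\hat\Sigma_n$ both linearly and quadratically; tracking their worst-case size through the ratio $(\sum_{k=t_i}^i x_k - l_i\bar x_n)(\cdot)^\top / \sum_i l_i$ produces the cubic dependence. I expect the main obstacle to be Piece (i): establishing a sharp enough $L^2$ bound on $\EE[\|x_k - \tilde x_k\|^2 \indic{n}]$ and then showing it propagates through the batch-means double sum without losing powers of $n$ — the block sums $\sum_{k=t_i}^i (x_k - \tilde x_k)$ can accumulate errors over a whole block of length $l_i$, so one needs to exploit cancellation (martingale structure of the dominant noise term and contraction of the normal component) rather than naive triangle-inequality summation. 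A secondary subtlety is handling the state-dependent noise component $\nu_{k+1}^{(2)}(x_k)$ from Assumption~\ref{assumption:martinagle}, whose variance is $O(\|x_k-x^\star\|^2)$ and must be absorbed into the remainder without degrading the rate, using Assumption~\ref{assum:bounded_seq} together with the local quadratic-growth-driven decay of $\EE[\|x_k - x^\star\|^2]$.
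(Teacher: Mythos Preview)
Your proposal is correct and follows essentially the same route as the paper: shadow sequence $y_k=P_\cM(x_k)$, its perturbed smooth recursion, and a split into the shadow-to-original comparison plus the smooth batch-means analysis (the paper's Lemmas~\ref{lm:sigmanhatsigmadiff} and~\ref{lm:sigmanhatsigmanprimeconv}). Two execution details worth noting: the paper inserts an additional tangent-space projection $z_k=x^\star+UU^\top(y_k-x^\star)$ before linearizing (needed because $\cM$ is curved, so $y_k-x^\star\notin T_\cM(x^\star)$), and your piece~(i) turns out \emph{not} to require cancellation---the pointwise bound $\|x_k-z_k\|\lesssim D_k+\|y_k-x^\star\|^2$ is already $O(\eta_k)$ in $L^2$, so a Cauchy--Schwarz/triangle-inequality summation over each block suffices.
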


\begin{remark}
Choosing $\beta=\frac{2}{1-\alpha}$, we have
\begin{align*}
\EE[\norm{\hat{\Sigma}_n-\Sigma }_2 \indic{n}] \lesssim  \ks^3(d n^{-\frac{1-\alpha}{2}} + \sqrt{d} n^{\frac{1-\alpha}{4}}).
\end{align*}
Further choosing $\alpha=\frac12+4\varepsilon$ for some arbitrarily small $\varepsilon>0$, we have
\begin{align*}
\EE[\norm{\hat{\Sigma}_n-\Sigma }_2 \indic{n}] 
\lesssim  \ks^3 (dn^{-\frac14+2\varepsilon}+\sqrt{d}n^{-\frac18+\varepsilon}).\numberthis\label{eq:bestrate}
\end{align*}
A comparison of Theorem ~\ref{th:mainthmcov} with related settings is in order. In particular, \eqref{eq:bestrate} shows that as long as $\ks$ is a constant,  we recover the convergence rate in the smooth case with an i.i.d data~stream~\citep{zhu2023online}. In Section~\ref{sec:highprob}, we show that under mild assumptions, the probability that the iterates leave the local neighborhood after $\ks$ decays exponentially in $\ks$. Moreover, by allowing $\ks\asymp \log^2 n$,~we recover the best-known convergence rate $O(n^{-1/8})$ in the smooth case up to logarithmic factors.~More interestingly, this rate also matches the rate obtained in the smooth case for exponentially mixing Markovian data streams \citep{roy2023online}.

\end{remark}

\paragraph{Proof ideas.} 
Our key insight is that, by Item~\ref{assumption:aiming} of Assumption~\ref{assumption:Aproposed}, the iteration sequence $x_k$ generated by the dynamics~\eqref{eqn: updaterule} can be locally but closely approximated by its projection onto $\cM$, namely,~the ``shadow sequence” defined as
$$y_k = P_\cM(x_k).$$
By carefully quantifying the distance between $x_k$ and $y_k$, we show that this error decays sufficiently fast so that the hypothetical batch-means estimator constructed with the shadow sequence $y_k$, similar to~\eqref{eqn:online_batch_means}, converges to the same limit -- and at the same rate -- as the estimator constructed with $x_k$. Consequently, it suffices to analyze the convergence of the batch-means estimator applied to $y_k$.

Another crucial implication of Assumption~\ref{assumption:Aproposed} is that the update rule of $y_k$ can be interpreted as an inexact Riemannian SA algorithm operating on the restriction of $F$ to the manifold $\cM$. More precisely, we show that the shadow sequence exhibits the recursion
$$ y_{k+1} = y_k - \eta_{k+1} F_{\cM}(y_k) - \eta_{k+1} P_{T_{\cM}(y_k)}(\nu_k) + \text{Error}_k.$$
For the sake of illustration, let us first assume that $\text{Error}_k = 0$. Due to Assumption~\ref{ass:basic_assumpt}, the dynamics of $y_k$ are smooth, allowing us to adapt the analysis of batch-means estimators developed in the context of stochastic smooth optimization \citep{chen2020statistical,zhu2023online}. In the more general~setting, we derive sharp upper bounds on the error terms and demonstrate that their contribution to the~covariance estimation error is dominated by the convergence rate established in the smooth case.

\vskip 0.3cm
    
Note that our main result is local and relies on the stopping time $\tau_{\ks, \delta}$. In this regard, we show in the following section that, under sub-Gaussian noise conditions, the iterates remain near the solution with high probability. Our analysis leverages martingale concentration inequalities applied to \eqref{eqn: updaterule}.$\quad$

\section{High probability guarantee}\label{sec:highprob}

So far, we have only made assumptions on $F$ and $G$ locally near $x^\star$, except for assuming the sequence $x_k$ is bounded in expectation (as proved in Appendix~\ref{sec:global_guarantee_lemma}). To establish global~convergence guarantees, we require the following assumption.

\vspace{-0.1cm}

\begin{assumption}\label{assum: aimingtosol}
{\rm
We assume that there are constants $\gamma, C> 0$ such that:
\begin{enumerate}
\item \textbf{(Aiming towards solution)}\label{item:aiming_solution} For any $x \in \RR^d$, we have
$\dotp{G_\eta(x,\nu) - \nu, x - x^\star} \ge \gamma \|x - x^\star\|_2^2 - C\eta(1 +\|x- x^\star\|_2^2 + \|\nu\|_2^2)$.
\item \textbf{(Global steplength)} \label{item:global_steplength} For any $x \in \RR^d$, we have
 $\|G_\eta (x, \nu)\|_2^2 \le C(1+\|x - x^\star\|_2^2+ \|\nu\|_2^2)$.
\end{enumerate}
}
\end{assumption}

Assumption~\ref{assum: aimingtosol} extends the standard strong convexity and Lipschitz gradient conditions commonly assumed in stochastic smooth optimization. In particular, we have $G_\eta(x,\nu) = \nabla f(x) + \nu$ in the case of minimizing a $\gamma$-strongly convex function $f$. Therefore, Item~\ref{item:aiming_solution} is ensured by the \(\gamma\)-strong convexity, since $\langle G_\eta(x,\nu) - \nu,\, x - x^\star \rangle = \langle \nabla f(x),\, x - x^\star \rangle \ge \gamma \|x - x^\star\|_2^2$.
Moreover, the Lipschitz gradient condition implies Item~\ref{item:global_steplength}, as we observe that $
\|G_\eta(x,\nu)\|_2 = \|\nabla f(x) + \nu\| \lesssim \|x - x^\star\| + \|\nu\|$.~Beyond the smooth case, we show in Appendix~\ref{sec:global_guarantee_lemma} that Assumption~\ref{assum: aimingtosol} holds for various nonsmooth SA~methods.

We additionally impose the following light-tail assumption on the noise.

\vspace{-0.1cm}

\begin{assumption}[Light tail]\label{assum: lighttail}
{\rm 
The noise $\nu_{k+1}$ is mean-zero norm sub-Gaussian conditioned on $\cF_k$ with parameter $\sigma/2$, i.e., $\EE_k[\nu_{k+1}] = 0$ and 
$\pro_k\{\|\nu_{k+1}\| \ge \tau \} \le 2\exp(-2\tau^2/\sigma^2)$ for all $\tau >0$. 
}
\end{assumption}

By standard results in high-dimensional statistics~\cite[Lemma 3]{jin2019short}, we know that~$\|\nu_{k+1}\|^2$ is sub-exponential with parameter $c \sigma^2$ conditioned on $\cF_k$, where $c$ is some absolute constant. Below is a high-probability guarantee demonstrating that $x_k$ stays within $B_\delta(x^\star)$ for all sufficiently large $k$. We present its proof in Appendix~\ref{sec:proof_high_prob}. 

\vspace{-0.1cm}

\begin{proposition}\label{prop:highprob}
Suppose Assumptions~\ref{assum: aimingtosol} and~\ref{assum: lighttail} hold. Let $c$ be the universal constant defined above. Suppose also $\eta \le \min\left\{\frac{\gamma}{3C}, \frac{1}{3c\gamma C}\right\}$. Then, for any radius $\delta$ and any $k$ such that
$$\textstyle k \ge \max \left\{ \left( \frac{\log(4\|x_0 - x^\star\|^2/\delta)}{ C_\alpha\gamma \eta }\right)^{1/(1-\alpha)}, \left(\frac{\log\left(\frac{16 \tilde C \alpha \eta^2}{(2\alpha-1)\delta }\right)}{C_\alpha \gamma \eta} \right)^{1/(1-\alpha)}, \left(\frac{2^{2\alpha+2}\tilde C\eta^2 }{(2\alpha - 1)\delta}\right)^{1/(2\alpha - 1)}\right\},$$ 
where $\tilde C= 3cC \sigma^2 + 3C$ and  $C_\alpha = \frac{1-0.5^{1-\alpha}}{2(1-\alpha)}$, we have \vskip-0.2cm
$$\textstyle \pro(\|x_i -x^\star\| < \delta, \forall i \ge k) \ge 1- \frac{32 \eta^2 \sigma^4 \exp\left(- \frac{\gamma\delta\sqrt{k}}{4\eta\sigma^2}\right)}{\gamma^2 \delta^2} - \frac{8\eta \delta \sqrt{k} \exp\left(-\frac{\gamma \delta\sqrt{k}}{4\eta\sigma^2}\right)}{\gamma }.$$
\end{proposition}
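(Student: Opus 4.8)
The plan is to control the squared distance $D_k := \|x_k - x^\star\|^2$ via a supermartingale-type argument and then convert the resulting estimate into a uniform (over $i \ge k$) high-probability bound using Ville's/Doob's maximal inequality. First I would expand $D_{k+1} = \|x_k - \eta_{k+1} G_{\eta_{k+1}}(x_k,\nu_{k+1}) - x^\star\|^2$, obtaining
\[
D_{k+1} = D_k - 2\eta_{k+1}\dotp{G_{\eta_{k+1}}(x_k,\nu_{k+1}), x_k - x^\star} + \eta_{k+1}^2 \|G_{\eta_{k+1}}(x_k,\nu_{k+1})\|^2.
\]
Writing $G = (G-\nu) + \nu$ and applying Item~\ref{item:aiming_solution} of Assumption~\ref{assum: aimingtosol} to the cross term together with Item~\ref{item:global_steplength} to the last term, I would take $\EE_k[\cdot]$: the noise contribution $-2\eta_{k+1}\EE_k\dotp{\nu_{k+1}, x_k-x^\star}$ vanishes by Assumption~\ref{assum: lighttail}, and the $\eta_{k+1}^2\|\nu_{k+1}\|^2$ term is handled via the sub-exponential moment bound (parameter $c\sigma^2$). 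This should yield a recursion of the shape $\EE_k[D_{k+1}] \le (1 - 2\gamma\eta_{k+1} + C'\eta_{k+1}^2)D_k + \tilde C \eta_{k+1}^2$ with $\tilde C = 3cC\sigma^2 + 3C$ as in the statement; the stepsize cap $\eta \le \min\{\gamma/(3C), 1/(3c\gamma C)\}$ ensures $1 - 2\gamma\eta_{k+1} + C'\eta_{k+1}^2 \le 1 - \gamma\eta_{k+1}$ and keeps all coefficients in $[0,1)$.

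Next I would unroll this one-step contraction from the starting index down. Using $\prod_{j}(1-\gamma\eta_j) \le \exp(-\gamma\sum_j \eta_j)$ and the standard estimate $\sum_{j=k_0}^{k} \eta_j = \eta\sum_{j=k_0}^k j^{-\alpha} \ge C_\alpha \eta (k^{1-\alpha} - k_0^{1-\alpha})$ with $C_\alpha = \tfrac{1-0.5^{1-\alpha}}{2(1-\alpha)}$, the homogeneous term decays like $\exp(-C_\alpha\gamma\eta\, k^{1-\alpha})\|x_0-x^\star\|^2$; the first threshold on $k$ in the hypothesis is exactly what makes this at most $\delta/4$. The accumulated noise term $\tilde C\sum_j \eta_j^2 \prod_{i>j}(1-\gamma\eta_i)$ I would bound by splitting the sum at the midpoint $j \approx k/2$: the tail ($j$ large) is $\lesssim \tilde C \eta^2 \sum_{j \ge k/2} j^{-2\alpha} \lesssim \frac{\tilde C \eta^2}{2\alpha-1}(k/2)^{1-2\alpha}$ (third threshold makes this $\le \delta/4$), while the head gets an extra decay factor $\exp(-C_\alpha\gamma\eta (k/2)^{1-\alpha})$ controlled by the second threshold. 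Combining, $\EE_k[D_{k+1}] \le \delta^2/2$ or so once $k$ exceeds the stated maximum; more precisely I would track a clean bound like $\EE[D_i] \lesssim \eta\delta\sqrt{i}/\gamma \cdot (\text{decay})$ matching the RHS of the proposition.

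The genuinely delicate step — and the main obstacle — is upgrading the in-expectation/Markov bound to the \emph{uniform-in-$i$} statement $\pro(\|x_i - x^\star\| < \delta,\ \forall i \ge k)$. For this I would introduce the stopping time $\theta = \inf\{i \ge k : D_i \ge \delta^2\}$ and observe that the stopped process $M_i := D_{i\wedge\theta} \cdot \prod_{j=k}^{i\wedge\theta - 1}(1-\gamma\eta_{j+1})^{-1}$ minus the accumulated noise drift is a supermartingale on $\{i \ge k\}$ — but this requires care because the one-step recursion above only holds while $x_k$ is such that the assumptions apply, which here they do globally by Assumption~\ref{assum: aimingtosol}, so no localization issue arises (this is precisely why Assumption~\ref{assum: aimingtosol} is stated globally rather than locally). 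Applying the maximal inequality for nonnegative supermartingales to $\{D_{i\wedge\theta}\}$ (or directly Ville's inequality) gives $\pro(\sup_{i\ge k} D_i \ge \delta^2) \le \EE[D_k]/\delta^2$ up to the drift correction, and feeding in the decay estimates from the previous paragraph produces the two explicit exponential terms $\frac{32\eta^2\sigma^4}{\gamma^2\delta^2}\exp(-\tfrac{\gamma\delta\sqrt k}{4\eta\sigma^2})$ and $\frac{8\eta\delta\sqrt k}{\gamma}\exp(-\tfrac{\gamma\delta\sqrt k}{4\eta\sigma^2})$. The bookkeeping of constants — choosing where to split sums, absorbing the sub-exponential tail of $\|\nu\|^2$ into $\tilde C$, and getting the exponent $\tfrac{\gamma\delta\sqrt k}{4\eta\sigma^2}$ exactly — is the part that demands the most care, but involves no conceptual difficulty beyond the supermartingale setup.
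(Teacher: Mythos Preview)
Your approach has a genuine gap: tracking only the first moment $\EE_k[D_{k+1}] \le (1-\gamma\eta_{k+1})D_k + \tilde C\eta_{k+1}^2$ and then applying Ville's or Markov's inequality to $D_k$ itself cannot produce the exponential tail $\exp\bigl(-\tfrac{\gamma\delta\sqrt{k}}{4\eta\sigma^2}\bigr)$ claimed in the proposition. Unrolling your recursion gives $\EE[D_k] \le p_1^k\|x_0-x^\star\|^2 + \tilde C\sum_j p_{j+1}^k\eta_j^2$; the first piece does decay like $\exp(-C_\alpha\gamma\eta k^{1-\alpha})$, but the second piece is a noise floor of order $\eta_k \asymp k^{-\alpha}$, not exponentially small. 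Hence Ville/Markov on $D_k$ yields at best $\pro(\sup_{i\ge k} D_i \ge \delta^2) \lesssim k^{-\alpha}/\delta^2$, which is polynomial in $k$. Your final sentence (``feeding in the decay estimates \ldots\ produces the two explicit exponential terms'') conflates the exponential decay of the homogeneous part with the overall rate, which is dominated by the polynomial noise floor.

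The paper gets the exponential bound by working at the level of the \emph{moment generating function} rather than the first moment. It invokes a one-step MGF recursion lemma (Proposition~29 of Cutler et al.), which takes as input precisely the sub-Gaussian structure of the martingale increment $-2\eta_{k+1}\dotp{\nu_{k+1},x_k-x^\star}$ and the sub-exponential structure of $\eta_{k+1}^2\|\nu_{k+1}\|^2$, and outputs
\[
\EE\bigl[\exp(\lambda\|x_{k+1}-x^\star\|^2)\bigr] \le \exp(\lambda\tilde C\eta_{k+1}^2)\,\EE\bigl[\exp(\lambda(1-\tfrac{\gamma\eta_{k+1}}{2})\|x_k-x^\star\|^2)\bigr]
\]
for $\lambda$ up to order $\eta_{k+1}^{-1}$. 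Unrolling this and applying Chernoff with $\lambda = \tfrac{\gamma}{2\eta_{k+1}\sigma^2} \asymp k^\alpha$ gives $\pro(\|x_k-x^\star\|\ge\delta) \le \exp\bigl(-\tfrac{\gamma(k+1)^\alpha\delta}{4\eta\sigma^2}\bigr)$, which is summable over $i\ge k$; the uniform statement then follows from a plain union bound (not a maximal inequality), and summing the tail via the integral estimate produces exactly the two terms in the proposition. The essential missing idea in your proposal is this Chernoff/MGF step: exponential concentration requires controlling $\EE[\exp(\lambda D_k)]$, not $\EE[D_k]$.
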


With the above high-probability guarantee, we strengthen the local result in Theorem \ref{th:mainthmcov} to a~global result by suppressing the stopping time involved in the theorem statement. Our global result is stated in Theorem~\ref{thm:cov_light_tail}. The proof can be found in Appendix~\ref{sec:proof_cov_lighttail}.

\begin{theorem}\label{thm:cov_light_tail}
Under the assumptions of Theorem~\ref{th:mainthmcov} along with Assumptions~\ref{assum: aimingtosol} and~\ref{assum: lighttail}, for the SA update of \eqref{eqn: updaterule}, for $a_M\leq n\le a_{M+1}$,  we have
\begin{align*}
\textstyle \EE[\|\hat \Sigma_n - \Sigma\|_{op}]  \textstyle \lesssim_{\log}  \sqrt{d}M^{-\frac{1}{2}} + \sqrt{d} M^{\frac{(\alpha-1)\beta +1}{2}} &\textstyle \lesssim \sqrt{d} n^{-\frac{1}{2\beta}} + \sqrt{d} n^{-\frac{(\alpha -1)\beta +1}{2\beta}},
\end{align*} 
where $\|\cdot\|_{op}$ is the operator norm, and ``$\lesssim_{\log}$" hides logarithmic terms of $n$.
\end{theorem}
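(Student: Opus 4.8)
The plan is to combine the local result of Theorem~\ref{th:mainthmcov} with the high-probability confinement guarantee of Proposition~\ref{prop:highprob} to remove the indicator $\indic{n} = \mathbbm{1}_{\tau_{\ks,\delta}>n}$. The starting point is the trivial decomposition
\begin{align*}
\EE[\|\hat\Sigma_n - \Sigma\|_{op}] = \EE[\|\hat\Sigma_n - \Sigma\|_{op}\indic{n}] + \EE[\|\hat\Sigma_n - \Sigma\|_{op}\mathbbm{1}_{\tau_{\ks,\delta}\le n}].
\end{align*}
The first term is controlled directly by Theorem~\ref{th:mainthmcov}, optimized as in \eqref{eq:bestrate} over $\alpha$, $\beta$, which (for constant $\ks$) already gives the target $\sqrt{d}\,n^{-1/(2\beta)} + \sqrt{d}\,n^{-((\alpha-1)\beta+1)/(2\beta)}$ up to the $\ks^3$ factor; here I would take $\ks \asymp \log^2 n$ so that $\ks^3 \asymp \log^6 n$ is absorbed into $\lesssim_{\log}$. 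For the second term, I need a crude deterministic-in-probability bound on $\|\hat\Sigma_n - \Sigma\|_{op}$: from the definition \eqref{eqn:online_batch_means}, $\hat\Sigma_n$ is a normalized sum of rank-one terms built from partial sums of the $x_k$, so $\|\hat\Sigma_n\|_{op} \lesssim \max_k \|x_k - \bar x_n\|^2 \lesssim \max_{k\le n}\|x_k - x^\star\|^2$, and hence $\|\hat\Sigma_n - \Sigma\|_{op} \lesssim 1 + \max_{k\le n}\|x_k - x^\star\|^2$. Then Cauchy--Schwarz gives
\begin{align*}
\EE[\|\hat\Sigma_n - \Sigma\|_{op}\mathbbm{1}_{\tau_{\ks,\delta}\le n}] \lesssim \sqrt{\EE\big[(1 + \max_{k\le n}\|x_k - x^\star\|^2)^2\big]}\cdot\sqrt{\pro(\tau_{\ks,\delta}\le n)}.
\end{align*}

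The first factor I bound by a polynomial in $n$ (and $d$): using Assumption~\ref{assum:bounded_seq} together with the global steplength and sub-Gaussian noise (Assumptions~\ref{assum: aimingtosol}, \ref{assum: lighttail}), a routine moment/union-bound argument over $k\le n$ shows $\EE[\max_{k\le n}\|x_k - x^\star\|^4] \lesssim \mathrm{poly}(n)$ — in fact a low-degree polynomial suffices, since the key point is only that it is polynomial. The second factor is where Proposition~\ref{prop:highprob} enters: $\pro(\tau_{\ks,\delta}\le n) = \pro(\exists i\ge \ks: \|x_i - x^\star\|\ge\delta) \le \pro(\exists i\ge \ks:\|x_i - x^\star\|\ge\delta)$, and the proposition bounds this (with $k = \ks$) by a term of order $\eta\delta\sqrt{\ks}\exp(-\gamma\delta\sqrt{\ks}/(4\eta\sigma^2))$ plus a similar term. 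With the choice $\ks \asymp \log^2 n$, this probability is $O(n^{-c\log n})$ for a suitable constant $c$ (super-polynomially small), so its square root kills the polynomial first factor and makes the entire cross term $o(n^{-C})$ for every $C$, hence negligible compared to the main term. One must also check that $\ks \asymp \log^2 n$ meets the lower bound on $k$ required in Proposition~\ref{prop:highprob} — this holds for all large $n$ since that bound grows only like $(\log n)^{1/(1-\alpha)}$ and $(\log n)$-type powers, all dominated by $\log^2 n$ once $\alpha < 1/2 + \text{const}$; the stepsize smallness condition $\eta \le \min\{\gamma/(3C), 1/(3c\gamma C)\}$ is assumed to hold.

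Assembling the two pieces yields $\EE[\|\hat\Sigma_n - \Sigma\|_{op}] \lesssim_{\log} \sqrt{d}\,n^{-1/(2\beta)} + \sqrt{d}\,n^{-((\alpha-1)\beta+1)/(2\beta)}$, which after substituting $M \asymp n^{1/\beta}$ (the number of completed batches, so that $n\asymp M^\beta$) is exactly the claimed bound; the intermediate expression in $M$ in the statement is just the same estimate before converting batch count to iteration count. The main obstacle, and the only place real care is needed, is the interplay in choosing $\ks$: it must be large enough (growing in $n$) for Proposition~\ref{prop:highprob} to make $\pro(\tau_{\ks,\delta}\le n)$ decay faster than any polynomial and thus beat the polynomially-large worst-case bound on $\|\hat\Sigma_n\|_{op}$, yet small enough ($\ks \asymp \log^2 n$, giving $\ks^3 \asymp \log^6 n$) that the $\ks^3$ prefactor from Theorem~\ref{th:mainthmcov} is still only logarithmic and gets swept into $\lesssim_{\log}$. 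Everything else — the crude operator-norm bound on $\hat\Sigma_n$, the polynomial moment bound on $\max_{k\le n}\|x_k - x^\star\|$, and the Cauchy--Schwarz split — is standard.
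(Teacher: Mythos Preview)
Your approach is essentially the paper's: the same split on $\{\tau_{\ks,\delta}>n\}$, the same choice $\ks\asymp\log^2 n$ so that the $\ks^3$ prefactor from Theorem~\ref{th:mainthmcov} is absorbed into $\lesssim_{\log}$, and Proposition~\ref{prop:highprob} to control the bad-event probability. One correction: with $\ks\asymp\log^2 n$ the tail in Proposition~\ref{prop:highprob} behaves like $\exp(-c\sqrt{\ks})=\exp(-c'\log n)=n^{-c'}$, which is only \emph{polynomially} small (with $c'$ as large as you like by choosing the constant in front of $\log^2 n$ large), not $O(n^{-c\log n})$. Your Cauchy--Schwarz argument still closes, but you must say explicitly that the constant is picked so that $c'/2$ exceeds the polynomial degree of the first factor; as written, the ``super-polynomially small'' claim is false. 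Also, the lower bounds on $k$ in Proposition~\ref{prop:highprob} are fixed constants (depending on $\delta,\eta,\|x_0-x^\star\|$, etc.), not growing in $n$, so the check that $\ks\asymp\log^2 n$ eventually exceeds them is trivial.

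For the bad-event contribution the paper takes a simpler route than your Cauchy--Schwarz with a fourth-moment bound on $\max_{k\le n}\|x_k-x^\star\|$: it proves directly (Lemma~\ref{lem:cov_bound}) that $\EE[\|\hat\Sigma_n\|_{op}]\le 4\cub\, n$ using only Assumption~\ref{assum:bounded_seq}, and then combines this with $\pro(\tau_{\ks,\delta}\le n)\lesssim n^{-2}$ to bound the bad-event expectation by order $n\cdot n^{-2}=n^{-1}$. This avoids both the $\max_k$ union bound and any use of the sub-Gaussian assumption in that particular step. Incidentally, your pathwise claim $\|\hat\Sigma_n\|_{op}\lesssim\max_k\|x_k-\bar x_n\|^2$ drops a factor $\sum_i l_i^2/\sum_i l_i\asymp n^{1-1/\beta}$; as you say, any polynomial bound suffices here, so this is harmless once the decay-rate issue above is fixed.
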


\vspace{-0.1cm}

Taking $\beta = \frac{2}{1-\alpha}$ in Theorem~\ref{thm:cov_light_tail}, we have $\EE[\|\hat \Sigma_n - \Sigma\|_{op}] \lesssim_{\log} \sqrt{d} n^{-\frac{1-\alpha}{4}}$. Ignoring the logarithmic factors, this matches the best-known rate in the smooth case~\citep{chen2020statistical,zhu2023online}.

\section{Examples of stochastic approximation algorithms}\label{sec:SA_algorithms}

In this section, we illustrate the broad applicability of our generic SA update in \eqref{eqn: updaterule} and the mildness of our required assumptions. In particular, we consider solving nonsmooth problems using different SA algorithms and provide sufficient conditions for Assumptions~\ref{ass:basic_assumpt}--\ref{assumption:Aproposed} to hold.
More~concretely,~let~us consider the variational inclusion problem:
\begin{align}\label{eqn:variation_inclusion}
0 \in  A(x) + \partial g(x) + \partial f(x),
\end{align}
where $A: \RR^d \rightarrow \RR^d$ is any single-valued continuous map, $g:\RR^d \rightarrow \RR\cup\{+\infty\}$ is a closed function, and $f:\RR^d \rightarrow \RR\cup\{+\infty\}$ is a closed function that is bounded from below\footnote{In particular, $\prox_{\alpha f}(x)$ is nonempty for all $x \in \RR^d$ and all $\alpha >0$.}. The problem~\eqref{eqn:variation_inclusion} is a special case of~\eqref{eqn:inclusion_problem} since one can take $F(x) := A(x) + \partial g(x) + \partial f(x)$.
First, the~local boundedness condition of $G$ in Assumption~\ref{assumption:localbound} is widely used in the literature, with a variety of known sufficient conditions. The following lemma describes several such conditions,~which~we~will~use~in~what~\mbox{follows}.

\begin{lemma}[{Lemma 4.2 in \cite{davis2024asymptotic}}]\label{lem:basic_level_bound}
Suppose $A(\cdot)$ and $s_g(\cdot)$ are locally bounded around $x^\star$. Then Assumption~\ref{assumption:localbound} holds in any of the following settings.
\begin{enumerate}[noitemsep]
\item\label{lb:2} $f$ is the indicator function of a closed set $\cX$.
\item\label{lb:3} $f$ is convex and the function $x\mapsto\dist(0,\partial f(x))$ is bounded on $\dom f$ near $x^\star$.
\item\label{lb:4} $f$ is Lipschitz continuous on $\dom g\cap \dom f$.
\end{enumerate}
\end{lemma}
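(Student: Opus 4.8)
\textbf{Proof plan for Lemma~\ref{lem:basic_level_bound}.}
The goal is to verify that in each of the three listed settings, the generalized gradient mapping $G$ associated with the forward-backward splitting for \eqref{eqn:variation_inclusion} satisfies the local steplength bound $\sup_{x \in \cU_F}\|G_\eta(x,\perturb)\| \le \localconstant(1+\|\perturb\|)$. Since this is quoted as Lemma~4.2 of \cite{davis2024asymptotic}, the plan is to reproduce that argument. The natural choice of $G$ here is the one that produces the stochastic forward-backward step $x_{k+1} = \prox_{\eta f}\bigl(x_k - \eta(A(x_k) + s_g(x_k) + \perturb_{k+1})\bigr)$, so that
\[
G_\eta(x,\perturb) = \frac{1}{\eta}\Bigl(x - \prox_{\eta f}\bigl(x - \eta(A(x) + s_g(x) + \perturb)\bigr)\Bigr),
\]
where $s_g$ is the chosen measurable selection of $\partial g$. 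First I would record the basic non-expansiveness / firm-nonexpansiveness properties of the proximal map and the fact that $s_g$ and $A$ are locally bounded near $x^\star$, say by a constant $L_0$ on the neighborhood $\cU$.

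The core of the argument is to control $\|x - \prox_{\eta f}(z)\|$ where $z = x - \eta(A(x) + s_g(x) + \perturb)$. The key trick is to compare the proximal point of $z$ with that of $x$ itself, using nonexpansiveness: $\|\prox_{\eta f}(z) - \prox_{\eta f}(x)\| \le \|z - x\| = \eta\|A(x)+s_g(x)+\perturb\| \le \eta(2L_0 + \|\perturb\|)$. So it remains to bound $\|x - \prox_{\eta f}(x)\|$ uniformly in $\eta$ and $x \in \cU_F$. This is exactly where the three cases diverge and where the case-specific hypotheses enter.

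For case~\ref{lb:2}, $f = \delta_\cX$ and $\prox_{\eta f}(x) = P_\cX(x) = x$ for any $x \in \cX \supseteq \dom F \cap \cU$, so $\|x - \prox_{\eta f}(x)\| = 0$ and the bound is immediate. For case~\ref{lb:4}, if $f$ is $L_f$-Lipschitz on $\dom g \cap \dom f$, then the standard estimate $\|x - \prox_{\eta f}(x)\| \le \eta L_f$ holds (this follows from the optimality condition defining the prox and the Lipschitz bound on subgradients of $f$, or equivalently from $\prox_{\eta f}(x) \in x - \eta\,\partial f(\prox_{\eta f}(x))$ together with $\|\partial f\| \le L_f$). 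For case~\ref{lb:3}, convexity of $f$ gives the subgradient inequality $f(x) \ge f(\prox_{\eta f}(x)) + \langle v, x - \prox_{\eta f}(x)\rangle$ for $v \in \partial f(\prox_{\eta f}(x))$; combined with the defining relation $x - \prox_{\eta f}(x) \in \eta\,\partial f(\prox_{\eta f}(x))$ one obtains $\|x - \prox_{\eta f}(x)\|^2 \le \eta\bigl(f(x) - f(\prox_{\eta f}(x))\bigr)$, and then the hypothesis that $x \mapsto \dist(0,\partial f(x))$ is bounded near $x^\star$, say by $M_f$, together with $f(x) - f(\prox_{\eta f}(x)) \le \langle v, x - \prox_{\eta f}(x)\rangle \le M_f\|x-\prox_{\eta f}(x)\|$ for $v \in \partial f(x)$ attaining the distance, yields $\|x - \prox_{\eta f}(x)\| \le \eta M_f$. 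In all three cases $\|x - \prox_{\eta f}(x)\| \le \eta \localconstant_0$ for a constant $\localconstant_0$ independent of $\eta$, so dividing by $\eta$ in the definition of $G$ gives $\|G_\eta(x,\perturb)\| \le 2L_0 + \localconstant_0 + \|\perturb\| \le \localconstant(1+\|\perturb\|)$ with $\localconstant = \max\{2L_0 + \localconstant_0, 1\}$.

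\textbf{Main obstacle.} The routine parts are the nonexpansiveness reduction and cases~\ref{lb:2} and~\ref{lb:4}. The slightly delicate step is case~\ref{lb:3}: one must be careful that the quantity $\dist(0,\partial f(x))$ being bounded is used at the \emph{base point} $x$ (where we have a subgradient), while the prox relation naturally produces a subgradient at $\prox_{\eta f}(x)$; the convexity inequality is what bridges these two points, and one should double-check that $\prox_{\eta f}(x)$ lies in $\dom f$ where the bound on $\dist(0,\partial f(\cdot))$ — or rather, the convexity inequality — can be applied, and that $\prox_{\eta f}(x)$ stays in a neighborhood of $x^\star$ (which follows by shrinking $\cU$ and using $\|x - \prox_{\eta f}(x)\| = O(\eta)$ for small $\eta$). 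Since the lemma is directly cited, I would simply refer to \cite[Lemma~4.2]{davis2024asymptotic} for the full details rather than reproducing every estimate.
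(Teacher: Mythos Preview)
The paper does not prove this lemma; it is quoted directly from \cite[Lemma~4.2]{davis2024asymptotic}, so there is no in-paper argument to compare against, and your plan to simply cite that reference is exactly what the paper does.

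That said, your sketch has a real gap. You organize all three cases around the decomposition
\[
\|x - \prox_{\eta f}(z)\| \le \|\prox_{\eta f}(x) - \prox_{\eta f}(z)\| + \|x - \prox_{\eta f}(x)\|
\]
and bound the first term via nonexpansiveness of $\prox_{\eta f}$. But nonexpansiveness requires $f$ to be convex, which is assumed only in case~\ref{lb:3}: in case~\ref{lb:2} the closed set $\cX$ need not be convex (so $P_\cX$ is in general not $1$-Lipschitz), and in case~\ref{lb:4} Lipschitz continuity does not imply convexity. The fix is to avoid comparing with $\prox_{\eta f}(x)$ and instead use the triangle inequality through $z$:
\[
\|x - \prox_{\eta f}(z)\| \le \|x - z\| + \|z - \prox_{\eta f}(z)\|.
\]
For case~\ref{lb:2}, since $x \in \cX$ one has $\|z - s_\cX(z)\| = \dist(z,\cX) \le \|z-x\|$, giving $\|G_\eta(x,\nu)\| \le 2\|A(x)+s_g(x)+\nu\|$ directly. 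For case~\ref{lb:4}, the first-order optimality condition $(z - \prox_{\eta f}(z))/\eta \in \hat\partial f(\prox_{\eta f}(z))$ combined with the Lipschitz bound on subgradients gives $\|z - \prox_{\eta f}(z)\| \le \eta L_f$ without any nonexpansiveness. Your case~\ref{lb:3} argument is fine as written, since there $f$ is convex.
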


Then, we investigate Assumptions \ref{ass:basic_assumpt} and~\ref{assumption:Aproposed}. Recall that both assumptions require the existence~of a distinctive manifold $\cM$ that captures the hidden smoothness of the problem. One candidate~of such a manifold is the \textit{active manifold}, which has been modeled in various ways, including identifiable surfaces \citep{wright1993identifiable}, partial smoothness \citep{lewis2002active}, $\mathcal{UV}$-structures \citep{lemarecha2000,mifflin2005algorithm}, $g\circ F$ decomposable functions \citep{shapiroreducible}, and minimal identifiable sets \citep{drusvyatskiy2014optimality}. In this work, we adopt the characterization of~active manifold used in~\cite{drusvyatskiy2014optimality}.

\begin{figure}[h]
\centering
\includegraphics[width=.4\linewidth]{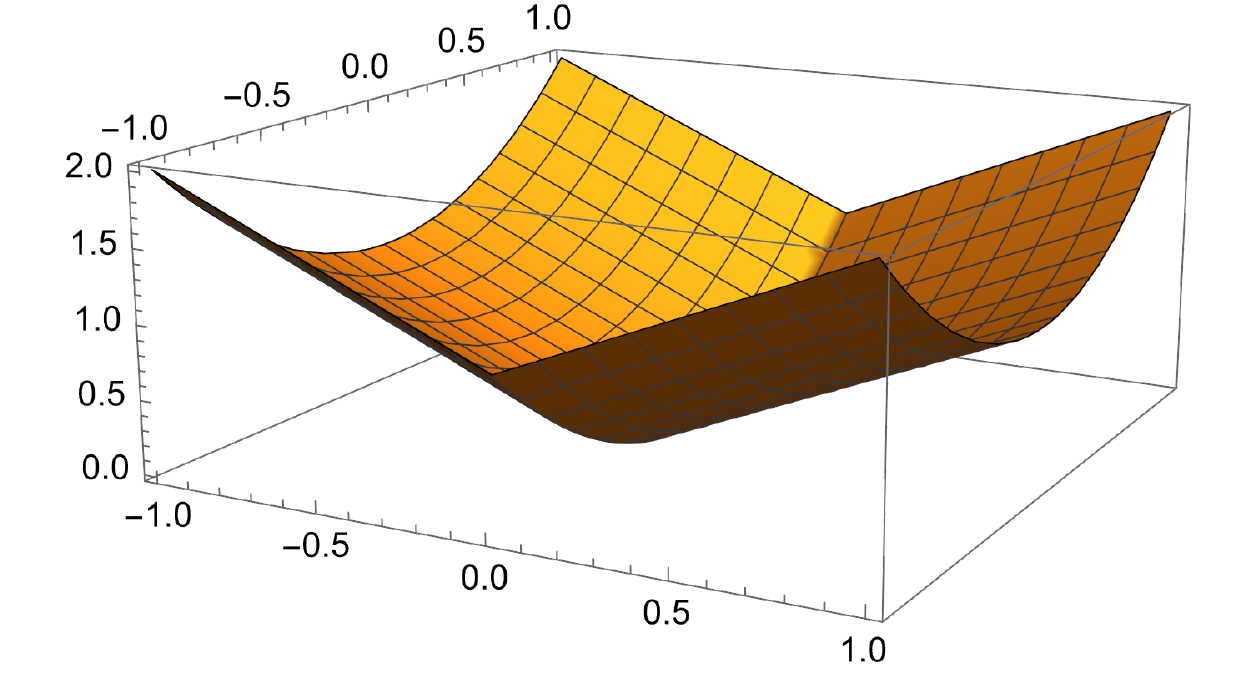}
\captionof{figure}{$f(x_1,x_2)=|x_1|+x^2_2$ with $x_2$-axis as an active manifold.}
\label{fig:test1}
\end{figure}
	
\vspace{-0.6cm}
\begin{definition}[Active manifold]\label{defn:ident_man}{\rm 
Consider a function $f\colon\R^d\to\R\cup\{+\infty\}$ and fix a set $\mathcal{M} \subset \dom f$ that contains a critical point $x^\star$ with $0\in \partial f(x^\star)$. Then $\mathcal{M}$ is called an {\em active} $C^p${\em-manifold around} $x^\star$ if there exists a constant $\chi>0$ satisfying the following conditions.
\begin{itemize}[noitemsep]
\item {\textbf (smoothness)}  
Near $x^\star$, the set $\mathcal{M}$ is a $C^p$ manifold and  the restriction of 	$f$ to $\mathcal{M}$ is $C^p$-smooth.
\item {\textbf (sharpness)} 
The lower bound holds:
$$\inf \{\|v\|: v\in \partial f(x),~x\in U\setminus \cM\}>0$$
where $U=\{x\in B_{\chi}(x^\star):|f(x)-f(x^\star)|<\chi\}$.
\end{itemize}
More generally, we say {\em $\cM$ is an active manifold for $f$ at $x^\star$ for $\bar v \in \partial f(x^\star)$} if $\cM$ is an active~manifold for the tilted function $f_{\bar v}(x)=f(x)-\langle \bar v,x\rangle$ at $x^\star$.
}
\end{definition}

The sharpness condition simply means that the subgradients of $f$ remain uniformly bounded~away from zero at points off the manifold that are sufficiently close to $x^\star$ in both distance and function~value. The localization in function value can be omitted, for example, if $f$ is weakly convex or if $f$~is~continuous on its domain; see \cite{drusvyatskiy2014optimality} for details. Figure~\ref{fig:test1} is an example of~active manifold of a nonsmooth function.

To proceed, we introduce two extra conditions along the active manifold that tightly couple the subgradients of $f$ on and off the manifold. These two conditions were first introduced in \cite[Section 3]{davis2025active} to prove saddle point avoidance in nonsmooth optimization.
They are~very~mild~conditions and hold for a wide range of examples. 
We verify these regularity conditions in detail for the cases of $\ell_1$-regularization, nonlinear programming, and two-player game in Appendix~\ref{sec:examples_active}.

\begin{definition}[$(b_{\leq})$-regularity and strong $(a)$-regularity]{\rm 
Consider a function $f\colon\R^d\to\R\cup\{+\infty\}$ that is locally Lipschitz continuous on its domain. Fix a set $\cM\subset\dom f$ that is a $C^1$ manifold around $x^\star$ and such that the restriction of $f$ to $\cM$ is $C^1$-smooth near $x^\star$. We say that $f$ is {\em $(b_{\leq})$-regular along $\cM$ at $x^\star$} if there exists $\chi>0$ such that 
\begin{align}
f(y)&\geq f(x)+\langle v, y-x\rangle+(1+\|v\|)\cdot o(\|y-x\|)\label{eqn:bd1}%\\
\end{align}
holds for all $x\in \dom f\cap B_{\chi}(x^\star)$, $y\in \cM\cap B_{\chi}(x^\star)$, and $v\in \partial f(x)$. Additionally, we say that $f$ is {\em strongly $(a)$-regular along $\cM$ near $x^\star$} if there exist constants $C,\chi>0$ satisfying
\begin{align}
\|P_{T_{\cM}(y)}(v-\nabla_{\cM} f(y))\|&\leq C(1+\|v\|) \|x-y\| \label{eqn:str_a1}
\end{align}
for all $x\in \dom f\cap B_{\chi}(x^\star)$, $y\in \cM\cap B_{\chi}(x^\star)$, and $v\in \partial f(x)$.
}
\end{definition}

Roughly speaking, $(b_\le)$-regularity condition is a weakening of Taylor's theorem for nonsmooth functions; strong $(a)$-regularity condition is a weakening of Lipschitz continuity of the gradient.~We next provide sufficient conditions of Assumptions~\ref{ass:basic_assumpt} and~\ref{assumption:Aproposed} in several popular settings,~including~projected SGD (hence Subgradient Descent) and projected Stochastic Gradient~Descent~Ascent~methods.

\subsection{Stochastic (projected) forward algorithm $(f=\delta_{\cX})$}

First, we focus on the particular instance of \eqref{eqn:variation_inclusion} where  $f$ is an indicator function of a closed set $\cX$. In this case, the iteration \eqref{eqn: updaterule} reduces to a stochastic projected forward algorithm:
\begin{align}\label{eqn:project_forward}
x_{k+1} \in P_\cX(x_k - \eta_{k+1}(A(x_k) + s_g(x_k) + \nu_{k+1})).
\end{align} 
The map $G$ takes the form $G_{\eta}(x, \perturb) \coloneqq (x - s_\cX(x - \eta( A(x)+s_g(x) + \perturb)))/\eta$, where $s_\cX(x)$ is any selection of the projection map $P_{\cX}(x)$.
	
The following proposition shows that Assumptions~\ref{ass:basic_assumpt} and~\ref{assumption:Aproposed} hold when $g+f$ admits an active manifold at $x^\star$ with certain regularity conditions. Its proof is a combination of Corollary 4.7 and Lemma 10.3 in~\cite{davis2024asymptotic}. 
\begin{proposition}\label{prop:projected_forward}
Suppose $f$ is the indicator function of a closed set $\cX$ and both $g (\cdot)$ and $A(\cdot)$ are Lipschitz continuous around $x^\star$. 
Moreover, suppose the inclusion $-A(x^\star)\in \hat \partial (g+f)(x^\star)$ holds, $g+f$ admits a $C^2$ active manifold around $x^\star$ for the vector $\bar v=-A(x^\star)$, and both $g$ and $f$ are $(b_{\leq})$-regular and strongly $(a)$-regular along $\cM$ at $x^\star$. Then Assumption~\ref{assumption:Aproposed} holds. Furthermore, if there exists $\gamma >0$ such that $\dotp{\nabla_\cM(A + \partial g) (x^\star) v, v} \ge \gamma \|v\|_2^2,~\text{for all } v \in T_\cM(x^\star)$, then Assumption~\ref{ass:basic_assumpt} holds with manifold $\cM$.	
\end{proposition}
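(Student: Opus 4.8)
\textbf{Proof plan for Proposition~\ref{prop:projected_forward}.}

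The plan is to reduce the statement to two off-the-shelf results from the references: Corollary~4.7 and Lemma~10.3 in \cite{davis2024asymptotic}. The key observation is that the generalized gradient map here is precisely the proximal-gradient step $G_\eta(x,\nu) = \tfrac1\eta(x - s_\cX(x - \eta(A(x)+s_g(x)+\nu)))$, which is the standard forward-backward map for the inclusion $0 \in A(x) + \partial g(x) + N_\cX(x)$ with $f = \delta_\cX$. First I would verify the structural hypotheses needed to invoke \cite[Corollary~4.7]{davis2024asymptotic}: namely that $g+f$ admits a $C^2$ active manifold $\cM$ around $x^\star$ for the vector $\bar v = -A(x^\star)$, that the criticality condition $-A(x^\star) \in \hat\partial(g+f)(x^\star)$ holds, and that both $g$ and $f$ are $(b_{\le})$-regular and strongly $(a)$-regular along $\cM$ at $x^\star$. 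These are exactly the assumptions listed in the proposition, so this step is bookkeeping: one checks that the local Lipschitz continuity of $A$ and $g$ around $x^\star$ (plus local boundedness of $s_g$) supplies the remaining ambient regularity that Corollary~4.7 requires, and concludes that Assumption~\ref{assumption:smoothcompatibility} (Tangent comparison) and Assumption~\ref{assumption:aiming} (Proximal Aiming) of Assumption~\ref{assumption:Aproposed} hold with this $\cM$.

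The second step establishes Assumption~\ref{ass:basic_assumpt}. The active manifold $\cM$ is $C^2$ by hypothesis, and the restriction of $g+f$ to $\cM$ is $C^2$-smooth (the indicator $f = \delta_\cX$ restricts to $0$ on $\cM \subset \cX$, so this reduces to smoothness of $g|_\cM$). The map $F_\cM(x) = P_{T_\cM(x)}F(x)$ is then single-valued and $C^1$ on a neighborhood $V$ of $x^\star$ in $\cM$ — this is where Lemma~10.3 of \cite{davis2024asymptotic} enters, identifying the covariant Jacobian $\nabla_\cM F_\cM(x^\star)$ with $\nabla_\cM(A + \partial g)(x^\star)$ restricted to the tangent space; $\lm$-Lipschitzness of $F_\cM$ on $V \cap \cM$ follows by shrinking $V$ and using the $C^1$ bound on the covariant Jacobian together with compactness. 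Finally, the quadratic-growth inequality \eqref{eqn:QG}, $\dotp{F_\cM(x), x - x^\star} \ge \strong\|x-x^\star\|^2$, is obtained from the positive-definiteness hypothesis $\dotp{\nabla_\cM(A+\partial g)(x^\star)v, v} \ge \gamma\|v\|_2^2$ for $v \in T_\cM(x^\star)$: one writes a first-order Taylor expansion of $F_\cM$ along the manifold around $x^\star$ (valid since $F_\cM$ is $C^1$ on $\cM$ near $x^\star$, and $F_\cM(x^\star) = P_{T_\cM(x^\star)}F(x^\star) \ni 0$ by criticality), pairs it with $x - x^\star$, and absorbs the higher-order remainder into the quadratic term by shrinking $V$, yielding \eqref{eqn:QG} with any $\strong < \gamma$.

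I expect the main obstacle to be the careful handling of the manifold-intrinsic Taylor expansion in the last step: one must be precise about the fact that $x - x^\star$ for $x \in \cM$ near $x^\star$ is tangent to $\cM$ only up to second order, so the pairing $\dotp{F_\cM(x), x-x^\star}$ picks up both a genuine tangential second-order term (controlled below by $\gamma$) and curvature cross-terms of order $\|x - x^\star\|^3$ that must be dominated — this is precisely why $C^2$ regularity of $\cM$ (not merely $C^1$) is assumed. The remaining subtlety is ensuring all the neighborhoods ($\cU$ from Assumption~\ref{assumption:Aproposed}, $V$ from Assumption~\ref{ass:basic_assumpt}, and the domain of validity of the regularity conditions) can be taken to be a common neighborhood of $x^\star$, which is immediate by intersecting finitely many open sets. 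Everything else is a direct citation of \cite[Corollary~4.7, Lemma~10.3]{davis2024asymptotic}.
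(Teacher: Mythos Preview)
Your proposal is correct and follows essentially the same approach as the paper: the paper's proof is precisely stated as ``a combination of Corollary 4.7 and Lemma 10.3 in~\cite{davis2024asymptotic},'' which is exactly your reduction. Your additional detail on the manifold Taylor expansion for establishing~\eqref{eqn:QG} from the positive-definiteness hypothesis goes beyond what the paper spells out, but is consistent with the intended argument.
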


\subsection{Stochastic forward-backward method $(g=0)$}\label{prop:forward_backward}

Second, we focus on the particular instance of \eqref{eqn:variation_inclusion} where  $g=0$. In this case, the iteration \eqref{eqn: updaterule} reduces to a stochastic forward-backward algorithm:
\begin{align}\label{eqn:forward_backward}
x_{k+1} \in \prox_{\eta_{k+1} f}(x_k - \eta_{k+1}(A(x_{k}) + \nu_{k+1})).
\end{align}
The map $G$ becomes $G_{\eta}(x, \perturb) := (x - s_f(x - \eta( A(x) + \perturb))/\eta)$, where $s_f$ is any selection of the proximal map $\prox_{\eta f}(x)$ (cf. Section \ref{sec:2}). 
	
The following proposition shows that Assumptions~\ref{ass:basic_assumpt} and~\ref{assumption:Aproposed} hold when $f$ admits an active manifold at $x^\star$ with certain regularity conditions. Its proof is a combination of Corollary 4.9 and Lemma 10.3 in~\cite{davis2024asymptotic}.  

\begin{proposition}
Suppose $g=0$ and both $f$ and $A(\cdot)$ are Lipschitz continuous on $\dom f$ near $x^\star$. Moreover, suppose the inclusion $-A(x^\star)\in \hat \partial f(x^\star)$ holds, $f$ admits a $C^2$ active manifold around $x^\star$ for $\bar v=-A(x^\star)$, and $f$ is both $(b)_{\leq}$-regular and strongly $(a)$-regular along $\cM$ at $x^\star$. Then~Assumption~\ref{assumption:Aproposed} holds. Furthermore, if there exists $\gamma >0$ such that
$$\dotp{\nabla_\cM(A + \partial f) (x^\star) v, v} \ge \gamma \|v\|_2^2, \qquad  \text{for all } v \in T_\cM(x^\star),$$
then Assumption~\ref{ass:basic_assumpt} holds with manifold $\cM$.
\end{proposition}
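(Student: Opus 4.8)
The plan is to mirror the proof of Proposition~\ref{prop:projected_forward}, replacing the two ingredients imported from \cite{davis2024asymptotic} by their forward-backward analogues: Corollary~4.9, which analyzes the map $G_{\eta}(x,\perturb) = (x - s_f(x - \eta(A(x)+\perturb)))/\eta$, and Lemma~10.3, which converts the active-manifold regularity hypotheses on $f$ into the quantitative estimates demanded by Assumption~\ref{assumption:Aproposed}. The first step is to note that one iteration of~\eqref{eqn:forward_backward}, namely $x^{+} := x - \eta G_{\eta}(x,\perturb) = s_f(x - \eta(A(x)+\perturb)) \in \prox_{\eta f}(x - \eta(A(x)+\perturb))$, is exactly a forward-backward step on $0 \in A(x) + \partial f(x)$, so the abstract analysis applies verbatim with $A$ as the single-valued (forward) part and $f$ as the prox-friendly (backward) part; setting $g=0$ simply removes the middle subgradient term present in Proposition~\ref{prop:projected_forward}.

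For Assumption~\ref{assumption:Aproposed} I would check the two items in turn. The tangent comparison (Item~\ref{assumption:smoothcompatibility}) follows from strong $(a)$-regularity: a prox step satisfies $\dist(x^{+},\cM)\lesssim \dist(x,\cM) + \eta(1+\|\perturb\|)$, and applying~\eqref{eqn:str_a1} at $y=P_\cM(x)$ with the subgradient realized by the prox step, together with Lipschitzness of $A$ near $x^\star$, bounds $\|P_{T_\cM(P_\cM(x))}(G_{\eta}(x,\perturb) - F(P_\cM(x)) - \perturb)\|$ by $C(1+\|\perturb\|)^2(\dist(x,\cM)+\eta)$; the square on $(1+\|\perturb\|)$ is the product of the displacement of the prox argument and the $(1+\|v\|)$ weight in~\eqref{eqn:str_a1} evaluated at that displaced point. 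The proximal aiming (Item~\ref{assumption:aiming}) is the more delicate estimate: one needs $\inner{G_{\eta}(x,\perturb)-\perturb,\,x - P_\cM(x)} \ge \mu\,\dist(x,\cM) - (1+\|\perturb\|)^2(o(\dist(x,\cM)) + C\eta)$, which I would obtain by combining the sharpness of the active manifold (subgradients of $f$ off $\cM$ are uniformly bounded away from zero, producing a net outward pull of order $\dist(x,\cM)$) with $(b_{\leq})$-regularity~\eqref{eqn:bd1} at $y=P_\cM(x)$ to control the first-order remainder, and absorbing the Lipschitz forward term $A$ into the $C\eta$ slack. This is precisely what Corollary~4.9 and Lemma~10.3 package, so the write-up reduces to verifying that our hypotheses match theirs.

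For Assumption~\ref{ass:basic_assumpt}, $\cM$ is a $C^2$ manifold containing $x^\star$, and on $\cM$ near $x^\star$ the active-manifold property gives $P_{T_\cM(x)}\partial f(x) = \{\nabla_\cM(f|_\cM)(x)\}$, so $F_\cM(x) = P_{T_\cM(x)}A(x) + \nabla_\cM(f|_\cM)(x)$ is single-valued and, since $f|_\cM$ is $C^2$ and $A$ is $C^1$ near $x^\star$, is $C^1$ (hence locally Lipschitz, giving $\lm$). The inclusion $-A(x^\star)\in\hat\partial f(x^\star)$ projected onto $T_\cM(x^\star)$ forces $F_\cM(x^\star)=0$, and a first-order expansion of $F_\cM$ along $\cM$ about $x^\star$ gives $\inner{F_\cM(x),x-x^\star} = \inner{\nabla_\cM(A+\partial f)(x^\star)(x-x^\star),\,x-x^\star} + o(\|x-x^\star\|^2)$; the assumed positive-definiteness of $\nabla_\cM(A+\partial f)(x^\star)$ on $T_\cM(x^\star)$ then yields~\eqref{eqn:QG} (with $\gamma$ halved) after shrinking the neighborhood. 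The main obstacle, as in Proposition~\ref{prop:projected_forward}, is the proximal-aiming estimate: quantifying how the nonlinear operator $\prox_{\eta f}$ contracts the distance to $\cM$ while correctly bookkeeping the $o(\dist(x,\cM))$ and $O(\eta)$ errors and the quadratic noise dependence; once that is in hand from \cite{davis2024asymptotic}, the remaining arguments are routine expansions.
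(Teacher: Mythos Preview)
Your proposal is correct and matches the paper's own proof, which simply states that the result follows from combining Corollary~4.9 and Lemma~10.3 in \cite{davis2024asymptotic}. One minor slip: you write that $A$ is $C^1$, but the proposition only assumes $A$ is Lipschitz; the smoothness of $F_\cM$ needed for Assumption~\ref{ass:basic_assumpt} (and the existence of $\nabla_\cM(A+\partial f)(x^\star)$ in the ``Furthermore'' clause) is implicitly part of the hypothesis rather than something you can derive from Lipschitzness alone.
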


\section{Conclusion and future work}\label{sec:6}

In this paper, we studied covariance estimation for nonsmooth stochastic approximation (SA)~methods. The estimator was initially proposed for SGD in \cite{zhu2023online} for smooth, strongly convex optimization problems.
The key idea is to group iterates into blocks of increasing size, with each block providing an approximately independent estimate of the covariance matrix. This estimator can be computed fully online, with both computation and memory scaling quadratically in dimension. Our work demonstrated that, with a properly chosen batch size control sequence, the same estimator achieves the expected convergence rate of order $O(\sqrt{d}n^{-1/8+\varepsilon})$ for any $\varepsilon>0$ in nonsmooth and potentially non-monotone (nonconvex) setting.
Our analysis involves highly nontrivial extensions of \cite{zhu2023online}, where we developed a localization technique and constructed a shadow sequence to address the challenges arising from the lack of smoothness. Additionally, we established high-probability guarantees on the stopping time at which iterates leave the local neighborhood.
The consistency of our covariance estimator enables asymptotically valid statistical inference for stochastic nonsmooth variational inclusion problems, covering numerous examples~as~provided~in~\mbox{Appendix}~\ref{sec:examples_active}.

One future research direction is studying covariance estimation for nonsmooth SA methods~under Markovian noise, inspired by reinforcement learning applications. In addition, an open and challenging question is establishing the lower bound of covariance estimation and investigating whether the estimator \eqref{eqn:online_batch_means} for first-order methods is minimax optimal. Finally, designing non-asymptotically optimal (nonsmooth) SA methods along with suitable covariance estimators is also a promising topic for future research.

\section*{Acknowledgment}
Liwei Jiang is partly supported by the Gatech ARC postdoc fellowship. Krishna Balasubramanian is supported by NSF DMS-2413426 and NSF DMS 2053918 awards. Damek Davis is supported by an Alfred P. Sloan research fellowship and NSF
DMS 2047637 award. Dmitriy Drusvyatskiy is supported by NSF DMS-2306322, NSF CCF 1740551, AFOSR FA9550-24-1-0092 awards.
\clearpage

\bibliographystyle{unsrt}
\bibliography{covest}

\begin{thebibliography}{10}

\bibitem{zhu2023online}
Wanrong Zhu, Xi~Chen, and Wei~Biao Wu.
\newblock Online covariance matrix estimation in stochastic gradient descent.
\newblock {\em Journal of the American Statistical Association}, 118(541):393--404, 2023.

\bibitem{polyak1992acceleration}
Boris~T Polyak and Anatoli~B Juditsky.
\newblock Acceleration of stochastic approximation by averaging.
\newblock {\em SIAM journal on control and optimization}, 30(4):838--855, 1992.

\bibitem{toulis2017asymptotic}
Panos Toulis and Edoardo~Maria Airoldi.
\newblock Asymptotic and finite-sample properties of estimators based on stochastic gradients.
\newblock {\em The Annals of Statistics}, 2017.

\bibitem{duchi2021asymptotic}
John~C Duchi and Feng Ruan.
\newblock Asymptotic optimality in stochastic optimization.
\newblock {\em The Annals of Statistics}, 49(1):21--48, 2021.

\bibitem{anastasiou2019normal}
Andreas Anastasiou, Krishnakumar Balasubramanian, and Murat~A Erdogdu.
\newblock Normal approximation for stochastic gradient descent via non-asymptotic rates of martingale clt.
\newblock In {\em Conference on Learning Theory}, pages 115--137. PMLR, 2019.

\bibitem{shao2022berry}
Qi-Man Shao and Zhuo-Song Zhang.
\newblock Berry--esseen bounds for multivariate nonlinear statistics with applications to m-estimators and stochastic gradient descent algorithms.
\newblock {\em Bernoulli}, 28(3):1548--1576, 2022.

\bibitem{samsonov2024gaussian}
Sergey Samsonov, Eric Moulines, Qi-Man Shao, Zhuo-Song Zhang, and Alexey Naumov.
\newblock Gaussian approximation and multiplier bootstrap for polyak-ruppert averaged linear stochastic approximation with applications to td learning.
\newblock {\em arXiv preprint arXiv:2405.16644}, 2024.

\bibitem{davis2024asymptotic}
Damek Davis, Dmitriy Drusvyatskiy, and Liwei Jiang.
\newblock Asymptotic normality and optimality in nonsmooth stochastic approximation.
\newblock {\em The Annals of Statistics}, 52(4):1485--1508, 2024.

\bibitem{lahiri2003resampling}
Soumendra~Nath Lahiri.
\newblock {\em {Resampling methods for dependent data}}.
\newblock Springer Science \& Business Media, 2003.

\bibitem{flegal2010batch}
James~M Flegal and Galin~L Jones.
\newblock Batch means and spectral variance estimators in markov chain monte carlo.
\newblock {\em The Annals of Statistics}, 38(2):1034--1070, 2010.

\bibitem{roy2023online}
Abhishek Roy and Krishnakumar Balasubramanian.
\newblock Online covariance estimation for stochastic gradient descent under markovian sampling.
\newblock {\em arXiv preprint arXiv:2308.01481}, 2023.

\bibitem{chen2020statistical}
Xi~Chen, Jason~D Lee, Xin~T Tong, and Yichen Zhang.
\newblock Statistical inference for model parameters in stochastic gradient descent.
\newblock {\em Annals of Statistics}, 48(1):251--273, 2020.

\bibitem{lee2013smooth}
John~M Lee.
\newblock Smooth manifolds.
\newblock In {\em Introduction to Smooth Manifolds}, pages 1--31. Springer, 2013.

\bibitem{boumal2020introduction}
Nicolas Boumal.
\newblock An introduction to optimization on smooth manifolds.
\newblock {\em Available online, Aug}, 2020.

\bibitem{rockafellar2009variational}
R~Tyrrell Rockafellar and Roger J-B Wets.
\newblock {\em Variational analysis}, volume 317.
\newblock Springer Science \& Business Media, 2009.

\bibitem{davis2022nearly}
Damek Davis and Liwei Jiang.
\newblock A nearly linearly convergent first-order method for nonsmooth functions with quadratic growth.
\newblock {\em arXiv preprint arXiv:2205.00064}, 2022.

\bibitem{nocedal2006numerical}
Jorge Nocedal and Stephen~J Wright.
\newblock {\em Numerical optimization}.
\newblock Springer, 2006.

\bibitem{bercu2020efficient}
Bernard Bercu, Antoine Godichon, and Bruno Portier.
\newblock An efficient stochastic newton algorithm for parameter estimation in logistic regressions.
\newblock {\em SIAM Journal on Control and Optimization}, 58(1):348--367, 2020.

\bibitem{na2022statistical}
Sen Na and Michael~W Mahoney.
\newblock Statistical inference of constrained stochastic optimization via sketched sequential quadratic programming.
\newblock {\em arXiv preprint arXiv:2205.13687}, 2022.

\bibitem{jin2019short}
Chi Jin, Praneeth Netrapalli, Rong Ge, Sham~M Kakade, and Michael~I Jordan.
\newblock A short note on concentration inequalities for random vectors with subgaussian norm.
\newblock {\em arXiv preprint arXiv:1902.03736}, 2019.

\bibitem{wright1993identifiable}
Stephen~J Wright.
\newblock Identifiable surfaces in constrained optimization.
\newblock {\em SIAM Journal on Control and Optimization}, 31(4):1063--1079, 1993.

\bibitem{lewis2002active}
Adrian~S Lewis.
\newblock Active sets, nonsmoothness, and sensitivity.
\newblock {\em SIAM Journal on Optimization}, 13(3):702--725, 2002.

\bibitem{lemarecha2000}
Claude Lemar{\'e}chal, Fran{\c{c}}ois Oustry, and Claudia Sagastiz{\'a}bal.
\newblock The $\mathcal{U}$-lagrangian of a convex function.
\newblock {\em Transactions of the American mathematical Society}, 352(2):711--729, 2000.

\bibitem{mifflin2005algorithm}
Robert Mifflin and Claudia Sagastiz{\'a}bal.
\newblock A ${VU}$-algorithm for convex minimization.
\newblock {\em Mathematical programming}, 104(2):583--608, 2005.

\bibitem{shapiroreducible}
Alexander Shapiro.
\newblock On a class of nonsmooth composite functions.
\newblock {\em Mathematics of Operations Research}, 28(4):677--692, 2003.

\bibitem{drusvyatskiy2014optimality}
Dmitriy Drusvyatskiy and Adrian~S Lewis.
\newblock Optimality, identifiability, and sensitivity.
\newblock {\em Mathematical Programming}, 147(1):467--498, 2014.

\bibitem{davis2025active}
Damek Davis, Dmitriy Drusvyatskiy, and Liwei Jiang.
\newblock Active manifolds, stratifications, and convergence to local minima in nonsmooth optimization.
\newblock {\em Foundations of Computational Mathematics}, pages 1--83, 2025.

\bibitem{cen2021fast}
Shicong Cen, Yuting Wei, and Yuejie Chi.
\newblock Fast policy extragradient methods for competitive games with entropy regularization.
\newblock {\em Advances in Neural Information Processing Systems}, 34:27952--27964, 2021.

\bibitem{li2022convergence}
Chris~Junchi Li, Yaodong Yu, Nicolas Loizou, Gauthier Gidel, Yi~Ma, Nicolas Le~Roux, and Michael Jordan.
\newblock On the convergence of stochastic extragradient for bilinear games using restarted iteration averaging.
\newblock In {\em International Conference on Artificial Intelligence and Statistics}, pages 9793--9826. PMLR, 2022.

\bibitem{mertikopoulos2016learning}
Panayotis Mertikopoulos and William~H Sandholm.
\newblock Learning in games via reinforcement and regularization.
\newblock {\em Mathematics of Operations Research}, 41(4):1297--1324, 2016.

\bibitem{mckelvey1995quantal}
Richard~D McKelvey and Thomas~R Palfrey.
\newblock Quantal response equilibria for normal form games.
\newblock {\em Games and economic behavior}, 10(1):6--38, 1995.

\bibitem{harvey2019tight}
Nicholas~JA Harvey, Christopher Liaw, Yaniv Plan, and Sikander Randhawa.
\newblock Tight analyses for non-smooth stochastic gradient descent.
\newblock In {\em Conference on Learning Theory}, pages 1579--1613. PMLR, 2019.

\bibitem{cutler2023stochastic}
Joshua Cutler, Dmitriy Drusvyatskiy, and Zaid Harchaoui.
\newblock Stochastic optimization under distributional drift.
\newblock {\em Journal of machine learning research}, 24(147):1--56, 2023.

\end{thebibliography}

\appendix
\newpage

\setcounter{example}{0}

\section{Concrete Examples }\label{sec:examples_active}

In this section, we expand on the discussion in Section \ref{sec:3} and provide some concrete examples that satisfy Assumptions~\ref{ass:basic_assumpt}--\ref{assumption:Aproposed}. 

\begin{example}[$\ell_1$-regularization]\label{ex:l1reg}
{\rm
Consider the stochastic optimization problem with $\ell_1$ regularization
\begin{equation} \label{eqn:l1reg}
\min_{x}~ g(x) = f(x) + \lambda \|x\|_1,
\end{equation}
where $f(x) = \EE_{\nu \in \cP}[f(x, \nu)]$ is a $C^p$-smooth function in $\RR^d$. Consider now a point $x^\star \in \RR^d$ that is critical for the function $g$ and define the index set $\cI = \{i \colon x^\star_i = 0\}$. Then, the set
$$\cM = \{ x\colon x_i = 0,~\forall i \in \cI\}$$
is an affine space, hence a smooth manifold. Note that the definition of criticality ensures that~$0 \in \partial g(x^\star)$, so we always have 
$$-(\nabla f(x^\star))_{i} \in [-\lambda, \lambda],\qquad \forall i \in \cI.$$
Suppose the following condition is true:
\begin{itemize}
\item {\bf (Strict complementarity)} $- (\nabla f(x^\star))_i \in (-\lambda, \lambda)$ for all $i\in \mathcal{I}$.
\end{itemize}
Then $\cM$ is indeed an active $C^p$ manifold of $g$ at $x^\star$. Moreover, $(b_{\le})$-regularity and strong (a)-regularity hold trivially for $g$ along $\cM$ at $x^\star$. If, in addition, $\nabla^2 f(x^\star)$ is positive definite when restricted to the tangent space of $\cM$, then Proposition~\ref{prop:projected_forward} and Lemma~\ref{lem:basic_level_bound} imply that Assumptions~\ref{ass:basic_assumpt}--\ref{assumption:Aproposed} hold for the stochastic subgradient method; similarly, Proposition~\ref{prop:forward_backward} and Lemma~\ref{lem:basic_level_bound} imply that these assumptions also hold for the stochastic proximal gradient method.
We mention that there is typically a bias between the center of the asymptotic normality, $x^\star$, and the minimizer of $f$ due to the presence of the regularization term.
}
\end{example}

\begin{example}[Nonlinear programming]\label{ex:nlp}
{\rm
Consider the problem of nonlinear programming
\begin{equation}\label{eqn:nlp}
\begin{aligned}
\min_{x}~ &f(x),\\
{\rm s.t.}~\,&g_i(x)\leq 0\qquad \textrm{for } i=1,\ldots,m,\\
&g_i(x)= 0\qquad \textrm{for }i=m+1,\ldots,n,
\end{aligned}
\end{equation}
where $f$ and $g_i$ are $C^p$-smooth functions on $\R^d$. Let us denote the set of all feasible points to the problem as
\begin{equation*}
\cX = \{x: g_i(x)\leq 0\;\text{ for }\; 1\leq i\leq m \quad \text{ and }\quad g_i(x)= 0 \; \text{ for }\; m+1\leq i\leq n\}.
\end{equation*}
Consider now a point $x^\star\in \cX$ that is critical for the function $f+\delta_{\cX}$ and define the active index set 
$$\mathcal{I}=\{i: g_i(x^\star)=0\}.$$
Suppose the following is true:
\begin{itemize}
\item {\bf (LICQ)} the gradients $\{\nabla g_i(x^\star)\}_{i\in \mathcal{I}}$ are linearly independent.
\end{itemize}
Then the set 
$$\cM=\{x: g_i(x)=0, ~\forall i\in \cI\}$$
is a $C^p$ smooth manifold locally around $x^\star$. Moreover, all three functions  $f$, $\delta_{\cX}$, and $f+\delta_{\cX}$ are $(b_{\leq})$-regular and strongly $(a)$-regular along $\cM$ near $x^\star$. To ensure that $\cM$ is an active manifold of $f+\delta_{\cX}$, an extra condition is required. Define the Lagrangian function 
$$\mathcal{L}(x,y):=f(x)+\textstyle\sum_{i=1}^{n+m} y_i g_i(x).$$
The criticality of $x^\star$ and LICQ ensure that there exists a (unique) Lagrange multiplier vector~$y^\star\in \R^{m}_+\times \R^n$ satisfying $\nabla_x \mathcal{L}(x^\star, y^\star)=0$ and $y_i^\star=0$ for all $i\notin \cI$.
Suppose the following standard assumption is true:
\begin{itemize}
\item {\bf (Strict complementarity)} $y_i^\star>0$ for all $i\in \mathcal{I}\cap \{1,\ldots, m\}$.
\end{itemize}
Then $\cM$ is indeed an active $C^p$ manifold for $f+\delta_{\cX}$ at $x^\star$.  Assume in addition that $\nabla^2_{xx} \mathcal{L}(x^\star,y^\star)$ is positive definite when restricted onto $T_\cM(x^\star)$, often called the  Second-Order Sufficient Condition (SOSC) in nonlinear programming literature \citep{nocedal2006numerical}; Proposition~\ref{prop:projected_forward} and Lemma~\ref{lem:basic_level_bound} imply that Assumptions~\ref{ass:basic_assumpt}--\ref{assumption:Aproposed} hold for stochastic projected gradient method. 
}
\end{example}

\begin{example}[Entropy-regularized zero-sum two-player matrix game]
{\rm 
Consider the following optimization problem that arises in an zero-sum two-player matrix game \citep{cen2021fast,li2022convergence}
\begin{align*}
\argmin_{z\in\Delta^{d-1}}\argmax_{w\in\Delta^{d-1}} f(z,w)\coloneqq z^\top\EE\left[\mathcal{A}_\xi\right]w+\lambda \mathcal{H}(z)-\lambda \mathcal{H}(w),\numberthis\label{eq:gameex}
\end{align*}
where $\Delta^{d-1}$ is the $d$-dimensional probability simplex, $\lambda$ is the regularization parameter, and $\mathcal{H}(\mu)=-\sum_{i=1}^d\mu_i\log \mu_i$ is the entropy regularization. The regularization is often imposed to account for the imperfect knowledge about the payoff matrix $\mathcal{A}=\EE\left[\mathcal{A}_\xi\right]$~\citep{mertikopoulos2016learning}. The solution of the above problem is known as the Quantal Response Equilibrium (QRE) in game theory \citep{mckelvey1995quantal}. In particular, the solution of \eqref{eq:gameex} turns out to be the solution of the following fixed point equation:
\begin{align*}
z_i^\star\propto \exp([Aw^*]_i/\lambda)\qquad w_i^\star\propto \exp(-[Az^\star]_i/\lambda) \qquad \forall 1\leq i\leq d. \numberthis\label{eq:gamesol}
\end{align*}
Let $\cX = \Delta^{d-1} \times \Delta^{d-1} \subset \RR^{2d}$, then problem~\eqref{eq:gameex} can be reformulated as the following variational inclusion problem:
\begin{align*}
0 \in \begin{bmatrix}
\nabla_z f(z,w)\\
-\nabla_w f(z,w)
\end{bmatrix} + N_\cX(z, w).
\end{align*}
Observe that $(z^\star,w^\star)$ lies in the relative interior of $\Delta^{d-1}\times\Delta^{d-1}$.  Consequently, $$\mathcal{M}\coloneqq \left\{(z,w): \textstyle\sum_{i=1}^{d} z_i = 1, \sum_{i=1}^{d}w_i =1 \right\}$$ is an active manifold of $\delta_\cX$ at $(z^\star, w^\star)$ for $- \begin{bmatrix}
\nabla_z f(z,w)\\
-\nabla_w f(z,w)
\end{bmatrix}$. Also, it is trivial to show that $\delta_\cX$ is both $(b_\le)$-regular and strong $(a)$-regular along $\cM$ at $(z^\star, w^\star)$. Moreover, ~\cite{cen2021fast} showed that $f$ is strongly-convex strongly-concave locally near $(z^\star, w^\star)$, so a combination of Proposition~\ref{prop:projected_forward} and Lemma~\ref{lem:basic_level_bound} implies that Assumptions~\ref{ass:basic_assumpt}--\ref{assumption:Aproposed} hold for stochastic projected forward method.
}
\end{example}

The following is a nonconvex and nonsmooth function satisfying Assumptions~\ref{ass:basic_assumpt}--\ref{assumption:Aproposed} for the stochastic subgradient method.

\begin{example}[Nonconvex example]\label{example:nonconvex}
{\rm 
Consider the function with the origin as the minimizer:
$$ f(x,y) = |x-y^2| + \frac{x^2 + y^2}{2}.$$
Note that for any $ 0< t<1$, we have
\begin{align*}
f(t^2, t) + f(t^2, -t) = (t^4+t^2) <  2t^2 + t^4 = 2f(t^2, 0),
\end{align*}
which implies that $f$ is not convex in any local neighborhood of the origin.
Meanwhile, one can easily check that $\cM  = \{(x,y) \colon x= y^2\}$ is an active manifold of $f$ at the origin, and $f$ is both $(b_\le)$-regularity and strong $(a)$-regularity along $\cM$ at the origin. Moreover, $\nabla_\cM f(0,0)$ is positive definite on the $y$-axis. A combination of Proposition~\ref{prop:projected_forward} and Lemma~\ref{lem:basic_level_bound} implies that Assumptions~\ref{ass:basic_assumpt}--\ref{assumption:Aproposed} hold for stochastic subgradient method.
}
\end{example}

	\section{Proof of Theorem~\ref{th:mainthmcov}}\label{sec:proof_of_main}
	We introduce some more notations for the rest of this section. First, by our choice that $a_m = \floor{Cm^\beta}$, we have $n_m \asymp m^{\beta-1}$. Let $M$ be an integer such that $a_M \le n <a_{M+1}$. Let $H\coloneqq U^\top\nabla_{\cM}F_\cM(x^\star)U$.	Note that $H$ is not necessarily a symmetric matrix~\cite{davis2024asymptotic}. Define 
	\begin{align}
	W_i^j&\coloneqq\textstyle\prod_{k=i+1}^j(\Id-\eta_kH) \quad \text{for}~~ j>i \quad\text{with}\quad W_i^i\coloneqq\Id,\label{eq:wijdef}\\[2.5pt]
	&S_i^j\coloneqq\textstyle\sum_{k=i+1}^jW_i^k \quad \text{for}~~ j>i \numberthis\label{eq:sijdef}\quad\text{with}\quad S_i^i\coloneqq0.
	\end{align}
	 Let $\delta >0$ be small enough so that Assumption~\ref{ass:basic_assumpt} -- \ref{assumption:Aproposed} hold inside $B_\delta(x^\star)$. We consider the shadow sequence 
	\begin{align*}
	y_k = \begin{cases}
	P_{\cM}(x_k) & \text{if $x_k \in B_{2\delta}(x^\star)$} \\
	x^\star &  \text{otherwise.}
	\end{cases}
	\end{align*}
	By Proposition 6.3 in \cite{davis2024asymptotic}, there exists $\cF_{k+1}$-measurable random vectors $E_k \in \RR^d$ such that the shadow sequence satisfies $y_k \in B_{4\delta}(x^\star) \cap \cM$ for all $k$ and the recursion holds:
	\begin{align}\label{eqn:shadow:eq:iteration}
	y_{k+1} = y_k - \eta_{k+1} F_{\mathcal{M}}(y_k) - \eta_{k+1} P_{\tangentM{y_k}}(\perturb_{k+1}) + \eta_{k+1} E_k \qquad \text{for all $k \geq 1.$}
	\end{align}
	Define an auxiliary sequence $z_k=x^\star+U\Delta_k$ where $\Delta_k :=U^\top (y_k-x^\star)$. 
	Consider the following two estimators defined in terms of $z_k$ and $\Delta_k$ respectively.  
	\begin{align}
	{\Sigma}_n'=\frac{\sum_{i=1}^n\left(\sum_{k=t_i}^i(z_k-x^\star)-l_i(\bar{z}_n-x^\star)\right)\left(\sum_{k=t_i}^i(z_k-x^\star)-l_i(\bar{z}_n-x^\star)\right)^
		\top}{\sum_{i=1}^nl_i}. \label{eq:covestimatorzk}
	\end{align}
    \begin{align}	\tilde{\Sigma}_n=\frac{\sum_{i=1}^n\left(\sum_{k=t_i}^i\Delta_k-l_i\bar{\Delta}_n\right)\left(\sum_{k=t_i}^i\Delta_k-l_i\bar{\Delta}_n\right)^
		\top}{\sum_{i=1}^nl_i}. \label{eq:covestimatortilde}
	\end{align}
    Observing that $\Sigma_n'=U\tilde{\Sigma}_nU^\top$ and recalling from~\eqref{eq:covdef} that $\Sigma=UH^{-1}SH^{-\top}U^\top$, we have, 
    \begin{align}
        \expec{\norm{{\Sigma}_n'-\Sigma}_2\indic{n}}{} &=\expec{\norm{U(\tilde{\Sigma}_n-H^{-1}SH^{-\top})U^\top}_2\indic{n}}{}\\
        &\le \expec{\norm{\tilde{\Sigma}_n-H^{-1}SH^{-\top}}_2\indic{n}}{}.\label{eq:sigmaprimetildesame}
    \end{align}
    Using triangle inequality and \eqref{eq:sigmaprimetildesame}, we have
    \begin{align*}
        \expec{\norm{\hat{\Sigma}_n-\Sigma}_2\indic{n}}{}\leq~ & \expec{\norm{{\Sigma}_n'-\Sigma}_2\indic{n}}{}+\expec{\norm{\hat{\Sigma}_n-\Sigma_n'}_2\indic{n}}{}\\
        \le~ & \underset{\textsf{Lemma~\ref{lm:sigmanhatsigmadiff}}}{\colorboxed{black}{\expec{\norm{\tilde{\Sigma}_n-H^{-1}SH^{-\top}}_2\indic{n}}{} }} +\underset{\textsf{Lemma~\ref{lm:sigmanhatsigmanprimeconv}}}{\colorboxed{black}{\expec{\norm{\hat{\Sigma}_n-\Sigma_n'}_2\indic{n}}{}}}.
    \end{align*}
    	On the one hand, by Lemma~\ref{lm:sigmanhatsigmadiff} and the assumption that $\beta > \frac{1}{1-\alpha}$,
	\begin{align*}
	&\quad \textstyle \expec{\norm{\tilde{\Sigma}_n-H^{-1}SH^{-\top}}_2\indic{n}}{}\\
	&\lesssim  d \ks^{\alpha} M^{(\alpha-1)\beta+1}+\sqrt{d}\ks^2 M^{-\frac12}  +\sqrt{d} \ks^\frac{\alpha}{2}M^{\frac{(\alpha-1)\beta + 1}{2}}+ \ks^{\alpha+\frac{1}{2}}M^{-\frac{1}{2}} + \ks^{2\alpha +1} M^{-1}\\
    &\lesssim \ks^3 (dM^{(\alpha-1)\beta +1} + \sqrt{d} M^{\frac{(\alpha-1)\beta +1}{2}} + \sqrt{d} M^{-\frac{1}{2}})\numberthis\label{eq:sigmantildesigmadiff}
	\end{align*}
	On the other hand, by Lemma~\ref{lm:sigmanhatsigmanprimeconv} and the assumption that $\beta > \frac{1}{1-\alpha}$,  
	\begin{align*}
	&\textstyle \quad \expec{\norm{\hat{\Sigma}_n-\Sigma_n'}_2\indic{n}}{}\\
    &\lesssim\sqrt{d}\ks^{\frac32+\frac{\alpha}{2}} M^{\frac{(\alpha-1)\beta}{2}} +d^\frac14 \ks^\frac52M^{-\frac34}+d^\frac14 \ks^{\frac{\alpha}{4}+\frac32} M^{\frac{(\alpha-1)\beta -1}{4}}+ \ks^{\frac{3}{2}}M^{-\frac{1}{2}} + \ks^{3} M^{-1}\\
    &\lesssim \ks^3\sqrt{d}M^{-\frac{1}{2}} \numberthis\label{eq:sigmanhatsigmanprimeconv}
	\end{align*}

	Combining ~\eqref{eq:sigmantildesigmadiff}, and~\eqref{eq:sigmanhatsigmanprimeconv} and using the fact that $n\asymp M^\beta$, we conclude the proof of Theorem~\ref{th:mainthmcov}.

	\begin{lemma}\label{lm:sigmanhatsigmadiff}
		Let the conditions of Theorem~\ref{th:mainthmcov} be true. We have,
		\begin{align*}
		&\quad  \textstyle \expec{\norm{\tilde{\Sigma}_n-H^{-1}SH^{-\top}}_2\indic{n}}{}\\
		&\lesssim d \ks^{\alpha} M^{(\alpha-1)\beta+1}+\sqrt{d}\ks^2 M^{-\frac12}  +\sqrt{d} \ks^\frac{\alpha}{2}M^{\frac{(\alpha-1)\beta + 1}{2}}+ \ks^{\alpha+\frac{1}{2}}M^{-\frac{1}{2}} + \ks^{2\alpha +1} M^{-1}.\numberthis\label{eq:sigmanhatsigmadiff}
		\end{align*}
		% d \ks^{1+2\alpha} M^{(\alpha-1)\beta+1}+\sqrt{d} M^{-\frac12}  + \sqrt{d}\ks^2 M^{\frac{(\alpha-1)\beta}{2}} +\sqrt{d} M^{\frac{(\alpha-1)\beta + 1}{2}}+ \ks^{\alpha+\frac{3}{2}}M^{-\frac{1}{2}} + \ks^{2\alpha +3} M^{-1}
	\end{lemma}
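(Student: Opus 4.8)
\textbf{Proof proposal for Lemma~\ref{lm:sigmanhatsigmadiff}.}

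The plan is to reduce $\tilde\Sigma_n$, which is built from the low-dimensional shadow coordinates $\Delta_k = U^\top(y_k - x^\star)$, to a sum of ``clean'' blocks driven only by the martingale noise $P_{T_\cM(x^\star)}\nu_{k+1}$ propagated through the linear recursion $\Id - \eta_k H$, plus a collection of controllable remainders. The starting point is the recursion~\eqref{eqn:shadow:eq:iteration}: projecting onto $T_\cM(x^\star)$ and using Assumption~\ref{ass:basic_assumpt} (so that $F_\cM$ is smooth with Jacobian $H$ in these coordinates) and Assumption~\ref{assumption:Aproposed}, I would write
$\Delta_{k+1} = (\Id - \eta_{k+1}H)\Delta_k - \eta_{k+1} U^\top\nu_{k+1} + \eta_{k+1} r_k$,
where $r_k$ collects the Taylor remainder of $F_\cM$ (quadratic in $\|\Delta_k\|$), the state-dependent noise component $\nu^{(2)}$ (controlled by Assumption~\ref{assumption:martinagle}), and the $E_k$ error (controlled by Proposition~6.3 of~\cite{davis2024asymptotic} on the event $\indic{n}$, where all iterates stay near $x^\star$). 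Unrolling gives $\Delta_k = W_0^k \Delta_0 - \sum_{j=1}^k \eta_j W_j^k U^\top\nu_j + \sum_{j=1}^k \eta_j W_j^k r_{j-1}$, which feeds into the block sums $\sum_{k=t_i}^i \Delta_k$ appearing in $\tilde\Sigma_n$.

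The core of the argument is then the classical batch-means decomposition, as in~\cite{zhu2023online,chen2020statistical}, but carried out with sharp tracking of the constants in $d$, $M$, and $\ks$. I would (i) replace the transient term $W_0^k\Delta_0$ and the $r$-driven terms by error bounds, showing each block-sum deviates from its ``noise-only'' counterpart $\tilde\Delta_k^{\sf noise} := -\sum_{j=1}^k \eta_j W_j^k U^\top\nu_j$ by a quantity whose contribution to $\tilde\Sigma_n$ is $O(\ks^{?}(dM^{(\alpha-1)\beta+1} + \dots))$; (ii) for the noise-only blocks, approximate $\sum_{k=t_i}^i \tilde\Delta_k^{\sf noise}$ by $-S_{t_i-1}^{\cdot}$-type partial sums times fresh noise increments within the block, using that $H$ is close to invertible and $S_i^j \approx H^{-1}$ for $j-i$ large (this is where $\eta_k = \eta k^{-\alpha}$ and $\beta > 1/(1-\alpha)$ enter, guaranteeing blocks are long enough); (iii) compute the expectation of the resulting sum of outer products of (approximately) independent block noises, recognizing $\sum_i l_i^{?}\cdot H^{-1}SH^{-\top}$ as the leading term after normalizing by $\sum_i l_i$, and bound the bias from block overlap/boundary effects ($O(\sqrt d\,\ks^2 M^{-1/2})$ scale) and the variance of the empirical average of block products ($O(\sqrt d M^{-1/2})$ scale, since there are $\asymp M$ roughly-independent blocks). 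Each of the five terms in~\eqref{eq:sigmanhatsigmadiff} should emerge from one of these sources: the $dM^{(\alpha-1)\beta+1}$ term from the quadratic remainder $r_k$ (whose size is governed by $\EE\|\Delta_k\|^2 \lesssim \eta_k$ and the block length $l_i \asymp M^{\beta-1}$), the $\sqrt d M^{-1/2}$ term from the martingale CLT-type fluctuation of the block average, the $\sqrt d \ks^{\alpha/2} M^{((\alpha-1)\beta+1)/2}$ term from a cross term between remainder and noise (Cauchy--Schwarz of the previous two), and the $\ks^{\alpha+1/2}M^{-1/2}$, $\ks^{2\alpha+1}M^{-1}$ terms from the transient $\Delta_0$-dependent pieces localized to the first $\ks$-ish iterations.

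The main obstacle, I expect, is step (ii)--(iii): controlling the accumulated effect of $S_i^j$ not being exactly $H^{-1}$ and of the block noises not being exactly independent, while keeping the dimension dependence at $\sqrt d$ (operator norm of a sum of $\asymp M$ nearly-independent $d\times d$ outer products) rather than $d$. This requires careful use of matrix concentration / second-moment bounds for $\sum_i (\text{block}_i)(\text{block}_i)^\top - \EE[\cdot]$ and exploiting the non-symmetric-but-stable structure of $H$ (all eigenvalues with positive real part, by~\eqref{eqn:QG}), so that $\|W_i^j\|$ decays and $\sum_j \eta_j \|W_j^k\| \lesssim 1$ uniformly. A secondary nuisance is that $H$ is not symmetric, so the contraction estimates on $W_i^j$ and $S_i^j$ cannot rely on spectral decomposition; instead I would invoke a quantitative stability bound (e.g.\ via a Lyapunov equation or the integral representation used in~\cite{davis2024asymptotic}) to get $\|W_i^j\|_2 \lesssim \exp(-c\sum_{k=i+1}^j \eta_k)$, and then all the sums over $j$ and over blocks are standard. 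Throughout, every bound is taken on the event $\{\tau_{\ks,\delta} > n\}$ (the indicator $\indic{n}$), which is exactly what makes the local Assumptions~\ref{ass:basic_assumpt}--\ref{assumption:Aproposed} and Proposition~6.3 of~\cite{davis2024asymptotic} applicable to every iterate in the range.
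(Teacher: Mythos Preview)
Your high-level plan matches the paper's: linearize the shadow recursion, isolate the martingale noise driven through $\Id - \eta_k H$, and run the batch-means analysis of \cite{zhu2023online} on the clean part while bounding remainders. But the execution in the paper differs from yours in ways that matter for attributing the five terms in~\eqref{eq:sigmanhatsigmadiff}, and your proposal has a couple of inaccuracies worth flagging.

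First, the paper does \emph{not} unroll globally to $\Delta_0$. It unrolls within each block, obtaining $\sum_{k=t_i}^i \Delta_k = \lambda_{t_i}^i + e_{t_i}^i$ with $\lambda_{t_i}^i = S_{t_i-1}^i\Delta_{t_i-1} + \sum_{k=t_i}^i(\Id+S_k^i)\eta_k U^\top(\nu_{k+1}^{(1)}+\nu_{k+1}^{(2)}(x_k))$ (see~\eqref{eqn:sum_of_Delta_k}), and then expands $\tilde\Sigma_n$ into the four quadrants $\textsf{I}$--$\textsf{IV}$ of $(\lambda+e)(\lambda+e)^\top$. The block transient is $\Delta_{t_i-1}$, whose moments already carry $\ks^\alpha$ factors via Lemma~\ref{lm:yk4momentconv}. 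The $\ks^{2\alpha+1}M^{-1}$ term is the pure-remainder quadrant $\textsf{IV}$ and the $\ks^{\alpha+1/2}M^{-1/2}$ term is the $\textsf{II}/\textsf{III}$ cross terms (by Cauchy--Schwarz from $\textsf{I}$ and $\textsf{IV}$), not ``$\Delta_0$-dependent transients from the first $\ks$ iterations'' as you guess. Second, your recursion for $\Delta_k$ is missing the term $\zeta_{k+1} = U^\top(P_{T_\cM(y_k)} - P_{T_\cM(x^\star)})\nu_{k+1}$, which arises because \eqref{eqn:shadow:eq:iteration} projects onto $T_\cM(y_k)$, not $T_\cM(x^\star)$; in the paper this martingale-difference term lives in $e_{t_i}^i$ and, handled via orthogonality, is in fact the dominant contributor to $\textsf{IV}$. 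Also, the paper keeps $\nu^{(2)}(x_k)$ in $\lambda$ (with $\nu^{(1)}$), peeling it off only at the end via $\omega_i\to\hat\omega_i$; you put it in $r_k$.

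Third, the $d\,\ks^\alpha M^{(\alpha-1)\beta+1}$ term does not come from the Taylor remainder of $F_\cM$ and does not require matrix concentration. It comes from $\upsilon_i\upsilon_i^\top$, where $\upsilon_i$ bundles the block transient $S_{t_i-1}^i\Delta_{t_i-1}$ together with the $(\eta_k\Id + \eta_k S_k^i - H^{-1})$-weighted noise; the $d$ enters through the crude trace inequality $\EE\|vv^\top\|_2 \le \operatorname{tr}(\EE[vv^\top]) \le d\|\EE[vv^\top]\|_2$ (see~\eqref{eq:upsupsnormout}). The $\sqrt d\, M^{-1/2}$ is inherited directly from \cite[Lemma~B.2]{zhu2023online} applied to the clean part $\hat\omega_i = \sum_{k=t_i}^i H^{-1}U^\top\nu_{k+1}^{(1)}$, and the $\sqrt d\,\ks^{\alpha/2} M^{((\alpha-1)\beta+1)/2}$ term is indeed the $\upsilon_i\omega_i^\top$ cross term via Cauchy--Schwarz, as you say.
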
   
	\begin{proof}\label{pf:sigmanhatsigmadiff}
    Following the proof of Lemma 10.7 in~\cite{davis2024asymptotic}, we have
{\small
\begin{align}\label{eq:Deltadef}
  \Delta_{k+1} =(I-\eta_{k+1} H)\Delta_k  
  -\eta_{k+1} \Bigl(U^\top \bigl(\perturb_{k+1}^{(1)}
  + \perturb_{k+1}^{(2)}(x_k)\bigr)\Bigr)
  - \eta_{k+1} \Bigl(R(y_k) + \zeta_{k+1} - U^\top E_k \Bigr),
\end{align}
}
	where $\zeta_{k+1} = U^\top P_{T_\cM(y_k)}(\nu_{k+1}) - U^\top P_{T_\cM(x^\star)}(\nu_{k+1})$, and 
	$$
	R(y) = U^\top F_\cM(y) - U^\top \nabla_\cM F_{\cM}(x^\star) UU^\top (y - x^\star).
	$$
		
    Summing both sides of \eqref{eq:Deltadef} from $k=i$ to $j$, we get
		\begin{align}
		\textstyle\sum_{k=i}^j\Delta_k=&S_{i-1}^j\Delta_{i-1}+\textstyle\sum_{k=i}^j(\Id+S_k^j)\eta_k\left(U^\top (\perturb_{k+1}^{(1)}+\perturb_{k+1}^{(2)}(x_k))+R(y_k)- U^\top E_k+ \zeta_{k+1} \right)\\
		=& \lambda_i^j+e_i^j, \label{eqn:sum_of_Delta_k}
		\end{align}
		where we define
		\begin{align*}
		&\lambda_i^j:=S_{i-1}^j\Delta_{i-1}+\textstyle	\sum_{k=i}^j(\Id+S_k^j)\eta_k\left(U^\top (\perturb_{k+1}^{(1)}+\perturb_{k+1}^{(2)}(x_k))\right),\\ 
		&e_i^j:=\textstyle	\sum_{k=i}^j(\Id+S_k^j)\eta_k\left(R(y_k) - U^\top E_k+ \zeta_{k+1}\right). 
	\end{align*}
		Plugging~\eqref{eqn:sum_of_Delta_k} into the definition of $\tilde \Sigma_n$ in~\eqref{eq:covestimatortilde}, we write and divide $\tilde \Sigma_n$ into four parts. 
		{\small \begin{align*}
		&\tilde{\Sigma}_n=(\textstyle\sum_{i=1}^n l_i)^{-1}[\underbrace{\textstyle\sum_{i=1}^n(\lambda_{t_i}^i-n^{-1}l_i\lambda_1^n)(\lambda_{t_i}^i-n^{-1}l_i\lambda_1^n)^\top}_{\textsf{I}}+\underbrace{\textstyle\sum_{i=1}^n(e_{t_i}^i-n^{-1}l_ie_1^n)(\lambda_{t_i}^i-n^{-1}l_i\lambda_1^n)^\top}_{\textsf{II}}]\\
		&+(\textstyle\sum_{i=1}^n l_i)^{-1}[\underbrace{\textstyle\sum_{i=1}^n(\lambda_{t_i}^i-n^{-1}l_i\lambda_1^n)(e_{t_i}^i-n^{-1}l_ie_1^n)^\top}_{\textsf{III}}+\underbrace{\textstyle\sum_{i=1}^n(e_{t_i}^i-n^{-1}l_ie_1^n)(e_{t_i}^i-n^{-1}l_ie_1^n)^\top}_{\textsf{IV}}].\numberthis\label{eq:sigmatildedecomp}
		\end{align*}
        }
        In what follows, we will provide upper bounds on $\EE[\|(\sum_{i=1}^n l_i)^{-1} \textsf{I} - H^{-1} SH^{-\top}\|_2 \indic{n}]$, $\EE[\|(\sum_{i=1}^n l_i)^{-1} \textsf{II}\|_2\indic{n}]$, $\EE[(\sum_{i=1}^n l_i)^{-1} \textsf{III}\|_2 \indic{n}]$, and $\EE[(\sum_{i=1}^n l_i)^{-1} \textsf{IV}\|_2 \indic{n}]$ separately. The lemma then follows from the triangle inequality.\\
        
         \noindent \textbf{Analysis of term $\textsf{I}$:}		Note that the goal is to bound 
         \begin{align*}
            &\quad \textstyle \EE[\|(\sum_{i=1}^n l_i)^{-1} \textsf{I} - H^{-1} SH^{-\top}\|_2 \indic{n}]\\
            &\textstyle = \EE[\|(\sum_{i=1}^n l_i)^{-1} \sum_{i=1}^{n}\lambda_{t_i}^i {\lambda_{t_i}^i}^\top - H^{-1} SH^{-\top}\|_2 \indic{n}] + \EE[(\sum_{i=1}^n l_i)^{-1} n^{-1}\|\sum_{i=1}^{n}l_i\lambda_{t_i}^i {\lambda_{1}^n}^\top \|_2 \indic{n}]\\
            &\textstyle\quad +\EE[(\sum_{i=1}^n l_i)^{-1} n^{-1}\|\sum_{i=1}^{n}l_i\lambda_{1}^n {\lambda_{t_i}^i }^\top \|_2 \indic{n}] + \expec{(\sum_{i=1}^nl_i)^{-1}n^{-2}\sum_{i=1}^nl_i^2\norm{\lambda_1^n{\lambda_1^n}^\top}_2\indic{n}}{}.
         \end{align*}
         We bound terms on the RHS one by one. 
         \begin{itemize}
             \item The first term $\EE[\|(\sum_{i=1}^n l_i)^{-1} \sum_{i=1}^{n}\lambda_{t_i}^i {\lambda_{t_i}^i}^\top - H^{-1} SH^{-\top}\|_2 \indic{n}]$. To this end, we rewrite 
		\begin{align*}
		\textstyle\sum_{i=1}^n\lambda_{t_i}^i{\lambda_{t_i}^i}^\top
		=\textstyle\sum_{i=1}^n (\upsilon_i+\omega_i)(\upsilon_i+\omega_i)^\top,
		\end{align*}
		where 
        $$\upsilon_{i} :=S_{t_i-1}^i\Delta_{t_i-1}+\sum_{k=t_i}^i(\eta_k\Id+\eta_kS_k^i-H^{-1})(U^\top (\perturb_{k+1}^{(1)}+\perturb_{k+1}^{(2)}(x_k)))$$ 
        and 
        $$\omega_i:=\sum_{k=t_i}^iH^{-1} U^\top (\perturb_{k+1}^{(1)}+\perturb_{k+1}^{(2)}(x_k))).$$ 
		Note that 
		\begin{align*}
		\expec{\norm{\upsilon_{i}\upsilon_{i}^\top}_2\indic{n}}{}\leq \trace{\expec{\upsilon_{i}\upsilon_{i}^\top\indic{n}}{}} \leq d\norm{\expec{\upsilon_{i}\upsilon_{i}^\top}{}\indic{n}}_2.\numberthis\label{eq:upsupsnormout}
		\end{align*}
		On the other hand, direct calculation shows
		\begin{align*}
		&\quad \norm{\expec{\upsilon_{i}\upsilon_{i}^\top\indic{n}}{}}_2\\
        & \leq \norm{\expec{\upsilon_{i}\upsilon_{i}^\top\indic{t_i-1}}{}}_2\\ 
		&\le \norm{S_{t_i-1}^i}_2^2\norm{\expec{\Delta_{t_i-1}\Delta_{t_i-1}^\top\indic{t_i-1}}{}}_2 \numberthis\label{eq:upsups}\\
		&\quad +\textstyle\sum_{k=t_i}^i\norm{\eta_k\Id+\eta_kS_k^i-H^{-1}}_2^2\norm{U^\top \expec{(\perturb_{k+1}^{(1)}+\perturb_{k+1}^{(2)}(x_k))(\perturb_{k+1}^{(1)}+\perturb_{k+1}^{(2)}(x_k))^\top}{}U}_2,
		\end{align*}
        where the first inequality follows from the definition of the stopping time, and the second inequality follows from  Assumption~\ref{assumption:martinagle} that $\{\nu_{k+1}^{(1)}\}$ and $\{\nu_{k+1}^{(2)}(x_k)\}$ are martingale difference sequences. We then bound the RHS of ~\eqref{eq:upsups}.
		For the first term in~\eqref{eq:upsups}, we consider two cases:
		\begin{enumerate}
			\item $ t_i - 1 \ge k_s.$ 
			Using Lemma~\ref{lm:yk4momentconv} and Lemma~\ref{lm:Sijbound}, we have, 
			{\small \begin{align*}
			\norm{S_{t_i-1}^i}_2^2\norm{\expec{\Delta_{t_i-1}\Delta_{t_i-1}^\top\indic{t_i-1}}{}}_2 &\le 	\norm{S_{t_i-1}^i}_2^2 \EE[\|\Delta_{t_i-1}\|_2^2 \indic{t_i-1}]\\
            &\lesssim k_s^{\alpha} t_i^\alpha. \numberthis\label{eq:upsups1}
			\end{align*}
            }
			\item $t_i -1 < k_s$. By the definition of $y_i$, we always have $\|\Delta_{t_i-1}\| \le 4\delta$. Applying Lemma~\ref{lm:Sijbound}, we have 
			\begin{align*}
			\norm{S_{t_i-1}^i}_2^2\norm{\expec{\Delta_{t_i-1}\Delta_{t_i-1}^\top\indic{t_i-1}}{}}_2\lesssim t_i^{2\alpha} \lesssim k_s^{2\alpha}. \numberthis\label{eq:upsups1+}
			\end{align*}
		\end{enumerate}
		Next, we consider the second term on the RHS of~\eqref{eq:upsups}. By Assumption~\ref{assumption:martinagle} and~\ref{assum:bounded_seq}, we have
		\begin{align*}
		\norm{U^\top\expec{(\perturb_{k+1}^{(1)}+\perturb_{k+1}^{(2)}(x_k))(\perturb_{k+1}^{(1)}+\perturb_{k+1}^{(2)}(x_k))^\top}{}U}_2 &\lesssim  \EE[\|\perturb_{k+1}^{(1)}\|_2^2] + \EE[\|\perturb_{k+1}^{(2)}(x_k)\|_2^2]\\
		&\lesssim \EE[\|x_k - x^\star\|_2^2]\\
		&\lesssim \cub.
		\end{align*}
		In addition, following the same proof of \cite[Lemma B.3]{zhu2023online} we obtain, 
		\begin{align*}
		\textstyle\sum_{k=t_i}^i\norm{\eta_k\Id+\eta_kS_k^i-H^{-1}}_2^2\lesssim l_it_i^{2\alpha-2}+i^\alpha.
		\end{align*}
		Combining, we have
		 \begin{align*}
		\textstyle & \quad\sum_{k=t_i}^i\norm{\eta_k\Id+\eta_kS_k^i-H^{-1}}_2^2\norm{U^\top \expec{(\perturb_{k+1}^{(1)}+\perturb_{k+1}^{(2)}(x_k))(\perturb_{k+1}^{(1)}+\perturb_{k+1}^{(2)}(x_k))^\top}{}U}_2\\
        &\textstyle \lesssim l_it_i^{2\alpha-2}+i^\alpha.\numberthis\label{eq:upsups2}
		\end{align*}
		By basic calculus and our choice of $a_m$ and $n_m$, we can easily verify the following three inequalities:
		\begin{align*}
		\textstyle\sum_{i=1}^n l_i\asymp \textstyle\sum_{m=1}^M n_m^2 \asymp \textstyle\sum_{m=1}^M m^{2\beta-2}\asymp M^{2\beta-1}; \numberthis\label{eq:lisumbound}
		\end{align*} 
		\begin{align*}
		\textstyle\sum_{i=1}^n l_i^2\asymp \textstyle\sum_{m=1}^M n_m^3 \asymp \textstyle\sum_{m=1}^M m^{3\beta-3}\asymp M^{3\beta-2}; \numberthis\label{eq:lisqrsumbound}
		\end{align*}
		\begin{align*}
		\textstyle \sum_{m=1}^M a_m^{-2\alpha} n_m^3 \asymp \sum_{m=1}^M m^{3\beta - 2 \alpha \beta -3} \asymp M^{3\beta - 2\alpha \beta - 2}. \numberthis \label{eq:amnmbound}
		\end{align*}
		Combining \eqref{eq:upsupsnormout}, \eqref{eq:upsups}, \eqref{eq:upsups2}, \eqref{eq:upsups1}, and \eqref{eq:lisumbound}, we have 
		\begin{align*}
		\textstyle\sum_{i=1}^n\expec{\norm{\upsilon_{i}\upsilon_{i}^\top}_2\indic{n}}{} &\lesssim d\left[\textstyle\sum_{i=1}^n(l_it_i^{2\alpha-2}+i^\alpha+\ks^\alpha t_i^\alpha+\ks^{2\alpha})\right]\\
		=&d \left[\textstyle\sum_{m=1}^M\sum_{i=a_m}^{a_{m+1}-1}(l_ia_m^{2\alpha-2}+i^\alpha+\ks^\alpha a_m^\alpha+\ks^{2\alpha})\right]  \\
		\lesssim & d\left[M^{2\alpha\beta-1}+M^{\beta(1+\alpha)}+\ks^\alpha M^{\beta(1+\alpha)}+\ks^{2\alpha}M^\beta \right].
		\end{align*}
		Then, by~\eqref{eq:lisumbound} and  the assumption that $n\geq \ks$,
		{\small \begin{align*} \textstyle(\sum_{i=1}^nl_i)^{-1}\sum_{i=1}^n\expec{\norm{\upsilon_{i}\upsilon_{i}^\top}_2\indic{n}}{} &\lesssim d \ks^\alpha M^{(\alpha-1)\beta+1} + d\ks^{2\alpha} M^{1-\beta}\\
		&\lesssim d \ks^{\alpha} M^{(\alpha-1)\beta+1}.\numberthis\label{eq:upsupsbound}
		\end{align*}
        }
		Define $\hat \omega_i = \sum_{k = t_i}^{i} H^{-1} U^\top \nu_{k+1}^{(1)}$.  
		Using the same proof of Step 1 of~\cite[Lemma B.2]{zhu2023online}, we have, 
		\begin{align*}
		\expec{\textstyle\norm{(\sum_{i=1}^n l_i)^{-1}\sum_{i=1}^n \hat \omega_i \hat \omega_i^\top-H^{-1}SH^{-\top}}_2}{}\lesssim \sqrt{d}M^{-\frac12}.\numberthis\label{eq:omegaomegasigmabound}
		\end{align*}
		Following the proof of Step 2 of~\cite[Lemma B.2]{zhu2023online}, we have 
	\begin{equation}\label{eqn:decomp(i)(ii)}
		\begin{aligned}
		&\quad \EE[\textstyle\norm{(\sum_{i=1}^n l_i)^{-1}\sum_{i=1}^n \hat \omega_i \hat \omega_i^\top - \textstyle(\sum_{i=1}^n l_i)^{-1}\sum_{i=1}^n  \omega_i  \omega_i^\top}_2 \indic{n}]\\
		&\le   2\cdot \underbrace{\EE[\textstyle\norm{(\sum_{i=1}^n l_i)^{-1}\sum_{i=1}^n H^{-1} U^\top\left( \sum_{k=t_i}^{i} \nu_{k+1}^{(1)}\right)\left( \sum_{k=t_i}^{i} \nu_{k+1}^{(2)}(x_k)\right)^\top U H^{-\top}}_2 \indic{n}]}_{(i)} \\
		&\quad + \underbrace{\EE[\textstyle\norm{(\sum_{i=1}^n l_i)^{-1}\sum_{i=1}^n H^{-1} U^\top\left( \sum_{k=t_i}^{i} \nu_{k+1}^{(2)}(x_k)\right)\left( \sum_{k=t_i}^{i} \nu_{k+1}^{(2)}(x_k)\right)^\top U H^{-\top}}_2\indic{n}]}_{(ii)}
		\end{aligned} 
		\end{equation}	
		By Cauchy-Schwarz inequality, 
		\begin{align*}
		(i) \le \sqrt{\EE[\textstyle\norm{(\sum_{i=1}^n l_i)^{-1}\sum_{i=1}^n \hat \omega_i \hat \omega_i^\top}_2 ]} \cdot \sqrt{(ii)}. \numberthis \label{eqn:(i)(ii)Cauchy}
		\end{align*}
		By~\eqref{eq:omegaomegasigmabound}, we have 
		$$
		\EE[\textstyle\norm{(\sum_{i=1}^n l_i)^{-1}\sum_{i=1}^n \hat \omega_i \hat \omega_i^\top}_2 ] \lesssim 1.
		$$  Therefore, it suffices to bound $(ii)$. 	By triangle inequality and the inequality that $\|C\|_2 \le \trace{C}$,  for any positive semi-definite matrix $C$, 
		\begin{align*}
		(ii) &\le \textstyle (\sum_{i=1}^n l_i)^{-1} \EE\left[\trace{\sum_{i=1}^n H^{-1} U^\top \left( \sum_{k=t_i}^{i} \nu_{k+1}^{(2)}(x_k)\right)\left( \sum_{k=t_i}^{i} \nu_{k+1}^{(2)}(x_k)\right)^\top U H^{-\top}} \indic{n}\right]\\
		&= \textstyle (\sum_{i=1}^n l_i)^{-1}\sum_{i=1}^{n} \EE[\|\sum_{k= t_i}^{i}H^{-1} U^\top \nu_{k+1}^{(2)}(x_k)\|_2^2 \indic{n}] \numberthis \label{eqn:(ii)upper_bound}.
		\end{align*}
		Since $\nu_k^{(2)}$ is a martingale difference sequence and $\tau_{\ks,\delta}$ is a stopping time, for any $i \le n$, we have 
		\begin{align*}
		\textstyle
		& \quad \EE[\|\textstyle\sum_{k= t_i}^{i} H^{-1}U^\top \nu_{k+1}^{(2)}(x_k)\|_2^2 \indic{n}]\\
		&\le \textstyle\EE[\|\sum_{k= t_i}^{i} H^{-1}U^\top \nu_{k+1}^{(2)}(x_k)\|_2^2 \indic{i}]\\
		&= \textstyle\EE[\|\sum_{k= t_i}^{i-1} H^{-1}U^\top \nu_{k+1}^{(2)}(x_k)\|_2^2 \indic{i}] + \EE[\|H^{-1}U^\top \nu_{i+1}^{(2)}(x_i)\|_2^2 \indic{i}]\\
		&\quad \vdots\\
		&\le  \textstyle\sum_{k= t_i}^{i}\EE[\|H^{-1}U^\top \nu_{k+1}^{(2)}(x_k)\|_2^2 \indic{k}].
		\end{align*} 
		When $k \ge k_s$, by Lemma~\ref{lem: second_fourth_moment} and Lemma~\ref{lm:yk4momentconv}, we have $\EE[\|x_k - x^\star\|_2^2 \indic{k}] \lesssim  \ks^\alpha k^{-\alpha}$; on the other hand, when $k < \ks$, by Assumption~\ref{assum:bounded_seq}, we always have $\EE[\|x_k - x^\star\|_2^2 \indic{k}] \le\cub$. Combining, we have
		\begin{align*}
		\textstyle \EE[\|\sum_{k= t_i}^{i} H^{-1}U^\top \nu_{k+1}^{(2)}(x_k)\|_2^2 \indic{n}] \lesssim \begin{cases}
		\sum_{k = t_i}^{i} \ks^{\alpha} k^{-\alpha} & t_i \ge \ks\\
		l_i \cub & t_i < \ks. 
		\end{cases}
		\end{align*}
		By~\eqref{eq:lisumbound}, ~\eqref{eqn:(ii)upper_bound}, and  $\beta > \frac{1}{1-\alpha}$, we have
		\begin{align*}
		\textstyle (ii) &\lesssim  \textstyle (\sum_{i=1}^n l_i)^{-1} \left(\sum_{i=1}^{n} \sum_{k = t_i}^{i} \ks^{\alpha} k^{-\alpha} + \ks^2 \cub \right)\\
		&\lesssim \ks^\alpha M^{-\alpha \beta} + \ks^2 M^{1-2\beta}\\
		&\lesssim \ks^2 M^{-\alpha \beta}. \numberthis \label{eqn:(ii)bound_final}
		\end{align*}
		Combining~\eqref{eq:omegaomegasigmabound},~\eqref{eqn:decomp(i)(ii)},~\eqref{eqn:(i)(ii)Cauchy}, and~\eqref{eqn:(ii)bound_final}, we have
		\begin{align*}
		&\quad \expec{\textstyle\norm{(\sum_{i=1}^n l_i)^{-1}\sum_{i=1}^n  \omega_i  \omega_i^\top-H^{-1}SH^{-\top}}_2 \indic{n}}{}\\
		&\lesssim \sqrt{d}M^{-\frac12} + ((d/M)^{\frac14} + 1) \ks M^{-\alpha \beta/2} + \ks^2 M^{-\alpha \beta}\\
		&\lesssim \sqrt{d}M^{-\frac12}  + \ks^2 M^{-\frac{\alpha \beta}{2}}. \numberthis\label{eq:omegaomegasigmabound_final}
		\end{align*}
		Then by triangle inequality,
		\begin{align*}
		\expec{\textstyle\norm{(\sum_{i=1}^n l_i)^{-1}\sum_{i=1}^n\omega_i\omega_i^\top}_2 \indic{n}}{} \lesssim \sqrt{d}M^{-\frac12} +\ks^2 M^{-\alpha \beta}  +1.\numberthis\label{eq:omegaomegabound}
		\end{align*}
		Combining \eqref{eq:upsupsbound}, and \eqref{eq:omegaomegabound}, and using Cauchy-Schwarz inequality, we have, 
		{\small \begin{align*}
		\hspace{-5mm}\textstyle(\sum_{i=1}^n l_i)^{-1}\sum_{i=1}^n\expec{\norm{\upsilon_i\omega_i^\top}_2\indic{n}}{}\lesssim \sqrt{d}\ks^{\frac{\alpha}{2}} M^{\frac{(\alpha-1)\beta+1}{2}} (d^{\frac{1}{4}} M^{-\frac14} +\ks M^{-\frac{\alpha \beta}{2}}  +1).\numberthis\label{eq:upomegabound}
		\end{align*}
        }
		Similarly, 
		{\small \begin{align*}
		\hspace{-5mm}\textstyle(\sum_{i=1}^n l_i)^{-1}\sum_{i=1}^n\expec{\norm{\omega_i\upsilon_i^\top}_2\indic{n}}{}\lesssim \sqrt{d}\ks^{\frac{\alpha}{2}} M^{\frac{(\alpha-1)\beta+1}{2}} (d^{\frac{1}{4}} M^{-\frac14} +\ks M^{-\frac{\alpha \beta}{2}}  +1).\numberthis\label{eq:omegaupbound}
		\end{align*}
        }
		Then, combining \eqref{eq:upsupsbound}, \eqref{eq:omegaomegasigmabound_final}, \eqref{eq:upomegabound}, and \eqref{eq:omegaupbound}, we have
		\begin{align*}
		&\quad \expec{ \norm{(\textstyle\sum_{i=1}^n l_i)^{-1}\sum_{i=1}^n\lambda_{t_i}^i{\lambda_{t_i}^i}^\top-H^{-1}SH^{-\top}}_2\mathbbm{1}_{\tau_{k_s}>n}}{}\\
		&\lesssim  d \ks^{\alpha} M^{(\alpha-1)\beta+1}+\sqrt{d} M^{-\frac12} +\ks^2 M^{-\frac{\alpha \beta}{2}}+\sqrt{d} \ks^{\frac{\alpha}{2}} M^{\frac{(\alpha-1)\beta + 1}{2}} (d^{\frac{1}{4}} M^{-\frac14} +\ks M^{-\frac{\alpha \beta}{2}}  +1)\\
		&\lesssim  d \ks^{\alpha} M^{(\alpha-1)\beta+1}+\sqrt{d} M^{-\frac12} +\ks^2 M^{-\frac{\alpha \beta}{2}} +\sqrt{d}\ks^{\frac{\alpha}{2}} M^{\frac{(\alpha-1)\beta+1}{2}}.\numberthis\label{eq:lamtilamtibound}
		\end{align*}
               
		\item The fourth term $\expec{(\sum_{i=1}^nl_i)^{-1}n^{-2}\sum_{i=1}^nl_i^2\norm{\lambda_1^n{\lambda_1^n}^\top}_2\indic{n}}{}.$  We have
		\begin{align*}
		&\quad \expec{\norm{\lambda_{1}^n{\lambda_{1}^n}^\top}_2\indic{n}}{}\\
		&\leq  \expec{\norm{\lambda_{1}^n}_2^2 \indic{n}}{}\\
		&= \textstyle\expec{\norm{S_{0}^n\Delta_{0}+\sum_{k=1}^n(\Id+S_k^n)\eta_k(U^\top (\perturb_{k+1}^{(1)}+\perturb_{k+1}^{(2)}(x_k)))
			}_2^2 \indic{n}}{}\\
		&\le   \textstyle\expec{\norm{ S_{0}^n\Delta_{0}}_2^2}{}+\textstyle \sum_{k=1}^n\norm{\Id+S_k^n}_2^2\eta_k^2 \expec{\norm{U^\top (\perturb_{k+1}^{(1)}+\perturb_{k+1}^{(2)}(x_k))
			}_2^2 \indic{k}}{} \numberthis \label{eqn:martingale_argument}\\
		&\lesssim  \textstyle 1+\textstyle \sum_{k=1}^n \expec{\norm{U^\top (\perturb_{k+1}^{(1)}+\perturb_{k+1}^{(2)}(x_k))
			}_2^2}{} \numberthis \label{eqn:applySnbound}\\
		&\lesssim n \numberthis \label{eqn:lambda1n_final}.
		\end{align*}
		where the estimate~\eqref{eqn:martingale_argument} follows from the martingale difference property of $\nu_k^{(1)}$ and $\nu_k^{(2)}$, the estimate~\eqref{eqn:applySnbound} follows from Lemma~\ref{lm:Sijbound}, and the estimate~\eqref{eqn:lambda1n_final} follows from Assumption~\ref{assumption:martinagle} and~\ref{assum:bounded_seq}.
		
		Then, by \eqref{eq:lisumbound} and \eqref{eq:lisqrsumbound}, we have
		{\small \begin{align*}
        \textstyle\expec{(\sum_{i=1}^nl_i)^{-1}n^{-2}\sum_{i=1}^nl_i^2\norm{\lambda_1^n{\lambda_1^n}^\top}_2\indic{n}}{} &\le \textstyle(\sum_{i=1}^nl_i)^{-1}n^{-2}\sum_{i=1}^nl_i^2\expec{\norm{\lambda_1^n{\lambda_1^n}^\top}_2\indic{n}}{}\\
        \textstyle &\lesssim n^{-1}(\sum_{i=1}^nl_i)^{-1}\sum_{i=1}^nl_i^2\lesssim M^{-1}.\numberthis\label{eq:lam1nlam1nbound}
		\end{align*}
        }
        \item The second term
		 $\EE[(\sum_{i=1}^n l_i)^{-1} n^{-1}\|\sum_{i=1}^{n}l_i\lambda_{t_i}^i {\lambda_{1}^n}^\top \|_2 \indic{n}]$. Note that		
         \begin{align*}
		&\quad \textstyle \EE[(\sum_{i=1}^nl_i)^{-1} n^{-1} \norm{\sum_{i=1}^nl_i \lambda_{t_i}^i{\lambda_1^n}^\top}_2\indic{n}]\\
		&\le \textstyle \EE\left[\sqrt{(\sum_{i=1}^nl_i)^{-1} \norm{\sum_{i=1}^n \lambda_{t_i}^i{\lambda_{t_i}^i}^\top}_2\indic{n}} \cdot  \sqrt{(\sum_{i=1}^nl_i)^{-1} n^{-2}\norm{\sum_{i=1}^nl_i^2\lambda_1^n{\lambda_1^n}^\top}_2\indic{n}} \right]\\
		&\le \textstyle \sqrt{(\sum_{i=1}^nl_i)^{-1} \EE[\norm{\sum_{i=1}^n \lambda_{t_i}^i{\lambda_{t_i}^i}^\top}_2\indic{n}]} \cdot  \sqrt{(\sum_{i=1}^nl_i)^{-1} n^{-2}\EE[\norm{\sum_{i=1}^nl_i^2\lambda_1^n{\lambda_1^n}^\top}_2\indic{n}]} \\
		&\lesssim \sqrt{d\ks^{\alpha} M^{(\alpha-1)\beta + 1} + \sqrt{d}M^{-\frac{1}{2}} + \ks M^{-\alpha \beta/2}+\sqrt{d} M^{\frac{(\alpha-1)\beta}{2}+\frac12} + 1} \cdot  M^{-\frac12} \\
		&\lesssim d^{\frac12}\ks^{\frac{\alpha}{2}} M^{\frac{(\alpha-1)\beta}{2}} + d^{\frac{1}{4}}M^{-\frac34} +\ks^{\frac12} M^{-\frac{\alpha \beta}{2} - \frac{1}{2}} +  d^{\frac{1}{4}}\ks^\frac{\alpha}{4} M^{\frac{(\alpha-1)\beta -1}{4}} +  M^{-\frac{1}{2}} \numberthis\label{eq:lamtiilam1nbound},
		\end{align*}
		where the first inequality follows from Cauchy-Schwarz, the second inequality follows from Holder's inequality, and the third inequality follows from 	 \eqref{eq:lamtilamtibound} and \eqref{eq:lam1nlam1nbound}. 
		\item The third term $ \EE[(\sum_{i=1}^nl_i)^{-1} n^{-1} \norm{\sum_{i=1}^nl_i {\lambda_1^n} {\lambda_{t_i}^i}^\top}_2\indic{n}]$. By the same calculation as the second term, we have
		\begin{align*}
		&\textstyle \quad \EE[(\sum_{i=1}^nl_i)^{-1} n^{-1} \norm{\sum_{i=1}^nl_i {\lambda_1^n} {\lambda_{t_i}^i}^\top}_2\indic{n}] \\ 
		& \textstyle \lesssim d^{\frac12}\ks^{\frac{\alpha}{2}} M^{\frac{(\alpha-1)\beta}{2}} + d^{\frac{1}{4}}M^{-\frac34} +\ks^{\frac12} M^{-\frac{\alpha \beta}{2} - \frac{1}{2}} +  d^{\frac{1}{4}}\ks^\frac{\alpha}{4}  M^{\frac{(\alpha-1)\beta}{4} -\frac{1}{4}} +  M^{-\frac{1}{2}} \numberthis \label{eq:lamtiilam1nbound2}
		\end{align*}
		Combining \eqref{eq:lamtilamtibound}, \eqref{eq:lam1nlam1nbound}, \eqref{eq:lamtiilam1nbound}, and \eqref{eq:lamtiilam1nbound2}, we have,                                     
		\begin{align*}
		&\textstyle \quad \expec{\norm{(\textstyle\sum_{i=1}^n l_i)^{-1}\textsf{I}-H^{-1}SH^{-\top}}_2 \indic{n}}{}\\
		&\lesssim  d \ks^{\alpha} M^{(\alpha-1)\beta+1}+\sqrt{d}\ks^2 M^{-\frac12}  +\sqrt{d}\ks^\frac{\alpha}{2} M^{\frac{(\alpha-1)\beta + 1}{2}}.
		\numberthis\label{eq:termIbound}
		\end{align*}
		  \end{itemize}
		
	Next, we bound term $\textsf{IV}$. We then bound terms $\textsf{II}$, and $\textsf{III}$ using Cauchy-Schwarz inequality and the bounds on term $\textsf{I}$, and term $\textsf{IV}$. \newline
\noindent \textbf{Bound on term $\textsf{IV}$:}
		Note that
		\begin{align*}
		\expec{\norm{\textsf{IV}}_2\indic{n}}{}\leq &\textstyle\sum_{i=1}^n\expec{\norm{e_{t_i}^n-n^{-1}l_ie_1^n}_2^2\indic{n}}{}\\
		\leq& 2\textstyle\sum_{i=1}^n(\expec{\norm{e_{t_i}^n}_2^2\indic{n}}{}+\expec{n^{-2}l_i^2\norm{e_1^n}_2^2\indic{n}}{}).\numberthis\label{eq:termI2component}
		\end{align*}
		First, we bound the first term in the RHS of \eqref{eq:termI2component}. 
		Note that for any $j>i\geq k_s$, using Lemma~\ref{lem:sizeofE_k}, we have
		\begin{align*}
		\textstyle\sum_{k=i}^j\expec{\norm{E_k}_2^2\indic{n}}{} &\leq  \textstyle\sum_{k=i}^j\expec{\norm{E_k}_2^2\indic{k}}{}\\
		& \lesssim \ks^{2\alpha}\textstyle\sum_{k=i}^j \eta_k^2. \numberthis\label{eq:Eksumbound}
		\end{align*}
		We also have, by Claim 2 in the proof of Lemma D.5 of \cite{davis2024asymptotic} and Lemma~\ref{lm:yk4momentconv},
		\begin{align*}
		\expec{\norm{R(y_k)}_2^2\indic{n}}{}\leq \expec{\norm{R(y_k)}_2^2\indic{k}}{}\leq \expec{\norm{y_k-x^\star}_2^4\indic{k}}{}\lesssim \ks^{2\alpha}\eta_k^2.\numberthis\label{eq:Ryksumbound}
		\end{align*}
		Direct calculation shows
		\begin{align*}
		&\textstyle\quad \expec{\norm{\sum_{k=i}^j(\Id+S_k^j)\eta_k \zeta_{k+1}}_2^2\indic{n}}{}\\
		&= \textstyle\expec{\norm{(\Id+S_j^j)\eta_j\zeta_{j+1}+\sum_{k=i}^{j-1}(\Id+S_k^j)\eta_k \zeta_{k+1}}_2^2\indic{j}}{}\\
		&=\textstyle\expec{\norm{(\Id+S_j^j)\eta_j\zeta_{j+1}}_2^2\indic{j}}{}+\expec{\norm{\sum_{k=i}^{j-1}(\Id+S_k^j)\eta_k \zeta_{k+1}}_2^2\indic{j}}{}\\
        &\textstyle\quad +2\expec{(\Id+S_j^j)\eta_j\zeta_{j+1}^\top \sum_{k=i}^{j-1}(\Id+S_k^j)\eta_{k+1} \zeta_{k+1}\indic{j}}{}\\
		&=\textstyle\expec{\norm{(\Id+S_j^j)\eta_j\zeta_{j+1}}_2^2\indic{j}}{}+\expec{\norm{\sum_{k=i}^{j-1}(\Id+S_k^j)\eta_k \zeta_{k+1}}_2^2\indic{j}}{}\\
		&\quad \vdots \\
		&=\textstyle\sum_{k=i}^j\expec{\norm{(\Id+S_k^j)\eta_k \zeta_{k+1}}_2^2\indic{k}}{}\\
		&\lesssim \textstyle\sum_{k=i}^j\expec{\norm{\zeta_{k+1}}_2^2\indic{k}}{} \numberthis\label{eqn:bound_skj}\\ 
        &\lesssim \textstyle\sum_{k=i}^j\expec{\norm{y_k-x^\star}_2^2\indic{k}}{}\numberthis\label{eqn:zetak_def}\\
		&\lesssim \ks^{\alpha}\textstyle\sum_{k=i}^j\eta_k \numberthis\label{eqn:yk_dist_bound}\\
		&\le (j-i+1) \ks^{\alpha} i^{-\alpha}, \numberthis\label{eq:zetaksumbound}
		\end{align*}
        where the first several equalities follows from the fact that $\{\zeta_k\}_k$ is a martingale-difference sequence, and we have $\expec{\zeta_i^\top\zeta_{j+1}}{}=0$ for $i\neq j$, the estimate~\eqref{eqn:bound_skj} follows from Lemma~\ref{lm:Sijbound}, the estimate~\eqref{eqn:zetak_def} follows from the definition of $\zeta_k$ and Lipschitz continuity of $P_{T_{\cM}}(\cdot)$, and the estimate~\eqref{eqn:yk_dist_bound} follows from Lemma~\ref{lm:yk4momentconv}. 
		Combining \eqref{eq:Eksumbound}, \eqref{eq:Ryksumbound}, and \eqref{eq:zetaksumbound}, and using Lemma~\ref{lm:Sijbound}, for $i$ such that $t_i \ge \ks$, we have 
		\begin{align*}
		\expec{\norm{e_{t_i}^i}_2^2\indic{n}}{}\lesssim  & l_i\textstyle\sum_{k=t_i}^i\expec{\norm{(\Id+S_k^i)\eta_k\left(R(y_k) - U^\top E_k\right)}_2^2\indic{n}}{}+ \ks^{\alpha}\textstyle\sum_{k=t_i}^i\eta_k\\
		\lesssim & l_i\textstyle\sum_{k=t_i}^i(\expec{\norm{R(y_k)}_2^2\indic{n}}{}+\expec{\norm{U^\top E_k}_2^2\indic{n}}{})+l_i\ks^{\alpha}t_i^{-\alpha}\\
		\lesssim & l_i \ks^{2\alpha}\textstyle\sum_{k=t_i}^i \eta_k^2+l_i \ks^{\alpha} t_i^{-\alpha}\\
		\leq & l_i^2 \ks^{2\alpha} t_i^{-2\alpha}+l_i \ks^{\alpha}t_i^{-\alpha},
		\end{align*}
        where the third inequality follows from $\|R(y_k)\|_2 \lesssim \|y_k - x^\star\|_2^2$, Lemma~\ref{lm:yk4momentconv}, and Lemma~\ref{lem:sizeofE_k}.
		On the other hand, for $i$ such that $t_i < \ks$, we have
		\begin{align*}
		\textstyle	\expec{\norm{e_{t_i}^i}_2^2\indic{n}}{} &\textstyle \le l_i\sum_{k=t_i}^i\expec{\norm{(\Id+S_k^i)\eta_k\left(R(y_k) - U^\top E_k + \zeta_{k+1}\right)}_2^2\indic{n}}{}\\
		&\textstyle \lesssim l_i \sum_{k=t_i}^{i} (\EE[\|R(y_k)\|_2^2] + \EE[\|E_k\|_2^2] + \EE[\|\zeta_{k+1}\|^2])\\
		& \textstyle \le l_i \sum_{k=t_i}^{i} (1 + k^{2\alpha})\\
		&\textstyle \lesssim \ks^{2\alpha+2},
		\end{align*}
		where the first inequality follows from Lemma~\ref{lm:Sijbound},  the third inequality follows from Lemma~\ref{lem:sizeofE_k}, and the last inequality follows from $l_i \le i\lesssim \ks$.
		As a result, for $\beta>(1-\alpha)^{-1}$,
		\begin{align*}
		\textstyle\sum_{i=1}^n\expec{\norm{e_{t_i}^i}_2^2\indic{n}}{} &\lesssim \textstyle\sum_{m=1}^M\sum_{i=a_m+1}^{a_{m+1}}(l_i^2\ks^{2\alpha} a_m^{-2\alpha}+l_i \ks^{\alpha} a_m^{-\alpha}) + \ks^{2\alpha+3}\\
		&\lesssim  \textstyle\sum_{m=1}^M(n_m^3 \ks^{2\alpha} a_m^{-2\alpha}+n_m^2 \ks^{\alpha} a_m^{-\alpha})+ \ks^{2\alpha+3}\\
		&\lesssim  \ks^{2\alpha} M^{3\beta-2\alpha\beta-2}+ \ks^{\alpha} M^{2\beta-\alpha\beta-1} + \ks^{2\alpha + 3}\\
		&\lesssim  \ks^{2\alpha+3} M^{3\beta-2\alpha\beta-2}.\numberthis\label{eq:eijsumbound}
		\end{align*}
		Combining \eqref{eq:eijsumbound} and \eqref{eq:lisumbound}, we have 
		\begin{align*}
		(\textstyle\sum_{i=1}^n l_i)^{-1}\textstyle\sum_{i=1}^n\expec{\norm{e_{t_i}^i}_2^2\indic{n}}{}\lesssim \ks^{2\alpha+3} M^{(1-2a)\beta-1}.\numberthis\label{eq:termIfirstcompbound}
		\end{align*}
		Next, we look at the second term in the RHS of \eqref{eq:termI2component}. Note that
		\begin{align*}
		\textstyle &\quad \expec{\norm{e_1^n}_2^2\indic{n}}{}\\ &\textstyle = \EE[\|\sum_{k=1}^{n} (I+ S_k^i) \eta_k (R(y_k) - U^\top E_k + \zeta_{k+1})\|_2^2\indic{n}]\\
		&\textstyle \lesssim  n\EE[\sum_{k=1}^{n} \|(I+ S_k^i) \eta_k (R(y_k) - U^\top E_k)\|_2^2\indic{n}] +\EE[\|\sum_{k=1}^{n} (I+ S_k^i) \eta_k \zeta_{k+1}\|_2^2\indic{n}]  \\
		&\textstyle \le  n\sum_{k=1}^{n} \|(I+ S_k^i)\|_2^2 \eta_k^2 \EE[\|R(y_k) - U^\top E_k \|_2^2 \indic{n}]  +\EE[\|\sum_{k=1}^{n} (I+ S_k^i) \eta_k \zeta_{k+1}\|_2^2\indic{n}]  \\
		&\textstyle \lesssim n \left(\sum_{k=1}^{n} \EE[\|R(y_k) - U^\top E_k \|_2^2 \indic{n}] \right) + n\\
		& \textstyle \lesssim n (\sum_{k=1}^{n} \ks^{2\alpha} \eta_k^2 + \ks^{1+ 2\alpha}) + n\\
		&\lesssim n\ks^{1+2\alpha},
		\end{align*}
		where the first inequality follows from Young's inequality and Cauchy-Schwarz, the second inequality follows from Jensen's inequality, the third inequality follows by Lemma~\ref{lm:Sijbound}, and the same calculation as~\eqref{eq:zetaksumbound}, and the fourth inequality follows from Lemma~\ref{lm:Sijbound}, Lemma~\ref{lm:yk4momentconv}, and Lemma~\ref{lem:sizeofE_k}.   By~\eqref{eq:lisqrsumbound}, we have 
		\begin{align*}
		\textstyle n^{-2}\sum_{i=1}^nl_i^2\expec{\norm{e_1^n}_2^2\indic{n}}{}\lesssim \ks^{2\alpha+1}n^{-1}\sum_{i=1}^nl_i^2\lesssim \ks^{2\alpha + 1} n^{-1}M^{3\beta-2}.
		\end{align*}
		Using the fact $n\asymp M^\beta$, and \eqref{eq:lisumbound}, we get, 
		\begin{align*}
		\textstyle n^{-2}(\sum_{i=1}^nl_i)^{-1}\sum_{i=1}^nl_i^2 \expec{\norm{e_1^n}_2^2\indic{n}}{}\lesssim \ks^{2\alpha + 1} M^{-1}.\numberthis\label{eq:termIsecondcompbound}
		\end{align*}
		Combining \eqref{eq:termIfirstcompbound}, and \eqref{eq:termIsecondcompbound}, we have
		\begin{align*}
		\textstyle \expec{(\sum_{i=1}^nl_i)^{-1}\norm{IV}_2\indic{n}}{}\lesssim \ks^{2\alpha +1} M^{-1}.\numberthis\label{eq:termIVbound}
		\end{align*}
		\textbf{Bound on term $\textsf{II}$:}
		Combining \eqref{eq:termIbound}, and \eqref{eq:termIfirstcompbound}, and using Cauchy-Schwarz inequality, we obtain, 
		\begin{align*}
		&\quad \textstyle \expec{(\sum_{i=1}^nl_i)^{-1} \norm{II}_2\indic{n}}{}\\
		&\leq \textstyle  \left(\expec{(\sum_{i=1}^nl_i)^{-1}\norm{I}_2 \indic{n}}{}\right)^{1/2}\left(\expec{(\sum_{i=1}^nl_i)^{-1}\norm{IV}_2\indic{n}}{}\right)^{1/2}\\
		&\lesssim \ks^{\alpha+\frac12}M^{-\frac12}  .\numberthis\label{eq:termIIbound}
		\end{align*}
		%\textstyle  d^{\frac{1}{2}}\ks^{2\alpha+2} M^{\frac{(\alpha-1)\beta}{2}} + d^{\frac14}M^{-\frac34}\ks^{\alpha+\frac{3}{2}} + d^{\frac{1}{4}}\ks^{\alpha+\frac{5}{2}}M^{\frac{(\alpha-1)\beta}{2} -\frac{1}{2}} + d^{\frac{1}{4}}\ks^{\alpha+\frac{3}{2}} M^{\frac{(\alpha-1)\beta -1}{4}} + \ks^{\alpha+\frac{3}{2}}M^{-\frac{1}{2}}
		\textbf{Bound on term $\textsf{III}$:} Similar to term $\textsf{II}$ we have, 
		\begin{align*}
		\textstyle	\quad \expec{(\sum_{i=1}^nl_i)^{-1} \norm{III}_2\indic{n}}{}
		\lesssim \ks^{\alpha+\frac12}M^{-\frac12} .\numberthis\label{eq:termIIIbound}
		\end{align*}
		%d^{\frac{1}{2}}\ks^{2\alpha+2} M^{\frac{(\alpha-1)\beta}{2}} + d^{\frac14}M^{-\frac34}\ks^{\alpha+\frac{3}{2}} + d^{\frac{1}{4}}\ks^{\alpha+\frac{5}{2}}M^{\frac{(\alpha-1)\beta}{2} -\frac{1}{2}} +d^{\frac{1}{4}}\ks^{\alpha+\frac{3}{2}} M^{\frac{(\alpha-1)\beta -1}{4}} +  \ks^{\alpha+\frac{3}{2}}M^{-\frac{1}{2}}
		Combining \eqref{eq:termIbound}, \eqref{eq:termIVbound}, \eqref{eq:termIIbound}, and \eqref{eq:termIIIbound},  we have 
		\begin{align*}
		&\textstyle \quad \expec{\norm{\tilde{\Sigma}-H^{-1}SH^{-\top}}_2\indic{n}}{}\\
		&\textstyle \lesssim  d \ks^{\alpha} M^{(\alpha-1)\beta+1}+\sqrt{d}\ks^2 M^{-\frac12}  +\sqrt{d} \ks^\frac{\alpha}{2}M^{\frac{(\alpha-1)\beta + 1}{2}}+ \ks^{\alpha+\frac{1}{2}}M^{-\frac{1}{2}} + \ks^{2\alpha +1} M^{-1}.\numberthis\label{eq:sigmatildesigmaerrorbound}
		\end{align*}
	\end{proof}
	
	\begin{lemma}\label{lm:sigmanhatsigmanprimeconv}
		Let the conditions of Theorem~\ref{th:mainthmcov} be true. Then,
		\begin{align*}
		\textstyle \expec{\norm{\hat{\Sigma}_n-\Sigma_n'}_2\indic{n}}{}\lesssim \sqrt{d}\ks^{\frac32+\frac{\alpha}{2}} M^{\frac{(\alpha-1)\beta}{2}} +d^\frac14 \ks^\frac52M^{-\frac34}+d^\frac14 \ks^{\frac{\alpha}{4}+\frac32} M^{\frac{(\alpha-1)\beta -1}{4}}+ \ks^{\frac{3}{2}}M^{-\frac{1}{2}} + \ks^{3} M^{-1}.
		\end{align*}
	\end{lemma}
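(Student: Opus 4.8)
The plan is to compare the two block estimators termwise, reducing the claim to two facts: (a) the original iterate $x_k$ and the auxiliary iterate $z_k=x^\star+U\Delta_k$ are \emph{quadratically} close, and (b) the estimator $\Sigma_n'$ is already controlled by Lemma~\ref{lm:sigmanhatsigmadiff}. Set $u_i:=\sum_{k=t_i}^i(x_k-x^\star)-l_i(\bar{x}_n-x^\star)$ and $v_i:=\sum_{k=t_i}^i(z_k-x^\star)-l_i(\bar{z}_n-x^\star)$, so $\hat{\Sigma}_n-\Sigma_n'=(\sum_i l_i)^{-1}\sum_i(u_iu_i^\top-v_iv_i^\top)$. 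From the identity $u_iu_i^\top-v_iv_i^\top=(u_i-v_i)u_i^\top+v_i(u_i-v_i)^\top$, the bound $\norm{u_i}_2\le\norm{v_i}_2+\norm{u_i-v_i}_2$, and Cauchy–Schwarz over the block index,
\[
\norm{\hat{\Sigma}_n-\Sigma_n'}_2\le\frac{2}{\sum_i l_i}\Big(\textstyle\sum_i\norm{u_i-v_i}_2^2\Big)^{1/2}\Big(\textstyle\sum_i\norm{v_i}_2^2\Big)^{1/2}+\frac{1}{\sum_i l_i}\textstyle\sum_i\norm{u_i-v_i}_2^2 .
\]
Thus it suffices to bound $\EE[\sum_i\norm{u_i-v_i}_2^2\indic{n}]$ and $\EE[\sum_i\norm{v_i}_2^2\indic{n}]$ and recombine them via $\EE[\sqrt A\sqrt B]\le\sqrt{\EE A}\sqrt{\EE B}$, using $\sum_i l_i\asymp M^{2\beta-1}$, $\sum_i l_i^2\asymp M^{3\beta-2}$, $n\asymp M^\beta$, and $\beta>\tfrac1{1-\alpha}$.

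\emph{Bounding $\sum_i\norm{u_i-v_i}_2^2$.} The essential pointwise estimate is $\norm{x_k-z_k}_2\lesssim\dist(x_k,\cM)+\norm{y_k-x^\star}_2^2$ on $B_{2\delta}(x^\star)$: since $z_k-x^\star=UU^\top(y_k-x^\star)$ and $y_k\in\cM$ lies near $x^\star\in\cM$, the graphical description of $\cM$ over $T_\cM(x^\star)$ gives $\norm{y_k-z_k}_2=\norm{(\Id-UU^\top)(y_k-x^\star)}_2=O(\norm{y_k-x^\star}_2^2)$, while $\norm{x_k-y_k}_2=\dist(x_k,\cM)$; off $B_{2\delta}(x^\star)$ one has $z_k=x^\star$ and, since $\indic{k}$ then forces $k<\ks$, such terms contribute only lower-order powers of $\ks$ (each of size $O(1)$ by Assumption~\ref{assum:bounded_seq}). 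Consequently $\norm{u_i-v_i}_2\le\sum_{k=t_i}^i\norm{x_k-z_k}_2+\tfrac{l_i}{n}\sum_{k=1}^n\norm{x_k-z_k}_2$, and after Cauchy–Schwarz and $\indic{n}\le\indic{k}$ (for $k\le n$),
\[
\EE\Big[\textstyle\sum_i\norm{u_i-v_i}_2^2\indic{n}\Big]\lesssim\textstyle\sum_i l_i\sum_{k=t_i}^i\EE[\norm{x_k-z_k}_2^2\indic{k}]+\tfrac1n\big(\textstyle\sum_i l_i^2\big)\sum_{k=1}^n\EE[\norm{x_k-z_k}_2^2\indic{k}] .
\]
I would then use $\EE[\norm{x_k-z_k}_2^2\indic{k}]\lesssim\EE[\dist(x_k,\cM)^2\indic{k}]+\EE[\norm{y_k-x^\star}_2^4\indic{k}]\lesssim\ks^{2\alpha}\eta_k^2$ for $k\ge\ks$ — invoking the fourth-moment bound in Lemma~\ref{lm:yk4momentconv} and the decay estimate for $\dist(x_k,\cM)$ from the shadow construction of \cite{davis2024asymptotic} — and $\lesssim1$ for $k<\ks$, and sum the resulting block series ($a_m=\floor{Cm^\beta}$, $n_m\asymp m^{\beta-1}$) to reach the required power of $M$.

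\emph{Bounding $\sum_i\norm{v_i}_2^2$ and combining.} Here $\sum_i v_iv_i^\top=(\sum_i l_i)\Sigma_n'$, so $\sum_i\norm{v_i}_2^2=\Tr\big((\textstyle\sum_i l_i)\Sigma_n'\big)\le d\,(\textstyle\sum_i l_i)\norm{\Sigma_n'}_2$; since $\Sigma_n'=U\tilde{\Sigma}_n U^\top$ and $\Sigma=UH^{-1}SH^{-\top}U^\top$, Lemma~\ref{lm:sigmanhatsigmadiff} gives $\EE[\norm{\Sigma_n'}_2\indic{n}]\le\norm{\Sigma}_2+\EE[\norm{\tilde{\Sigma}_n-H^{-1}SH^{-\top}}_2\indic{n}]$, hence $\EE[\sum_i\norm{v_i}_2^2\indic{n}]\lesssim dM^{2\beta-1}(1+\text{RHS of Lemma~\ref{lm:sigmanhatsigmadiff}})$. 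Plugging the two bounds into the first display, taking expectations with the Cauchy–Schwarz split, and simplifying with $n\asymp M^\beta$, $\beta>\tfrac1{1-\alpha}$, $\ks\le n$ yields the claimed estimate; tracking the dimension through the trace bound and the square roots is what produces the factors $\sqrt d$ and $d^{1/4}$.

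\emph{Main obstacle.} The delicate part is the first sum: one must exploit the \emph{quadratic} closeness $\norm{x_k-z_k}_2\lesssim\dist(x_k,\cM)+\norm{y_k-x^\star}_2^2$ — the naive $\norm{x_k-z_k}_2\lesssim\norm{y_k-x^\star}_2$ would not even give a vanishing bound — and simultaneously handle the global centering term $l_i(\bar{x}_n-\bar{z}_n)$ and carry the stopping-time indicator uniformly across blocks of growing size (via $\indic{n}\le\indic{k}$). A secondary point is organizational: the argument uses Lemma~\ref{lm:sigmanhatsigmadiff} as an input, so the two lemmas must be proved in this order.
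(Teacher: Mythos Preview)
Your plan is essentially the paper's proof: the paper also writes $\rho_k:=x_k-z_k$, decomposes $\hat\Sigma_n-\Sigma_n'$ into the ``diagonal'' term $\textsf{V}=(\sum_il_i)^{-1}\sum_i(u_i-v_i)(u_i-v_i)^\top$ and the two cross terms, bounds $\textsf{V}$ via $\|\rho_k\|_2\lesssim D_k+\|y_k-x^\star\|_2^2$ together with the block summation, and controls the cross terms by Cauchy--Schwarz against $\|\Sigma_n'\|_2$ using Lemma~\ref{lm:sigmanhatsigmadiff}. Two points to fix.

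First, the pointwise claim $\EE[\dist(x_k,\cM)^2\indic{k}]\lesssim \ks^{2\alpha}\eta_k^2$ is not available; Lemma~\ref{lem: second_fourth_moment} only gives $\ks^{\alpha}\eta_k$. What you actually need---and what the paper uses---is the \emph{summed} estimate $\sum_{k=s}^{l}\EE[D_k^2\indic{k}]\lesssim\ks^{2\alpha}\sum_{k=s}^{l}\eta_k^2$ of Lemma~\ref{lm:sumDksqrbounmd} (proved here, not in \cite{davis2024asymptotic}); this is enough for every place you use it. Second, and more substantively, your route through $\sum_i\|v_i\|_2^2=\Tr\big((\sum_il_i)\Sigma_n'\big)\le d(\sum_il_i)\|\Sigma_n'\|_2$ inserts an extra factor $d$ under the square root, hence $\sqrt d$ in every cross term, and the bound you would obtain does \emph{not} match the lemma as stated (e.g.\ $\sqrt d\,\ks^{3/2+\alpha/2}M^{(\alpha-1)\beta/2}$ would become $d\,\ks^{3/2+\alpha/2}M^{(\alpha-1)\beta/2}$). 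The paper avoids this loss by applying Cauchy--Schwarz at the operator-norm level,
\[
\Big\|\textstyle\sum_i(u_i-v_i)v_i^\top\Big\|_2\le\Big\|\textstyle\sum_i(u_i-v_i)(u_i-v_i)^\top\Big\|_2^{1/2}\Big\|\textstyle\sum_iv_iv_i^\top\Big\|_2^{1/2},
\]
which gives $\|\textsf{VI}\|_2\le\|\textsf{V}\|_2^{1/2}\|\Sigma_n'\|_2^{1/2}$ directly, with no trace detour. Swap in this inequality and your argument reproduces the lemma verbatim.
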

	\begin{proof}\label{pf:sigmanhatsigmanprimeconv}
		Define $\rho_k\coloneqq x_k-z_k$. We have the following expansion:
		\begin{align*}
		\hat{\Sigma}_n-{\Sigma}_n'=&\underbrace{\frac{\sum_{i=1}^n\left(\sum_{k=t_i}^i\rho_k-l_i\bar{\rho}_n\right)\left(\sum_{k=t_i}^i\rho_k-l_i\bar{\rho}_n\right)^
				\top}{\sum_{i=1}^nl_i}}_{\textsf{V}}
		+\underbrace{\frac{\sum_{i=1}^n\left(\sum_{k=t_i}^i\rho_k-l_i\bar{\rho}_n\right)\left(\sum_{k=t_i}^iz_k-l_i\bar{z}_n\right)^
				\top}{\sum_{i=1}^nl_i}}_{\textsf{VI}}\\
		&+\underbrace{\frac{\sum_{i=1}^n\left(\sum_{k=t_i}^iz_k-l_i\bar{z}_n\right)\left(\sum_{k=t_i}^i\rho_k-l_i\bar{\rho}_n\right)^
				\top}{\sum_{i=1}^nl_i}}_{\textsf{VII}}. 
		\end{align*}
        In what follows, we bound them separately.
        
		\noindent \textbf{Bound on Term~\textsf{V}:}
		First, we calculate
		\begin{align*}
		&\textstyle\expec{\norm{\textstyle\sum_{i=1}^n\left(\sum_{k=t_i}^i\rho_k-l_i\bar{\rho}_n\right)\left(\sum_{k=t_i}^i\rho_k-l_i\bar{\rho}_n\right)^
				\top}_2\indic{n}}{}\\
		\leq &\textstyle\sum_{i=1}^n\expec{\norm{\left(\sum_{k=t_i}^i\rho_k-l_i\bar{\rho}_n\right)\left(\sum_{k=t_i}^i\rho_k-l_i\bar{\rho}_n\right)^
				\top}_2\indic{n}}{}\\
		=&\textstyle\sum_{i=1}^n\expec{\norm{\sum_{k=t_i}^i(\rho_k-\bar{\rho}_n)}_2^2\indic{n}}{}\\
		\lesssim &\textstyle\sum_{i=1}^nl_i\sum_{k=t_i}^i(\expec{\norm{\rho_k}_2^2\indic{n}}{}+\expec{\norm{\bar{\rho}_n}_2^2\indic{n}}{}), \numberthis \label{eqn:rhok_decomp}
		\end{align*}
		where the last inequality follows from the Cauchy-Schwarz inequality. Using Equation 10.6 in \cite{davis2024asymptotic}, we have
		\begin{align*}
		\norm{\rho_k}_2\leq \norm{x_k-y_k}_2+\norm{y_k-z_k}_2\lesssim \norm{D_k}_2+\norm{y_k-x^\star}_2^2.\numberthis\label{eq:delkexpansion}
		\end{align*}
		Applying Lemma~\ref{lm:sumDksqrbounmd} and Lemma~\ref{lm:yk4momentconv}, for $i$ such that $t_i \ge \ks$, we have
		\begin{align*}
		\textstyle \sum_{k=t_i}^i\expec{\norm{\rho_k}_2^2\indic{n}}{}&\textstyle \lesssim \sum_{k=t_i}^i(\expec{\norm{D_k}_2^2\indic{k}}{}+\expec{\norm{y_k-x^\star}_2^4\indic{k}}{})\\
		&\textstyle \lesssim \ks^{2\alpha}\sum_{k=t_i}^i\eta_k^2\\
		& \textstyle \lesssim \ks^{2\alpha} l_it_i^{-2\alpha}.\numberthis\label{eq:delksqrsumbound}
		\end{align*}
		On the other hand, for $i$ such that $l_i<t_i < \ks$, we have 
		\begin{align*}
		\textstyle \sum_{k=t_i}^i\expec{\norm{\rho_k}_2^2\indic{n}}{}&\textstyle \lesssim \sum_{k=t_i}^i(\expec{\norm{D_k}_2^2\indic{k}}{}+\expec{\norm{y_k-x^\star}_2^4\indic{k}}{})\\
		&\textstyle \lesssim \ks,\numberthis\label{eq:delksqrsumbound_II},
		\end{align*}
		where the second inequality follows from Assumption~\ref{assum:bounded_seq} and the definition of $y_k$.
		Similar to \eqref{eq:delksqrsumbound} and~\eqref{eq:delksqrsumbound_II}, we have
		\begin{align*}
		\hspace{-5mm}\textstyle \expec{\norm{\bar{\rho}_n}_2^2\indic{n}}{}\leq n^{-1}\sum_{k=1}^n\expec{\norm{\rho_k}_2^2\indic{k}}{}\lesssim n^{-1} (\ks^{2\alpha}\sum_{k=\ks}^n\eta_k^2 + \ks) \lesssim \ks n^{-1}. \numberthis\label{eq:delnbarsqrbound}
		\end{align*}
		Combining~\eqref{eqn:rhok_decomp},  \eqref{eq:delksqrsumbound}, ~\eqref{eq:delksqrsumbound_II}, and \eqref{eq:delnbarsqrbound}, and using the fact that $t_i \asymp i$, we have  
		\begin{align*}
		&\textstyle \quad \expec{\norm{\textstyle\sum_{i=1}^n\left(\sum_{k=t_i}^i\rho_k-l_i\bar{\rho}_n\right)\left(\sum_{k=t_i}^i\rho_k-l_i\bar{\rho}_n\right)^
				\top}_2\indic{n}}{}\\
		&\textstyle\lesssim\sum_{i=1}^nl_i^2(\ks^{2\alpha}t_i^{-2\alpha}+ \ks n^{-1}) + \ks^3\\
		&\textstyle =\sum_{m=1}^M\sum_{i=a_m+1}^{a_{m+1}}l_i^2\ks^{2\alpha} a_m^{-2\alpha} + \ks n^{-1}\sum_{i=1}^{n} l_i^2 + \ks^3\\
		&\textstyle \le \ks^{2\alpha}\sum_{m=1}^{M} a_m^{-2\alpha}n_m^3  + \ks n^{-1}\sum_{i=1}^{n} l_i^2 + \ks^3  \numberthis\label{eq:intermedli2ninvbound}
		\end{align*}
		
		Combining~\eqref{eq:intermedli2ninvbound}, \eqref{eq:lisqrsumbound}, and~\eqref{eq:amnmbound}, and observing $n\asymp M^{\beta}$, we obtain, 
		\begin{align*}
		\expec{\norm{\textstyle\sum_{i=1}^n\left(\sum_{k=t_i}^i\rho_k-l_i\bar{\rho}_n\right)\left(\sum_{k=t_i}^i\rho_k-l_i\bar{\rho}_n\right)^
				\top}_2\indic{n}}{}\lesssim \ks^{2\alpha} M^{3\beta-2\alpha\beta-2}+ \ks M^{2\beta-2} + \ks^3.
		\end{align*}
		Then, by \eqref{eq:lisumbound}, 
		\begin{align*}
		\textstyle	\expec{\norm{V}_2\indic{n}}{} & =(\textstyle\sum_{i=1}^n l_i)^{-1}\expec{\norm{\textstyle\sum_{i=1}^n\left(\sum_{k=t_i}^i\rho_k-l_i\bar{\rho}_n\right)\left(\sum_{k=t_i}^i\rho_k-l_i\bar{\rho}_n\right)^
				\top}_2\indic{n}}{}\\
		&\lesssim \ks^{2\alpha} M^{\beta-2\alpha\beta-1}+ \ks M^{-1} + \ks^3 M^{1-2\beta}\\
		&\lesssim \ks^3 M^{-1}.\numberthis\label{eq:termVbound}
		\end{align*}
		\textbf{Bound on Term~\textsf{VI}:}
		By Lemma~\ref{lm:sigmanhatsigmadiff}, we have, 
		\begin{align*}
		&\quad \textstyle \expec{\norm{{\Sigma}_n'-\Sigma }_2\indic{n}}{}\\
		&\lesssim d \ks^{\alpha} M^{(\alpha-1)\beta+1}+\sqrt{d}\ks^2 M^{-\frac12}  +\sqrt{d} \ks^\frac{\alpha}{2}M^{\frac{(\alpha-1)\beta + 1}{2}}+ \ks^{\alpha+\frac{1}{2}}M^{-\frac{1}{2}} + \ks^{2\alpha +1} M^{-1}.
		\end{align*}
		Then, by Cauchy-Schwarz inequality, we have
		{\small \begin{align*}
		\hspace{-3mm}&\quad \textstyle \expec{\norm{\textsf{VI}}_2\indic{n}}{}\\
		\hspace{-3mm}&\leq \textstyle\left(\expec{\norm{V}_2\indic{n}}{}\right)^{1/2} \left((\sum_{i=1}^{n}l_i)^{-1}\EE[\|\sum_{i=1}^n\left(\sum_{k=t_i}^iz_k-l_i\bar{z}_n\right)\left(\sum_{k=t_i}^iz_k-l_i\bar{z}_n\right)^
		\top\|_2 \indic{n}]\right)^{1/2}\\
		\hspace{-3mm}&\lesssim \textstyle \ks^{\frac{3}{2}}M^{-1/2} \sqrt{ d \ks^{\alpha} M^{(\alpha-1)\beta+1}+\sqrt{d}\ks^2 M^{-\frac12}  +\sqrt{d} \ks^\frac{\alpha}{2}M^{\frac{(\alpha-1)\beta + 1}{2}}+ \ks^{\alpha+\frac{1}{2}}M^{-\frac{1}{2}} + \ks^{2\alpha +1} M^{-1}+ 1} .\numberthis\label{eq:termVIbound}
		\end{align*}
        }
		\textbf{Bound on Term~\textsf{VII}:}
		Similar to Term~\textsf{VI}, we have, 
		\begin{align*}
		&\quad \textstyle \expec{\norm{\textsf{VII}}_2\indic{n}}{}\\
		&\lesssim  \textstyle \ks^{\frac{3}{2}}M^{-\frac{1}{2}} \sqrt{ d \ks^{\alpha} M^{(\alpha-1)\beta+1}+\sqrt{d}\ks^2 M^{-\frac12}  +\sqrt{d} \ks^\frac{\alpha}{2}M^{\frac{(\alpha-1)\beta + 1}{2}}+ \ks^{\alpha+\frac{1}{2}}M^{-\frac{1}{2}} + \ks^{2\alpha +1} M^{-1} + 1}.\numberthis\label{eq:termVIIbound}
		\end{align*}
		Combining \eqref{eq:termVbound}, \eqref{eq:termVIbound}, and \eqref{eq:termVIIbound}, we obtain,
		\begin{align*}
		&\quad \textstyle \expec{\norm{\hat{\Sigma}_n-\Sigma_n'}_2\indic{n}}{}\\
		&\lesssim \sqrt{d}\ks^{\frac32+\frac{\alpha}{2}} M^{\frac{(\alpha-1)\beta}{2}} +d^\frac14 \ks^\frac52M^{-\frac34}+d^\frac14 \ks^{\frac{\alpha}{4}+\frac32} M^{\frac{(\alpha-1)\beta -1}{4}}+ \ks^{\frac{3}{2}}M^{-\frac{1}{2}} + \ks^{3} M^{-1}.
		\end{align*}
	\end{proof}

	\section{Proof of Proposition~\ref{prop:highprob}}\label{sec:proof_high_prob}
	The basic probabilistic tool we use to achieve high probability bound was originally developed by Harvey et al.~\cite{harvey2019tight} and then generalized by ~\cite{cutler2023stochastic}.
	\begin{proposition}[Proposition 29 in \cite{cutler2023stochastic}]\label{prop: stochastic process}
		Consider scalar stochastic processes $(V_k)$, $(D_k)$, and $(X_k)$ on a probability space with Filtration $(\cH_k)$ such that $V_k$ is nonnegative and $\cH_k$ measurable and the inequality 
		\begin{align*}
		V_{k+1} \le \alpha_k V_k + D_k \sqrt{V_k} + X_k + \kappa_k 
		\end{align*}
		holds for for some deterministic constants $\alpha_k \in (-\infty, 1]$ and $\kappa_k \in \RR$. Suppose that the moment generating functions of $D_k$ and $X_k$ conditioned on $\cH_k$ satisfy the following inequalities for some deterministic constants $\sigma_k, \nu_k > 0$:
		\begin{itemize}
			\item $\EE[\exp(\lambda D_k) \mid \cH_k] \le \exp(\lambda^2 \sigma_k^2/2)$ for all $\lambda \ge 0$. (e.g., $D_k$ is mean-zero sub-Gaussian
			conditioned on $\cH_k$ with parameter $\sigma_k$).
			
			\item $\EE[\exp(\lambda X_k) \mid \cH_k] \le \exp(\lambda \nu_k)$ for all $0\le \lambda \le \frac{1}{\nu_k}$. (e.g.,  $X_k$ is nonnegative and subexponential conditioned on $\cH_k$ with parameter $\nu_k$).
			
		\end{itemize}
		Then, the inequality 
		\begin{align*}
		\EE[\exp(\lambda V_{k+1})] \le \exp(\lambda(\nu_k +\kappa_k)) \EE \left[\exp\left(\lambda \left(\frac{1+\alpha_k}{2} V_k\right)\right)\right]
		\end{align*}
		holds for all $0\le \lambda \le  \min \left\{ \frac{1-\alpha_k}{2\sigma_k^2}, \frac{1}{2\nu_k} \right\}$.
	\end{proposition}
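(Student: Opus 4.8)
The plan is to condition on $\cH_k$ and peel the four additive terms in the recursion off one at a time, in the order: the deterministic/measurable part $\alpha_kV_k+\kappa_k$, then the cross term $D_k\sqrt{V_k}$, then the subexponential term $X_k$. First I would use the pointwise bound $V_{k+1}\le \alpha_kV_k + D_k\sqrt{V_k}+X_k+\kappa_k$ together with the fact that $V_k$ and $\sqrt{V_k}$ are $\cH_k$-measurable and $\alpha_k,\kappa_k$ are deterministic to write
\begin{align*}
\EE[\exp(\lambda V_{k+1})\mid\cH_k]\le \exp(\lambda\alpha_k V_k+\lambda\kappa_k)\,\EE\big[\exp(\lambda D_k\sqrt{V_k})\exp(\lambda X_k)\mid\cH_k\big].
\end{align*}

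The crux is decoupling the two (possibly dependent) random factors. I would apply the conditional Cauchy--Schwarz inequality,
\begin{align*}
\EE\big[\exp(\lambda D_k\sqrt{V_k})\exp(\lambda X_k)\mid\cH_k\big]\le \Big(\EE[\exp(2\lambda D_k\sqrt{V_k})\mid\cH_k]\Big)^{1/2}\Big(\EE[\exp(2\lambda X_k)\mid\cH_k]\Big)^{1/2},
\end{align*}
and bound each factor separately. For the first, since $\sqrt{V_k}\ge 0$ is $\cH_k$-measurable and $D_k$ is conditionally sub-Gaussian with parameter $\sigma_k$, the sub-Gaussian MGF bound evaluated at the nonnegative argument $2\lambda\sqrt{V_k}$ gives $\EE[\exp(2\lambda D_k\sqrt{V_k})\mid\cH_k]\le\exp(2\lambda^2\sigma_k^2 V_k)$. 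For the second, $X_k$ is conditionally subexponential with parameter $\nu_k$, and the hypothesis $\lambda\le\frac{1}{2\nu_k}$ ensures $2\lambda\le\frac1{\nu_k}$, so $\EE[\exp(2\lambda X_k)\mid\cH_k]\le\exp(2\lambda\nu_k)$. Taking square roots and multiplying, the cross-term contributes a factor $\exp(\lambda^2\sigma_k^2 V_k+\lambda\nu_k)$.

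Combining the two displays yields
\begin{align*}
\EE[\exp(\lambda V_{k+1})\mid\cH_k]\le \exp\big(\lambda(\nu_k+\kappa_k)\big)\exp\big((\lambda\alpha_k+\lambda^2\sigma_k^2)V_k\big).
\end{align*}
To finish, I would absorb the quadratic-in-$\lambda$ exponent: the hypothesis $\lambda\le\frac{1-\alpha_k}{2\sigma_k^2}$ gives $\lambda^2\sigma_k^2\le\lambda\frac{1-\alpha_k}{2}$, hence $\lambda\alpha_k+\lambda^2\sigma_k^2\le\lambda\frac{1+\alpha_k}{2}$, and since $V_k\ge 0$ this passes through the exponential to give $\exp((\lambda\alpha_k+\lambda^2\sigma_k^2)V_k)\le\exp(\lambda\frac{1+\alpha_k}{2}V_k)$. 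Taking total expectations then gives the claimed bound. The main ``obstacle'' is bookkeeping rather than depth: the Cauchy--Schwarz decoupling is exactly what forces every constant in the admissible range of $\lambda$ to carry a factor $\tfrac12$, and one must track that $V_k\ge 0$ is used twice --- once to legitimately substitute the $\cH_k$-measurable multiplier $\sqrt{V_k}$ into the sub-Gaussian bound (valid only for nonnegative arguments) and once to preserve the direction of the final inequality. Note also that only $\alpha_k\le 1$ is needed, not $\alpha_k\ge 0$.
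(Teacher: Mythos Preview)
Your proof is correct. The paper does not actually prove this proposition; it is quoted verbatim as Proposition~29 of \cite{cutler2023stochastic} and used as a black-box tool in the proof of Proposition~\ref{prop:highprob}, so there is no in-paper argument to compare against. Your approach---condition on $\cH_k$, decouple the $D_k\sqrt{V_k}$ and $X_k$ factors via conditional Cauchy--Schwarz (which is exactly what produces the factors of $2$ in the admissible range for $\lambda$), apply the sub-Gaussian and sub-exponential MGF bounds, and then absorb $\lambda^2\sigma_k^2$ into $\lambda\tfrac{1-\alpha_k}{2}$---is the standard one and matches how the result is proved in the cited references.
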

	Now we prove Proposition~\ref{prop:highprob}.
	Recall that we let $v_k = G_{\eta_{k+1}}(x_k, \nu_{k+1})$. We have 
	\begin{align*}
	\|x_{k+1} - x^\star\|^2 &=\norm{x_k - \stepsize_{k+1} v_{k}-x^\star}^2 \notag\\
	&= \norm{x_k - x^\star}^2 - 2 \stepsize_{k+1}\dotp{v_k, x_k - x^\star} +  \stepsize_{k+1}^2 \|v_k\|^2 \notag\\
	&\leq \|x_k - x^\star\|^2  - 2\gamma\stepsize_{k+1} \|x_k - x^\star\|^2 + 2C\stepsize_{k+1}^2 (1 + \|x_k - x^\star\|^2 + \|\nu_{k+1}\|^2) \notag \\
	&\hspace{20pt}- 2\stepsize_{k+1}\dotp{\perturb_{k+1}, x_k - x^\star}+ C\eta_{k+1}^2 (1 + \|x_k - x^\star\|^2+  \|\nu_{k+1}\|^2)\\
	&\le  (1 - \gamma \stepsize_{k+1}) \|x_k - x^\star\|^2 - 2 \eta_{k+1} \dotp{\nu_{k+1}, x_k - x^\star}\\
	&\hspace{20pt} + 3C\eta_{k+1}^2 \|\nu_{k+1}\|^2 + 3C \eta_{k+1}^2,
	\end{align*}
	where the first inequality follows from Assumption~\ref{assum: aimingtosol} and the second inequality follows from the upper bound on $\eta$. 
    Define $$\psi_k = \begin{cases}
	\frac{x_k - x^\star}{\|x_k - x^\star\|} & x_k \neq x^\star\\
	0 & \text{otherwise}
	\end{cases}.$$ 
    Note that $2\eta_{k+1} \dotp{\nu_{k+1}, \psi_k}$ is mean-zero sub-Gaussian conditioned on $\cF_k$ with parameter $\eta_{k+1} \sigma$, and $3C\eta_{k+1}^2\|\nu_{k+1}\|^2$ is sub-exponential with parameter $3cC\eta_{k+1}^2 \sigma^2$. We can apply Proposition~\ref{prop: stochastic process} with $$V_k = \|x_k - x^\star\|, \quad \alpha_k = 1-\gamma \eta_{k+1}, \quad D_k = -2 \eta_{k+1} \dotp{\nu_{k+1}, \psi_k}$$
    and $$X_k = 3C\eta_{k+1}^2 \|\nu_{k+1}\|^2,  \quad \kappa_{k} = 3C \eta_{k+1}^2.$$ Recalling $\tilde C= 3cC \sigma^2 +3C$ we have from  Proposition~\ref{prop: stochastic process} that
	\begin{align}\label{eqn: onestephighprob}
	\EE[\exp(\lambda \|x_{k+1} -x^\star\|^2)]&\le \exp(\lambda \tilde C \eta_{k+1}^2) \EE[\exp(\lambda (1-\gamma \eta_{k+1}/2) \|x_{k} - x^\star\|)]
	\end{align}
	for all $0 \le \lambda \le \min \left\{ \frac{\gamma}{2\eta_{k+1} \sigma^2}, \frac{1}{6C \eta_{k+1}^2 \sigma^2}\right\} = \frac{\gamma}{2\eta_{k+1} \sigma^2}$. Define 
	$$
	p_i^j := \begin{cases}
	\prod_{k=i}^{j} \left(1- \frac{\gamma\eta_i}{2}\right) & i \le j\\
	1 & i = j + 1.
	\end{cases} 
	$$
	Applying~\eqref{eqn: onestephighprob} recursively, we deduce
	\begin{align}\label{eqn: highprobMGF}
	\EE[\exp(\lambda \|x_{k} - x^\star\|)] &\le \exp\left(\lambda p_1^k \|x_0 -x^\star\|^2 + \lambda \tilde C \left(\textstyle\sum_{i=1}^{k} p_{i+1}^k\eta_i^2\right)\right) 
	\end{align}
	
	Recall that $C_\alpha = \frac{1-(1/2)^{1-\alpha}}{2(1-\alpha)}$. By Lemma~\ref{lem: upperboundpij}, for $1 \le i \le \floor{k/2}$, 
	\begin{align}\label{eqn: upperboundpij}
	p_{i}^k \le  \exp(- C_\alpha \gamma \eta (k+1)^{1-\alpha}).  
	\end{align}
	
	Consequently, we have 
	\begin{align*}
	\lambda \tilde C \left(\textstyle\sum_{i=1}^{k} p_{i+1}^k\eta_i^2\right) &= \lambda \tilde C \left(\textstyle\sum_{i=1}^{\floor{k/2}} p_{i+1}^k\eta_i^2 + \sum_{i=\floor{k/2}+1}^{k} p_{i+1}^k\eta_i^2\right)\\
	&\le \lambda \tilde C \left( \exp\left( -C_\alpha \gamma \eta (k+1)^{1-\alpha}\right)  \textstyle\sum_{i=1}^{\infty}\eta_i^2  + \textstyle\sum_{i=\floor{k/2}+1}^{k} \eta_i^2 \right)\\
	&\le \lambda \tilde C \eta^2 \underbrace{\left(\left(1+ \frac{1}{2\alpha-1}\right)\exp\left( - C_\alpha \gamma \eta (k+1)^{1-\alpha} \right) + \frac{1}{(2\alpha-1)2^{1-2\alpha} } k^{1-2\alpha} \right)}_{:= H_k},
	\end{align*}
	where the first inequality follows from~\eqref{eqn: upperboundpij} and the fact that $p_{i+1}^k \le 1$, and the second inequality follows from Lemma~\ref{lem: infiniteseries}. 
	By~\eqref{eqn: highprobMGF}, we have
	\begin{align*}
	\EE[\exp(\lambda \|x_k -x^\star\|)] \le \exp\left(\lambda \exp(- C_\alpha\gamma \eta  k^{1-\alpha})\|x_0 - x^\star\|^2 + \lambda\tilde C\eta^2 H_k \right)
	\end{align*}
	By our assumption on $k$, we have
	$$\exp(- C_\alpha\gamma \eta  k^{1-\alpha})\|x_0 - x^\star\|^2  \le \frac{\delta}{4} \qquad \text{and} \quad \tilde C\eta^2 H_k \le \frac{\delta}{4}.$$ 
	Then, by Markov's inequality, we have 
	\begin{align*}
	\pro(\|x_k - x^\star\| \ge \delta) &\le \exp(-\lambda\delta)\EE[\exp(\lambda \|x_k -x^\star\|)] \\
	&\le \exp(-\lambda \delta /2)
	\end{align*}
	Note that by taking $\lambda = \frac{\gamma}{2\eta_{k+1}\sigma^2}$, we have 
	\begin{align}\label{eqn: onestepbound}
	\pro(\|x_k -x^\star\| \ge \delta)\le \exp\left(- \frac{\gamma (k+1)^{\alpha} \delta}{4\eta \sigma^2}\right),
	\end{align}
	which is summable. Combining, we have
	\begin{align*}
	\pro(\|x_i -x^\star\| < \delta, \forall i \ge k) &\ge 1- \textstyle\sum_{i=k}^{\infty} \pro(\|x_i -x^\star\| \ge \delta) \\
	&\ge 1- \textstyle\sum_{i=k}^{\infty} \exp\left(- \frac{\gamma (k+1)^{\alpha} \delta}{4\eta \sigma^2}\right)\\
	&\ge 1- \frac{32 \eta^2 \sigma^4 \exp\left(- \frac{\gamma\delta\sqrt{k}}{4\eta\sigma^2}\right)}{\gamma^2 \delta^2} - \frac{8\eta \delta^2 \sqrt{k} \exp\left(-\frac{\gamma \delta\sqrt{k}}{4\eta\sigma^2}\right)}{\gamma \delta},
	\end{align*}
	where the first inequality follows from the union bound, the second inequality follows from~\eqref{eqn: onestepbound}, and the last inequality follows from Lemma~\ref{lem: tailbound}.
	
	\section{Proofs of Theorem~\ref{thm:cov_light_tail}}\label{sec:proof_cov_lighttail}
	\begin{lemma}\label{lem:cov_bound}
		Let $\hat \Sigma_n$ be defined as in~\eqref{eqn:online_batch_means}. Suppose that Assumption~\ref{assum:bounded_seq} holds. Then we have
		\begin{align*}
		\textstyle \EE[\norm{\hat \Sigma_n}_{op}] \le 4\cub n.
		\end{align*}
	\end{lemma}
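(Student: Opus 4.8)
The plan is to exploit the rank-one positive semidefinite structure of each summand in \eqref{eqn:online_batch_means}, together with Cauchy--Schwarz and the uniform second-moment bound of Assumption~\ref{assum:bounded_seq}. Write $v_i := \sum_{k=t_i}^i x_k - l_i \bar{x}_n = \sum_{k=t_i}^i (x_k - \bar{x}_n)$, so that $\hat{\Sigma}_n = (\sum_{i=1}^n l_i)^{-1}\sum_{i=1}^n v_i v_i^\top$ is a nonnegative combination of rank-one PSD matrices. Since $\|v_i v_i^\top\|_{op} = \|v_i\|_2^2$, the triangle inequality for the operator norm gives
$$\|\hat{\Sigma}_n\|_{op} \le \Bigl(\textstyle\sum_{i=1}^n l_i\Bigr)^{-1}\sum_{i=1}^n \|v_i\|_2^2.$$

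Next I would bound each $\|v_i\|_2^2$. By Cauchy--Schwarz, $\|v_i\|_2^2 = \bigl\|\sum_{k=t_i}^i (x_k - \bar{x}_n)\bigr\|_2^2 \le l_i \sum_{k=t_i}^i \|x_k - \bar{x}_n\|_2^2$. Then I use $\|x_k - \bar{x}_n\|_2^2 \le 2\|x_k - x^\star\|_2^2 + 2\|\bar{x}_n - x^\star\|_2^2$ together with Jensen's inequality $\|\bar{x}_n - x^\star\|_2^2 \le n^{-1}\sum_{j=1}^n \|x_j - x^\star\|_2^2$; taking expectations and invoking Assumption~\ref{assum:bounded_seq} yields $\EE[\|x_k - \bar{x}_n\|_2^2] \le 4\cub$ for every $k$.

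Combining the two displays and taking expectations, $\EE[\|\hat{\Sigma}_n\|_{op}] \le (\sum_{i=1}^n l_i)^{-1}\sum_{i=1}^n l_i \cdot l_i \cdot 4\cub = 4\cub (\sum_{i=1}^n l_i)^{-1}\sum_{i=1}^n l_i^2$. Since each block $B_i$ is a subset of $\{x_1,\dots,x_n\}$, its length satisfies $l_i = i - t_i + 1 \le i \le n$, hence $\sum_{i=1}^n l_i^2 \le n\sum_{i=1}^n l_i$, and the claimed bound $\EE[\|\hat{\Sigma}_n\|_{op}] \le 4\cub n$ follows. There is no real obstacle in this lemma; the only point requiring minor care is pinning the constant at $4$, which is why I split off the term $\bar{x}_n - x^\star$ and control it via Jensen rather than bounding it more crudely.
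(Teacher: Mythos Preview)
Your proof is correct and follows essentially the same route as the paper's: both bound $\|\hat\Sigma_n\|_{op}$ by $(\sum_i l_i)^{-1}\sum_i\|v_i\|_2^2$, apply Cauchy--Schwarz (Jensen) to get the factor $l_i$, use $\EE\|x_k-\bar x_n\|_2^2\le 4\cub$ via the split through $x^\star$ and Jensen on $\bar x_n$, and finish with $l_i\le n$. The only cosmetic difference is that the paper first reduces to $x^\star=0$ before carrying out the same chain of inequalities.
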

	\begin{proof}
		Note that
		\begin{align*}
		\hat{\Sigma}_n &=  \frac{\sum_{i=1}^n\left(\sum_{k=t_i}^ix_k-l_i\bar{x}_n\right)\left(\sum_{k=t_i}^ix_k-l_i\bar{x}_n\right)^
			\top}{\sum_{i=1}^nl_i} \\
		&=  \frac{\sum_{i=1}^n\left(\sum_{k=t_i}^i(x_k- x^\star)-l_i(\bar{x}_n -  x^\star)\right)\left(\sum_{k=t_i}^i(x_k - x^\star)-l_i(\bar{x}_n - x^\star)\right)^
			\top}{\sum_{i=1}^nl_i},
		\end{align*}
		we can without loss of generality assume that $x^\star = 0$ and $\EE[\|x_k\|_2^2] \le \cub$ for all $k \ge 0$.  Note that by Jensen's inequality, $\EE[\|x^\star_n\|_2^2] \le \cub$. We have
		\begin{align*}
		\EE[\|\hat \Sigma_n\|_{op}] &  \le  \frac{\sum_{i=1}^n\EE[\norm{\sum_{k=t_i}^ix_k-l_i\bar{x}_n}_2^2]}{\sum_{i=1}^nl_i} \\
		&\le   \frac{\sum_{i=1}^n l_i \sum_{k=t_i}^i\EE[\norm{x_k-\bar{x}_n}_2^2]}{\sum_{i=1}^nl_i}\\
		&\le  \frac{4\cub \sum_{i=1}^n l_i^2 }{\sum_{i=1}^nl_i}\\
		&\le  4 \cub n,
		\end{align*}
		where the first inequality follows from triangle inequality, the second inequality follows from Jensen's inequality, the third inequality follows from $\EE[\|x^\star_n\|_2^2] \le \cub$, and the last inequality follows from $l_i \le i \le n$. The conclusion then follows.
	\end{proof}
	Now we prove Theorem~\ref{thm:cov_light_tail}.
	By Proposition~\ref{prop:highprob}, for any $\ks \gtrsim 1$ and $n \ge \ks$, we have
	\begin{align*}
	P(\tau_{\ks,\delta} \le n) \le   \frac{32 \eta^2 \sigma^4 \exp\left(- \frac{\gamma\delta\sqrt{\ks}}{4\eta\sigma^2}\right)}{\gamma^2 \delta^2}+ \frac{8\eta \delta \sqrt{\ks} \exp\left(-\frac{\gamma \delta\sqrt{\ks}}{4\eta\sigma^2}\right)}{\gamma }
	\end{align*} 
	For $n\gtrsim 1$, taking $\ks \asymp \log^2n$ so that $P(\tau_{\ks,\delta} \le n)  \lesssim n^{-2}$, we have
	\begin{align*}
	\EE[\|\hat \Sigma_n - \Sigma\|_{op}] &= 	\EE[ \|\hat \Sigma_n - \Sigma\|_{op}\indic{n}] +  \EE[ \|\hat \Sigma_n - \Sigma\|_{op}\mathbbm{1}_{\tau_{\ks,\delta} \le n}]\\
	&\lesssim_{\log} \sqrt{d}M^{-\frac{1}{2}} + \sqrt{d} M^{\frac{(\alpha-1)\beta +2}{2}}  + nP(\tau_{\ks,\delta} \le n) \\
	&\lesssim  \sqrt{d}M^{-\frac{1}{2}} + \sqrt{d} M^{\frac{(\alpha-1)\beta +2}{2}} \\
	&\lesssim   \sqrt{d} n^{-\frac{1}{2\beta}} + \sqrt{d} n^{-\frac{(\alpha -1)\beta +1}{2\beta}},
	\end{align*}
	where the first inequality follows from Theorem~\ref{th:mainthmcov} and Lemma~\ref{lem:cov_bound}, the second inequality follows from $P(\tau_{\ks,\delta} \le n)  \lesssim n^{-2}$, and the last inequality follows from $n\approx M^\beta$.
	\section{Extra assumption verification for stochastic approximation}\label{sec:global_guarantee_lemma}
  The following proposition shows that  under convexity (monotonicity), Assumption~\ref{assum:bounded_seq} holds for all the stochastic approximation algorithms in Section~\ref{sec:SA_algorithms}.   \begin{proposition}\label{prop:bounded_seq_sufficient}
		Suppose that the variational inclusion problem takes the form of~\eqref{eqn:variation_inclusion}, and Assumption~\ref{assumption:zero} and~\ref{assumption:martinagle} holds. 
		Moreover, suppose that $A$ is a Lipschitz and monotone map and we are in one of the following scenarios:
		\begin{enumerate}
			\item One applies the stochastic forward algorithm to the case $f = 0$ and  $g$ is Lipschitz and convex.
			\item One applies the stochastic projected forward algorithm to the case $f$ is the indicator function of a closed convex set $\cX$ and $g$ is Lipschitz and convex.
			\item One applies the stochastic forward-backward algorithm to the case $f$ is Lipschitz in its domain and $g = 0$. 
		\end{enumerate}
		Then Assumption~\ref{assum:bounded_seq} holds.
	\end{proposition}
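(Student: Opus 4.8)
The plan is to prove, for each of the three algorithms, a single one–step Lyapunov recursion
\[
\EE_k\big[\|x_{k+1}-x^\star\|^2\big]\;\le\;(1+C\eta_{k+1}^2)\,\|x_k-x^\star\|^2+C\eta_{k+1}^2,
\]
with $C$ depending only on the problem data, and then to unroll it. First I would treat the three cases uniformly. In every case the update $x_{k+1}=x_k-\eta_{k+1}G_{\eta_{k+1}}(x_k,\nu_{k+1})$ can be rewritten as $x_{k+1}=\Pi_{k+1}\big(x_k-\eta_{k+1}(B(x_k)+\nu_{k+1})\big)$, where $\Pi_{k+1}$ is nonexpansive — the identity in case~1, the metric projection $P_\cX$ in case~2, and $\prox_{\eta_{k+1}f}$ in case~3 (nonexpansive since $f$ is convex) — and $B(x)=A(x)+s_g(x)$ in cases~1--2, $B(x)=A(x)$ in case~3; here case~1 is just case~2 with $\cX=\RR^d$. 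From the inclusion defining $x^\star$ I would extract a vector $b^\star$ — namely $b^\star=A(x^\star)+v^\star$ for some $v^\star\in\partial g(x^\star)$ in cases~1--2, and $b^\star=A(x^\star)$ in case~3 — enjoying two properties: (i) $x^\star=\Pi_{k+1}(x^\star-\eta_{k+1}b^\star)$ for every $\eta_{k+1}>0$, i.e.\ $x^\star$ is the noiseless fixed point (this uses $-(A(x^\star)+v^\star)\in N_\cX(x^\star)$ in cases~1--2, and $-A(x^\star)\in\partial f(x^\star)$ in case~3); and (ii) $\dotp{B(x)-b^\star,\,x-x^\star}\ge 0$ for all $x$, which follows from monotonicity of $A$ together with monotonicity of $\partial g$ (resp.\ $\partial f$). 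I would also record the growth bound $\|B(x)-b^\star\|\le L_A\|x-x^\star\|+c$, with $c=2L_g$ in cases~1--2 (Lipschitzness of $g$ bounds its subgradients by $L_g$) and $c=0$ in case~3.

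The main computation then uses nonexpansiveness of $\Pi_{k+1}$ and property~(i) to write $\|x_{k+1}-x^\star\|^2\le\|(x_k-x^\star)-\eta_{k+1}(B(x_k)-b^\star)-\eta_{k+1}\nu_{k+1}\|^2$, after which I expand the square and take $\EE_k[\cdot]$: the cross term in $\nu_{k+1}$ vanishes because $\EE_k[\nu_{k+1}]=0$ and the remaining factor is $\cF_k$–measurable; the term $-2\eta_{k+1}\dotp{B(x_k)-b^\star,\,x_k-x^\star}$ is nonpositive by~(ii); and the quadratic term is bounded via $\EE_k[\|B(x_k)-b^\star+\nu_{k+1}\|^2]\le 2\|B(x_k)-b^\star\|^2+2\,\EE_k[\|\nu_{k+1}\|^2]$, combined with the growth bound on $B$ and a noise bound of the form $\EE_k[\|\nu_{k+1}\|^2]\le\sigma^2(1+\|x_k-x^\star\|^2)$. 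Collecting terms gives the displayed recursion with $C=C(L_A,L_g,\sigma)$.

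To conclude, I would take full expectations, set $r_k:=\EE[\|x_k-x^\star\|^2]$, and unroll $r_{k+1}\le(1+C\eta_{k+1}^2)r_k+C\eta_{k+1}^2$, bounding each partial product by $\prod_{j\ge1}(1+C\eta_j^2)\le\exp\!\big(C\sum_{j\ge1}\eta_j^2\big)$, which is finite because $\sum_j\eta_j^2=\eta^2\sum_j j^{-2\alpha}<\infty$ for $\alpha\in(\tfrac12,1)$. This yields $r_k\le\exp\!\big(C\sum_j\eta_j^2\big)\big(r_0+C\sum_j\eta_j^2\big)=:\cub$ uniformly in $k$, which is exactly Assumption~\ref{assum:bounded_seq}. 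The step I expect to require the most care is the noise bound $\EE_k[\|\nu_{k+1}\|^2]\lesssim 1+\|x_k-x^\star\|^2$: the state–dependent part $\nu_{k+1}^{(2)}(x_k)$ is controlled directly by Assumption~\ref{assumption:martinagle}, whereas for the state–independent part $\nu_{k+1}^{(1)}$ one needs that its distribution does not depend on the iterate (consistent with $S$ being constant), so that $\sup_k\EE_k[\|\nu_{k+1}^{(1)}\|^2]<\infty$; this holds in all instances of interest and is the one place where more than ``routine'' assembly of nonexpansiveness, monotonicity of $A+\partial g$ (or $A+\partial f$), and Lipschitz/boundedness of the deterministic drift is needed. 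A minor point to verify is that convexity of $\cX$ (resp.\ of $f$) makes $\Pi_{k+1}$ single–valued, so that the inclusions in \eqref{eqn:project_forward}/\eqref{eqn:forward_backward} are in fact equalities.
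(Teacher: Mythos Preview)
Your proposal is correct. For cases~1--2 it coincides with the paper's argument (nonexpansiveness of $P_\cX$, monotonicity to drop the cross term, Lipschitz bounds on the drift, then unroll via the summability of $\eta_k^2$). For case~3 your route is genuinely different and shorter: you exploit the fixed-point identity $x^\star=\prox_{\eta f}(x^\star-\eta A(x^\star))$ together with nonexpansiveness of $\prox_{\eta f}$ (using convexity of $f$, which the paper also invokes) to reduce immediately to the same quadratic expansion as in cases~1--2. The paper does \emph{not} use this trick; instead it introduces the half-step $x_k^+=x_k-\eta_{k+1}(A(x_k)+\nu_{k+1})$ and the implicit subgradient $w_k^+=\eta_{k+1}^{-1}(x_k^+-\prox_{\eta_{k+1}f}(x_k^+))$, applies monotonicity at $x_k^+$ rather than $x_k$, and then spends most of the argument bounding the discrepancy terms $\EE[\|x_k-x_k^+\|\,\|A(x_k)+w_k^+\|]$ and $\EE[\|x_k^+-x^\star\|\,\|A(x_k)-A(x_k^+)\|]$ via Lipschitzness and H\"older. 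Your uniform treatment is cleaner; the paper's decomposition would adapt more readily if one later dropped nonexpansiveness (e.g., nonconvex $f$). Both proofs rely on the same implicit bound $\sup_k\EE_k[\|\nu_{k+1}^{(1)}\|^2]<\infty$, which you rightly flag and the paper leaves unstated.
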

\begin{proof} Note that the first scenario is a special case of the second one, we only prove it for the second and third cases.
	\paragraph{Stochastic projected forward algorithm.} By the definition of $x^\star$, there exists $v^\star \in \partial g(x^\star)$ and $w^\star \in N_\cX(x^*)$ such that 
	$$
	0 = A(x^\star) + v^\star + w^\star.
	$$
	By monotonicity of $A$ and convexity of $g$, for any $x_k$ and $s_g(x_k) \in \partial g(x_k)$, we have 
	\begin{align}
	\dotp{A(x_k) + s_g(x_k) + w^\star, x_k - x^\star} = \dotp{A(x_k) + s_g(x_k) - A(x^\star) - v^\star, x_k - x^\star} \ge 0.
	\end{align}
	Note also that $w^\star \in N_\cX(x^\star)$, we have
	\begin{align}\label{eqn:A_and_partial_g}
	\dotp{A(x_k) + s_g(x_k), x_k - x^\star} \ge  - \dotp{w^\star, x_k - x^\star} \ge 0.
	\end{align}
	As a result, there exists some constant $C>0$ such that
	\begin{align*}
	\EE[\|x_{k+1} - x^\star\|_2^2] &= \EE[\|P_\cX(x_k - \eta_{k+1}(A(x_k) + s_g(x_k) + \nu_{k+1})) - x^\star\|_2^2]\\
	&\le \EE[\|x_k - \eta_{k+1}(A(x_k) + s_g(x_k) + \nu_{k+1}) - x^\star\|_2^2]\\
	&\le  \EE[\|x_k - x^\star\|^2]  -2 \eta_{k+1} \EE[\dotp{A(x_k) + s_g(x_k) + \nu_{k+1}, x_k - x^*}] + C\eta_{k+1}^2 (1+ \EE[\|x_k - x^\star\|_2^2])\\
	&\le (1+ C \eta_{k+1}^2) \EE[\|x_k - x^\star\|^2]  + C\eta_{k+1}^2,
	\end{align*}
	where the first inequality follows from the fact that $P_\cX$ is 1-Lipschitz, and the last inequality follows from~\eqref{eqn:A_and_partial_g}. The results then follow from Lemma~\ref{lem:bounded_seq_lemma}.
	
	\paragraph{Stochastic forward-backward algorithm.} By definition of $x^\star$, there exists $w \in \partial f(x^\star)$ such that 
	$$
	0 = A(x^\star) + w^\star.
	$$
	For any $x_k$, we denote $x_k - \eta_{k+1}(A(x_k) + \nu_{k+1})$ by $x_k^+$ and $\frac{x_k^+ - \prox_{\eta_{k+1} f}(x_k^+)}{\eta_{k+1}}$ by $w_k^+$. By the property of the proximal operator, we have $w_k^+ \in \partial f(x_k^+)$. Moreover, by monotonicity of $A$ and convexity of $f$,  we have
	\begin{align}\label{eqn:proximal_grad_aiming}
	\dotp{A(x_k^+) + w_k^+, x_k^+ - x^*}= \dotp{ A(x_k^+) + w_k^+ - A(x^\star) - w^\star, x_k^+ - x^*}\ge 0.
	\end{align}
	Next, we bound $\|x_{k+1} - x_k\|$. By definition of $x_{k+1}$ and Lipschitz property of $f$ and $A$, there exists some constant $C >0$ (may change from line to line) such that
	\begin{align*}
	\frac{1}{2\eta_{k+1}}\| x_{k+1} - x_k\|_2^2 &\le f(x_k) - f(x_{k+1}) - \dotp{A(x_k) + \nu_{k+1}, x_{k+1} - x_k}\\
	&\le C(1 + \|x_k - x^\star\|_2 + \|\nu_{k+1}\|_2) \|x_{k+1} - x_k\|_2.
	\end{align*}
	As a consequence, there exists a constant $C >0$ such that
	\begin{align}\label{eqn:step_bound}
	\|x_{k+1} - x_k\|_2 \le C\eta_{k+1} (1 + \|x_k - x^\star\|_2 + \|\nu_{k+1}\|_2).
	\end{align}
	In addition, by Lipschitz continuity of $A$ and $f$, there exists some constant $C >0$ ) such that 
	\begin{align}\label{eqn:one_step_bound}
	\|x_k - x_k^+\|_2 \le C \eta_{k+1}(1 + \|x_k - x^\star\|_2 + \|\nu_{k+1}\|_2).
	\end{align}
	As a result of~\eqref{eqn:step_bound} and~\eqref{eqn:one_step_bound}, there exists a constant $C>0$ such that
	\begin{align}\label{eqn:w_k_bound}
	\|w_k^+\|_2 \le \frac{1}{\eta_{k+1}}(\|x_k - x_k^+\|_2 + \|x_{k+1} - x_k\|_2) \le  C (1 + \|x_k - x^\star\|_2 + \|\nu_{k+1}\|_2)
	\end{align}
	Consequently, there exists a constant $C>0$ such that
	\begin{align*}
	\EE[\|x_{k+1} - x^\star\|_2^2] &= \EE[\|x_k - x^\star - \eta_{k+1}(A(x_k) + \nu_{k+1} + w_{k}^+) \|_2^2]\\
	&= \EE[\|x_k - x^\star\|_2^2] - 2\eta_{k+1} \EE[\dotp{x_k - x^\star, A(x_k) + w_{k}^+}] + C\eta_{k+1}^2 (1 + \EE[\|x_k - x^\star\|_2^2]).
	\end{align*}
	Next, we show that $2\eta_{k+1} \EE[\dotp{x_k - x^\star, A(x_k) + w_{k}^+}]$ is lower bound. By~\eqref{eqn:proximal_grad_aiming}, we have
	\begin{align*}
	&\quad 2\eta_{k+1} \EE[\dotp{x_k - x^\star, A(x_k) + w_{k}^+}] \\
	&= 2\eta_{k+1} (\EE[\dotp{x_k - x_k^+, A(x_k) + w_{k}^+}] + \EE[\dotp{x_k^+ - x^\star, A(x_k) - A(x_k^+)}] +  \EE[\dotp{x_k^+ - x^\star, A(x_k^+) + w_{k}^+}])\\
	&\ge  - 2\eta_{k+1} (\underbrace{\EE[\|x_k - x_k^+\|_2 \|A(x_k) + w_{k}^+\|_2]}_{(I)} + \underbrace{\EE[\|x_k^+ - x^\star\|_2 \|A(x_k) - A(x_k^+)\|_2]}_{(II)}).
	\end{align*}
	We bound $(I)$ and $(II)$ separately.  By Holder's inequality,
	\begin{align*}
	(I) & \le (\EE[\|x_k - x^+\|_2^2])^{\frac{1}{2}} (\EE[\|A(x_k) + w_k^+\|_2^2])^{\frac{1}{2}}\\
	&\le C \eta_{k+1} (1 +  \EE[\|x_k - x^\star\|^2]),
	\end{align*}
	where the second inequality follows from~\eqref{eqn:one_step_bound}. On the other hand,
	\begin{align*}
	(II) &\le C\cdot \EE[\|x_k^+ - x^\star\|_2 \|x_k - x_k^+\|_2]\\
	&\le C (\EE[\|x_k - x_k^+\|_2^2] + \EE[\|x_k - x^\star\|_2\|x_k - x_k^+\|_2])\\
	&\le  C \left(\EE[\|x_k - x_k^+\|_2^2] + (\EE[\|x_k - x^\star\|_2^2])^{\frac{1}{2}} (\EE[\|x_k - x_k^+ \|_2^2])^{\frac{1}{2}}\right)\\
	&\le C \eta_{k+1} ( 1+ \EE[\|x_k - x^\star\|_2^2]).
	\end{align*}
	Combining,  We have 
	$$
	2\eta_{k+1} \EE[\dotp{x_k - x^\star, A(x_k) + w_{k}^+}]  \ge -  C \eta_{k+1} ( 1+ \EE[\|x_k - x^\star\|_2^2]).
	$$
	Consequently, there exists constant $C>0$ such that 
	$$
	\EE[\|x_{k+1} - x^\star\|_2^2] \le (1+ C \eta_{k+1}^2) \EE[\|x_k - x^\star\|^2]  + C\eta_{k+1}^2.
	$$
	The results then follow from Lemma~\ref{lem:bounded_seq_lemma}.
\end{proof}

The following proposition shows that under strong convexity (monotonicity), Assumption~\ref{assum: aimingtosol} holds for all the stochastic approximation algorithms in Section~\ref{sec:SA_algorithms}. 
    \begin{proposition}\label{prop:high_prob_assumption}
    	Suppose that the variational inclusion problem takes the form of~\eqref{eqn:variation_inclusion}. Assume that $A$ is strongly monotone and Lipschitz. Suppose we are in one of the following scenarios:
    	\begin{enumerate}
    		\item One applies the stochastic forward algorithm to the case $f = 0$ and  $g$ is Lipschitz and convex.
    		\item One applies the stochastic projected forward algorithm to the case  $f$ is the indicator function of a closed set $\cX$ and $g$ is Lipschitz and convex.
    		\item One applies the stochastic forward-backward algorithm to the case  $f$ is Lipschitz in its domain and $g = 0$. 
    	\end{enumerate}
    	Then Assumption~\ref{assum: aimingtosol} holds.
    \end{proposition}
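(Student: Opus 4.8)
The plan is to verify the two items of Assumption~\ref{assum: aimingtosol} separately in each scenario, following the organization of the proof of Proposition~\ref{prop:bounded_seq_sufficient}. As there, scenario~1 is the special case $\cX=\RR^d$ of scenario~2 (the projection reduces to the identity and $G_\eta(x,\nu)=A(x)+s_g(x)+\nu$), so it suffices to treat scenarios~2 and~3; and, as is implicit in the proof of Proposition~\ref{prop:bounded_seq_sufficient} and in the use of Assumption~\ref{assum: aimingtosol} inside Proposition~\ref{prop:highprob}, it is enough to verify both items for $x\in\dom F$, which by Assumption~\ref{assumption:zero} contains every iterate. The global-steplength bound, Item~\ref{item:global_steplength}, requires no new ideas: nonexpansiveness of $P_\cX$ (resp.\ comparing the value of the prox objective at $x^+:=s_f(x-\eta(A(x)+\nu))$ with its value at $x$) together with the Lipschitz bounds on $A$ and on $g$ (resp.\ $f$) gives $\|x-x^+\|\le C\eta(1+\|x-x^\star\|+\|\nu\|)$, hence $\|G_\eta(x,\nu)\|=\eta^{-1}\|x-x^+\|\le C(1+\|x-x^\star\|+\|\nu\|)$, and squaring yields Item~\ref{item:global_steplength}; these are exactly the estimates already carried out in the proof of Proposition~\ref{prop:bounded_seq_sufficient}. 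The only genuinely new ingredient is the aiming estimate, Item~\ref{item:aiming_solution}, where strong monotonicity of $A$ replaces the plain monotonicity used before.

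For scenario~2, write $x^+=s_\cX(x-\eta(A(x)+s_g(x)+\nu))$, so $G_\eta(x,\nu)=\eta^{-1}(x-x^+)$, and pick $v^\star\in\partial g(x^\star)$ and $w^\star\in N_\cX(x^\star)$ with $A(x^\star)+v^\star+w^\star=0$. The variational inequality for the projection evaluated at the feasible point $x^\star$ gives $\dotp{G_\eta(x,\nu)-A(x)-s_g(x)-\nu,\,x^+-x^\star}\ge0$; decomposing $x-x^\star=(x-x^+)+(x^+-x^\star)$ then leads to
\[
\dotp{G_\eta(x,\nu)-\nu,\,x-x^\star}\ \ge\ \eta\|G_\eta(x,\nu)\|^2-\eta\dotp{\nu,\,G_\eta(x,\nu)}+\dotp{A(x)+s_g(x),\,x^+-x^\star},
\]
where the first two terms are $\ge-\tfrac{\eta}{4}\|\nu\|^2$. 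Splitting the last term as $\dotp{A(x)+s_g(x),\,x-x^\star}+\dotp{A(x)+s_g(x),\,x^+-x}$, strong monotonicity of $A$ and monotonicity of $\partial g$ give $\dotp{A(x)+s_g(x),\,x-x^\star}\ge\gamma\|x-x^\star\|^2-\dotp{w^\star,\,x-x^\star}$, while $\dotp{w^\star,\,x-x^\star}\le\dotp{w^\star,\,x-x^+}$ since $x^+\in\cX$ forces $\dotp{w^\star,\,x^+-x^\star}\le0$. Each remaining cross term — $\dotp{w^\star,\,x-x^+}$, $\dotp{A(x)+s_g(x),\,x^+-x}$, $\eta\dotp{\nu,\,G_\eta(x,\nu)}$ — is $O\big(\eta(1+\|x-x^\star\|^2+\|\nu\|^2)\big)$ by Cauchy--Schwarz, the steplength bound $\|x-x^+\|=\eta\|G_\eta(x,\nu)\|\lesssim\eta(1+\|x-x^\star\|+\|\nu\|)$ from Item~\ref{item:global_steplength}, the Lipschitz bounds, and $2ab\le a^2+b^2$; collecting terms gives Item~\ref{item:aiming_solution} with the same constant $\gamma$.

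For scenario~3, write $x^+=s_f(x-\eta(A(x)+\nu))$, $G_\eta(x,\nu)=\eta^{-1}(x-x^+)$, and note the prox optimality condition gives $w^+:=G_\eta(x,\nu)-A(x)-\nu\in\partial f(x^+)$; pick $w^\star\in\partial f(x^\star)$ with $A(x^\star)+w^\star=0$. Then $\dotp{G_\eta(x,\nu)-\nu,\,x-x^\star}=\dotp{A(x)+w^+,\,x-x^+}+\dotp{A(x)+w^+,\,x^+-x^\star}$; the first piece equals $\eta\|G_\eta(x,\nu)\|^2-\eta\dotp{\nu,\,G_\eta(x,\nu)}\ge-\tfrac{\eta}{4}\|\nu\|^2$. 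In the second piece I insert $A(x^+)$: $\dotp{A(x^+)+w^+,\,x^+-x^\star}\ge\gamma\|x^+-x^\star\|^2$ by strong monotonicity of $A$, monotonicity of $\partial f$, and $A(x^\star)+w^\star=0$, whereas $\dotp{A(x)-A(x^+),\,x^+-x^\star}\ge-L_A\|x-x^+\|\,\|x^+-x^\star\|\gtrsim-\eta(1+\|x-x^\star\|^2+\|\nu\|^2)$ using Item~\ref{item:global_steplength} and $\|x^+-x^\star\|\le\|x-x^\star\|+\|x-x^+\|$. Finally $\|x^+-x^\star\|^2\ge\tfrac12\|x-x^\star\|^2-\|x-x^+\|^2\ge\tfrac12\|x-x^\star\|^2-C\eta(1+\|x-x^\star\|^2+\|\nu\|^2)$ turns the $\gamma\|x^+-x^\star\|^2$ term into $\tfrac{\gamma}{2}\|x-x^\star\|^2$ minus an $O(\eta)$ remainder, yielding Item~\ref{item:aiming_solution} with constant $\gamma/2$.

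The main obstacle is scenario~3: because the backward (prox) step is implicit, strong monotonicity only directly produces an aiming estimate anchored at $x^+$, so most of the work is transferring it back to $x$ while keeping every discrepancy $O(\eta)$ — which is why Item~\ref{item:global_steplength} must be established first and why the norm-square comparison $\|x-x^\star\|^2\le 2\|x-x^+\|^2+2\|x^+-x^\star\|^2$ is needed at the end. A secondary point to flag is that in scenario~2 the bound $\|G_\eta(x,\nu)\|\lesssim1+\|x-x^\star\|+\|\nu\|$ genuinely uses $x\in\cX=\dom F$ (for $x$ far from $\cX$ a factor $\eta^{-1}\dist(x,\cX)$ appears), so the reduction to $x\in\dom F$ noted at the outset is essential rather than cosmetic.
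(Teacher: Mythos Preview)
Your proposal is correct and follows essentially the same approach as the paper: verify the steplength bound first (via nonexpansiveness of $P_\cX$ in scenario~2 and comparison of the prox objective at $x^+$ versus $x$ in scenario~3), then extract the $\gamma\|x-x^\star\|^2$ term from strong monotonicity of $A$ combined with monotonicity of $\partial g$ or $\partial f$, and finally bound every residual inner product as $O(\eta(1+\|x-x^\star\|^2+\|\nu\|^2))$ using the steplength estimate and Cauchy--Schwarz. The only difference is the anchoring point in scenario~3: you take $x^+$ to be the prox output so that $w^+\in\partial f(x^+)$ holds literally, whereas the paper sets $x^+$ equal to the forward step $x-\eta(A(x)+\nu)$ and then (somewhat imprecisely) writes $w^+\in\partial f(x^+)$ when it means $w^+\in\partial f(\prox_{\eta f}(x^+))$; your formulation is cleaner and avoids that slip, at the cost of obtaining the aiming constant $\gamma/2$ instead of $\gamma$, which is immaterial for Assumption~\ref{assum: aimingtosol}.
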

    \begin{proof}
		Since the stochastic forward algorithm is a special case of the stochastic projected forward algorithm, it suffices to prove the result for both the stochastic projected forward algorithm (case 2) and the stochastic forward-backward algorithm (case 3.)
		\paragraph{Stochastic projected forward algorithm.} Recall  $s_g$ is a  selection of $\partial g$.  There exists some constant $C>0$ (it may change from line to line through the proof) such that
		\begin{align*}
		\|G_\eta(x, \nu)\|_2 &= \left\|\frac{ x - P_\cX( x - \eta(A(x) + s_g(x) + \nu))}{\eta}\right\|_2\\
		&\le  \|A(x) + s_g(x) + \nu\|_2\\
		&\le C(1 + \|x - x^\star\|_2),		
		\end{align*}
		where the first inequality follows from the fact that $P_\cX$ is $1$-Lipschitz and the second inequality follows from the Lipschitz continuity of $A$ and $g$. Item~\ref{item:global_steplength} follows. On the other hand, by the definition of $x^\star$, there exists $v^\star \in \partial g(x^\star)$ and $w^\star \in N_\cX(x^*)$ such that 
		$$
		0 = A(x^\star) + v^\star + w^\star.
		$$
		By strong monotonicity of $A$ and convexity of $g$, there exists $\gamma >0$ such that for any $x$ and $s_g(x) \in \partial g(x)$, we have 
		\begin{align*}
		\dotp{A(x) + s_g(x) + w^\star, x - x^\star} &= \dotp{A(x) + s_g(x) - A(x^\star) - v^\star, x - x^\star}\\
		&\ge \gamma \|x - x^\star\|_2^2.
		\end{align*}
		As a result of $w^\star \in N_\cX(x^*)$, we have 
		\begin{align}\label{eqn:strong_aim_sol}
		\dotp{A(x) + s_g(x), x - x^\star} \ge \gamma \|x_k - x^\star\|_2^2.
		\end{align}
		Next, we denote $x -\eta(A(x) + s_g(x) + \nu)$ by $x^+$ and $\frac{x^+ - P_\cX(x^+)}{\eta}$ by $w$. Note that $w \in N_\cX(x^+)$ and $G_\eta(x,\nu) = w + A(x) + s_g(x) + \nu$, so we have
		\begin{align}\label{eqn:upper_bound_u}
		\|w\|_2 \le C(1 + \|x-x^\star\|_2 + \nu).
		\end{align}
		Therefore,
		\begin{align*}
		\dotp{G_\eta(x,\nu) -\nu, x- x^\star} &= \dotp{w + A(x) + s_g(x), x-x^\star}\\
		&\ge \gamma \|x-x^\star\|^2 + \dotp{w, x-x^+} + \dotp{w, x^+-x^\star}\\
		&\ge  \gamma \|x-x^\star\|^2 + \dotp{w, x-x^+},
		\end{align*}
		where the first inequality follows from~\ref{eqn:strong_aim_sol} and the second inequality follows from $w \in N_\cX(x^+)$. Note also that 
		\begin{align*}
		\|x - x^+\|_2 \le C \eta (1+ \|x-x^\star\|_2 + \|\nu\|_2),
		\end{align*}
		we have 
		$$
		|\dotp{w, x- x^+}| \le \|w\|_2\|x - x_+\|_2 \le C \eta (1 + \|x-x^\star\|_2^2 + \|\nu\|_2^2). 
				$$
		Combining, we have
		\begin{align*}
		\dotp{G_\eta(x,\nu) -\nu, x- x^\star} \ge  \gamma \|x-x^\star\|^2 -  C \eta (1 + \|x-x^\star\|_2^2 + \|\nu\|_2^2).
		\end{align*}
		\paragraph{Stochastic forward-backward algorithm.} 
			First, we bound $\|G_\eta(x,\nu)\|_2$. By definition of proximal operator and Lipschitz property of $f$ and $A$, there exists some constant $C >0$ (may change from line to line) such that
		\begin{align*}
		\frac{\eta}{2}\|G_\eta(x,\nu)\|_2^2 &\le f(x) - f(x - \eta G_\eta(x,\nu)) + \eta \dotp{A(x) + \nu, G_\eta(x,\nu)}\\
		&\le C \eta \|G_\eta(x,\nu)\|_2 (1 + \|x - x^\star\|_2 + \|\nu\|_2).
		\end{align*}
		As a consequence, there exists a constant $C >0$ such that
		\begin{align}\label{eqn:step_bound_strong}
		\|G_\eta(x,\nu)\|_2 \le C (1 + \|x - x^\star\|_2 + \|\nu\|_2).
		\end{align}
		Therefore, item~\ref{item:global_steplength} follows.
		 Next, by the definition of $x^\star$, there exists $w^\star \in \partial f(x^\star)$ such that 
		$$
		0 = A(x^\star) + w^\star.
		$$
		For any $x$, we denote $x - \eta(A(x) + \nu)$ by $x^+$ and $\frac{x^+ - \prox_{\eta f}(x^+)}{\eta}$ by $w^+$. By the property of the proximal operator, we have $w^+ \in \partial f(x^+)$. Moreover, by strong monotonicity of $A$ and convexity of $f$,  we have
		\begin{align*}
		\dotp{A(x^+) + w^+, x^+ - x^\star} &= \dotp{ A(x^+) + w^+ - A(x^\star) - w^\star, x^+ - x^\star}\\
		&\ge \gamma \|x^+ - x^\star\|^2. \numberthis \label{eqn:proximal_grad_aiming_strong}
		\end{align*}
	
		In addition, by Lipschitz continuity of $A$ and $f$, there exists some constant $C >0$ ) such that 
		\begin{align}\label{eqn:one_step_bound_strong}
		\|x - x^+\|_2 \le C \eta(1 + \|x - x^\star\|_2 + \|\nu\|_2).
		\end{align}
		As a result of~\eqref{eqn:step_bound_strong} and~\eqref{eqn:one_step_bound_strong}, there exists constant $C>0$ such that
		\begin{align}\label{eqn:w_k_bound_strong}
		\|w^+\|_2 \le \frac{1}{\eta}\|x - x^+\|_2 + \|G_\eta(x,\nu)\|_2 \le  C (1 + \|x - x^\star\|_2 + \|\nu\|_2)
		\end{align}
		Note that $G_\eta(x,\nu) = w^+ + A(x) + \nu$, we have
		\begin{align*}
		\dotp{G_\eta(x,\nu) - \nu, x - x^\star}  &= \dotp{w^+ + A(x), x - x^\star}\\
		&= \underbrace{\dotp{A(x) - A(x^+), x - x^\star}}_{(I)} + \underbrace{\dotp{w^+ + A(x^+), x^+ - x^\star}}_{(II)} + \underbrace{\dotp{w^+ + A(x^+), x - x^+}}_{(III)}.
		\end{align*}
		We lower-bound each term separately. By Lipschitz continuity of $A$ and~\eqref{eqn:one_step_bound_strong}, we have
		\begin{align*}
		\|(I)\|_2 \le C (1 +\|x - x^\star\|_2^2  + \| \nu\|_2^2). 
		\end{align*}
		By~\eqref{eqn:proximal_grad_aiming_strong}, we have
		\begin{align*}
		(II) &\ge \gamma \|x^+ - x^\star\|^2 \\
		&\ge  \gamma\|x - x^\star\|_2^2 - 2\|x_+ - x\|_2 \|x- x^\star\|_2\\
		&\ge \gamma \|x - x^\star\|_2^2 - C(1+ \|x-x^\star\|_2^2 + \|\nu\|_2^2).
		\end{align*}
		Moreover, 
		\begin{align*}
		\|(III)\|_2 &\le  (\|w^+\|_2 + \|A(x_+)\|_2) \|x - x^+\|_2\\
		&\le  C(1+ \|x-x^\star\|_2^2 + \|\nu\|_2^2),
		\end{align*}
		where the last inequality follows from the Lipschitz continuity of $A$ and ~\eqref{eqn:step_bound_strong}. The results then follows by combining $(I), (II)$, and $(III)$.
\end{proof}

	\section{Technical lemmas}
	Recall that for a given index $k \geq 0$ and a constant $\delta \in (0,1)$, the stopping time is defined as
	\begin{align}\label{def:stoppingtime}
	\tau_{k, \delta} := \inf\{l \geq k \colon x_l  \notin B_{\delta}(x^\star)\},
	\end{align}
	which is the first time after $k$ that the iterate leaves $B_{\delta}(x^\star)$. Now, define $D_k := \dist(x_k, \cM)$, $v_k:= G_{\eta_{k+1}}(x_k,\nu_{k+1})$ for all $k \geq 0$. In what follows, $C$ denotes constant and may change from line to line.
	\begin{lemma}\label{lm:sumDksqrbounmd}
		Suppose that Assumptions~\ref{assumption:localbound},~\ref{assumption:Aproposed}, and~\ref{assumption:zero} hold. If $\alpha \in (1/2,1)$, then for any sufficiently small $\delta>0$, any $\ks \ge 0$, there exists a constant $C$ depending on $\delta$, $k_s$ and $\alpha$ such that for any $l\ge s \ge \ks$, 
		$$
		\textstyle\sum_{k=s}^{l} \EE[D_{k}^2 1_{\tau_{k_s, \delta} > k}] \le C k_s^{2\alpha}\textstyle\sum_{k=s}^{l} \stepsize_k^2.
		$$ 
	\end{lemma}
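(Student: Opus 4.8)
The plan is to turn the proximal aiming property into a one–step recursion for $D_k^2=\dist(x_k,\mathcal M)^2$ on the stopping–time event, and then propagate it. Since $P_{\mathcal M}(x_k)\in\mathcal M$ we have $D_{k+1}\le\|x_{k+1}-P_{\mathcal M}(x_k)\|$, and expanding $x_{k+1}=x_k-\eta_{k+1}v_k$ gives $D_{k+1}^2\le D_k^2-2\eta_{k+1}\langle v_k,\,x_k-P_{\mathcal M}(x_k)\rangle+\eta_{k+1}^2\|v_k\|^2$. On the event $\{\tau_{k_s,\delta}>k\}$ the point $x_k$ lies in $B_\delta(x^\star)\subseteq\mathcal U_F$ for $\delta$ small, so Assumption~\ref{assumption:localbound} bounds $\|v_k\|\le C(1+\|\nu_{k+1}\|)$, and the proximal aiming part of Assumption~\ref{assumption:Aproposed} bounds $\langle v_k-\nu_{k+1},\,x_k-P_{\mathcal M}(x_k)\rangle\ge\mu D_k-(1+\|\nu_{k+1}\|)^2(o(D_k)+C\eta_{k+1})$. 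I split $\langle v_k,\cdot\rangle=\langle v_k-\nu_{k+1},\cdot\rangle+\langle\nu_{k+1},\cdot\rangle$, noting that $\langle\nu_{k+1},\,x_k-P_{\mathcal M}(x_k)\rangle$ is a martingale increment because $x_k-P_{\mathcal M}(x_k)$ is $\cF_k$-measurable.

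Next I take $\EE_k[\cdot]$: the martingale term vanishes; the eighth–moment control $\EE_k[\|\nu_{k+1}\|^8]\le q(x_k)$ together with boundedness of $q$ on $B_\delta(x^\star)$ (Assumption~\ref{assumption:zero}) bounds $\EE_k[(1+\|\nu_{k+1}\|)^2]$ by a constant $Q$ on the event; and shrinking $\delta$ so that $o(t)\le\tfrac{\mu}{2Q}t$ for $t\le\delta$ absorbs the $o(D_k)$ term (legitimate since $D_k\le\|x_k-x^\star\|\le\delta$ there, as $x^\star\in\mathcal M$). This yields, on $\{\tau_{k_s,\delta}>k\}$, the inequality $\EE_k[D_{k+1}^2]\le D_k^2-\mu\eta_{k+1}D_k+C_2\eta_{k+1}^2$ with $C_2$ depending only on $C,\mu,Q$. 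Since $\tau_{k_s,\delta}$ is a stopping time, $\mathbf 1_{\tau_{k_s,\delta}>k+1}\le\mathbf 1_{\tau_{k_s,\delta}>k}$, so multiplying by this indicator and taking total expectation gives, with $d_k:=\EE[D_k^2\mathbf 1_{\tau_{k_s,\delta}>k}]$ and $e_k:=\EE[D_k\mathbf 1_{\tau_{k_s,\delta}>k}]$, the deterministic recursion $d_{k+1}\le d_k-\mu\eta_{k+1}e_k+C_2\eta_{k+1}^2$ for $k\ge k_s$, together with the a priori bound $d_k\le\delta^2$ for all $k\ge k_s$.

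It then suffices to prove $d_k\lesssim k_s^{2\alpha}\eta_k^2$ for all $k\ge k_s$, since summing this over $k=s,\dots,l$ and using $k_s^{2\alpha}\eta_k^2=k_s^{2\alpha}\eta^2 k^{-2\alpha}$ gives the claim. The crucial point — and the main obstacle — is that the drift $-\mu\eta_{k+1}D_k$ is \emph{linear} in $D_k$: bounding it crudely by $-\tfrac{\mu}{\delta}\eta_{k+1}D_k^2$ (using $D_k\le\delta$) collapses the recursion to a $(1-c\eta_{k+1})$-contraction forced by $C_2\eta_{k+1}^2$, which only delivers the \emph{non-summable} rate $d_k=O(\eta_k)$ and is too weak. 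Instead I run a two–regime argument controlled by a threshold $A\asymp C_2/\mu$: when $D_k\ge A\eta_{k+1}$ the linear drift forces the \emph{arithmetic} decrease $\EE_k[D_{k+1}^2]\le D_k^2-\tfrac{\mu}{2}\eta_{k+1}D_k$, while when $D_k<A\eta_{k+1}$ one has $\EE_k[D_{k+1}^2]\le(A^2+C_2)\eta_{k+1}^2$ outright. Consequently, starting from $d_{k_s}\le\delta^2$, the quantity $D_k^2$ is drained below the $\eta_k^2$ scale within a burn–in window $[k_s,K]$ whose length is $K-k_s\lesssim k_s^\alpha+1$ (the number of steps needed for $\sum_{j=k_s+1}^{K}\eta_j\asymp\delta/\mu$), after which $d_k$ remains in the invariant band $d_k\lesssim\eta_k^2$. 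On the window I use only $d_k\le\delta^2$ and $\eta_k\asymp\eta_{k_s}$ there, which gives $d_k\le\delta^2\lesssim(\delta^2/\eta^2)\,k_s^{2\alpha}\eta_k^2$; past $K$ I use $d_k\lesssim\eta_k^2\le k_s^{2\alpha}\eta_k^2$. The delicate step is making the ``invariant band'' claim rigorous in expectation, through the stopping–time indicator and the nonlinearities (the useful structure lives pathwise, while the controlled quantity $d_k$ is an expectation); I carry this out by a monotone comparison/induction argument in the spirit of the $\dist(\cdot,\mathcal M)$ estimates of \cite{davis2024asymptotic}, and it is precisely this burn–in window that produces the $k_s^{2\alpha}$ factor in the statement.
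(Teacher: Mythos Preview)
Your derivation of the one--step recursion $d_{k+1}\le d_k-\mu\eta_{k+1}e_k+C_2\eta_{k+1}^2$ (with $d_k=\EE[D_k^2\mathbf 1_{A_k}]$, $e_k=\EE[D_k\mathbf 1_{A_k}]$) is correct and matches the paper. The gap is in the second half: the claimed pointwise bound $d_k\lesssim\eta_k^2$ after burn--in cannot be extracted from this recursion, and the ``two--regime'' argument does not survive taking expectations.

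The obstruction is structural. The drift in the recursion is $-\mu\eta_{k+1}e_k$, and the only lower bound linking $e_k$ to $d_k$ is $e_k\ge d_k/\delta$ (Jensen goes the wrong way), which collapses to the crude $(1-c\eta_{k+1})$ contraction you already rejected. The pathwise statement $\EE_k[D_{k+1}^2]\le\max\{D_k^2-\tfrac{\mu}{2}\eta_{k+1}D_k,\ (A^2+C_2)\eta_{k+1}^2\}$ does not imply the corresponding inequality for $d_{k+1}$, because the threshold event $\{D_k\ge A\eta_{k+1}\}$ is random. Concretely, a two--point process with $D_k\in\{0,\delta\}$ can be made to satisfy your conditional recursion exactly while having $d_k\asymp\eta_k$ (not $\eta_k^2$); what rules this out in the real problem is fourth--moment information, which you never invoke. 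No monotone comparison on the $D_k^2$ recursion alone can close this.

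The paper's proof supplies the missing ingredient by also expanding $D_{k+1}^4$, telescoping to bound $\sum_k\eta_{k+1}^{-1}\EE[D_k^3\mathbf 1_{A_k}]$ (after dividing by $\eta_{k+1}^2$ and using $(1-\mu\eta_{k+1})/\eta_{k+1}^2\le 1/\eta_k^2$), and telescoping your recursion to bound $\sum_k\eta_{k+1}\EE[D_k\mathbf 1_{A_k}]$. It then applies Cauchy--Schwarz,
\[
\sum_k \EE[D_k^2\mathbf 1_{A_k}]\;\le\;\Bigl(\sum_k\eta_{k+1}\EE[D_k\mathbf 1_{A_k}]\Bigr)^{1/2}\Bigl(\sum_k\eta_{k+1}^{-1}\EE[D_k^3\mathbf 1_{A_k}]\Bigr)^{1/2},
\]
and solves the resulting self--referential inequality. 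The initial terms $\EE[D_s^2\mathbf 1_{A_s}]\lesssim k_s^\alpha\eta_s$ and $\EE[D_s^4\mathbf 1_{A_s}]/\eta_s^2\lesssim k_s^{3\alpha}\eta_s$ (Lemma~\ref{lem: second_fourth_moment}) produce the factors $k_s^\alpha$ and $k_s^{3\alpha}$, whose geometric mean is the $k_s^{2\alpha}$ in the statement. In particular, the paper never claims a pointwise $O(\eta_k^2)$ bound on $d_k$; only $d_k\lesssim k_s^\alpha\eta_k$ holds pointwise, and the summed estimate is obtained through this first/third moment interpolation.
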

	\begin{proof}
		First, we note that it suffices to show the result for all $\ks \ge \left(\frac{4\alpha}{\mu \eta}\right)^{1/(1-\alpha)}$ since the cases when $\ks \le \left(\frac{4\alpha}{\mu \eta}\right)^{1/(1-\alpha)}$ can be handled by enlarging $C$ properly.
		Define $A_k := \{\tau_{k_s, \delta} > k\}$ for all $k \ge k_s$. We require that $\delta$ is small enough so that $B_{\delta}(x^\star)$ is contained in the neighborhood where Assumption~\ref{assumption:Aproposed} holds with probability 1. Note that we require $k_s$ (or $\eta$) to be large enough so the conclusions of  Lemma~\ref{lem: supportthirdorder} holds for all $k\ge k_s$. We first prove a recurrence relation satisfied by the sequence $D_k$. To that end, recall the update rule~\eqref{eqn: updaterule}, for all $k \ge 0$, when $x_k \in B_\delta(x^\star)$, we have
		\begin{equation}\label{eqn: stayinball1}
		\begin{aligned}
		D_{k+1}^2 &\le \norm{x_{k+1} - P_{\cM}(x_{k})}^2\\
		&=\norm{x_k - \stepsize_{k+1} v_{k}-P_{\cM}(x_k)}^2 \\
		&= \norm{x_k - P_{\cM}(x_k)}^2 - 2 \stepsize_{k+1}\dotp{v_k, x_k - P_{\cM}(x_{k})} +  \stepsize_{k+1}^2 \|v_k\|^2 \\
		&\leq D_k^2  - 2\stepsize_{k+1} \mu D_k + 2\stepsize_{k+1} (1+\|\nu_{k+1}\|)^2o(D_k)  \\
		&\hspace{20pt}- 2\stepsize_{k+1}\dotp{\perturb_{k+1}, x_{k} - \proj_{\cM}(x_k)}+ \underbrace{C(1+\|\perturb_{k+1}\|)^2}_{:=B_{k+1}}\stepsize_{k+1}^2,
		\end{aligned}
		\end{equation}
		where the second inequality follows from Assumption~\ref{assumption:localbound} and Condition~\ref{assumption:aiming} of Assumption~\ref{assumption:Aproposed}. Note that the bound $\EE_k[\|\nu_{k+1}\|^4]1_{A_k} \leq q(x_k)1_{A_k}$ implies that there exists $C > 0$ such that 
		$$
		\EE_k[B_{k+1}]1_{A_k} \leq C,
		$$
		meaning the conditional expectation is bounded for all $i$. Moreover, by shrinking $\delta$ if necessary, we have 
		$$
		\EE_k[(1+\|\perturb_{k+1}\|)^2o(D_k)1_{A_k}] \leq \frac{\mu}{2}D_k1_{A_k}.
		$$
		Thus, for each $k \ge k_s$, we have
		\begin{align}
		\EE_{k}[D_{k+1}^2 1_{A_{k+1}}] &\leq \EE_k[D_{k+1}^2 1_{A_{k}}] \notag\\
		&\le D_{k}^21_{A_k}  - \mu\stepsize_{k+1} D_{k} 1_{A_k} + C\stepsize_{k+1}^2 \label{eq:decrease}
		\end{align}
		where the first inequality follows from $1_{A_{k+1}}\le 1_{A_k}$, the second inequality the assumption that $\{\nu_k\}$ is a martingale difference sequence and $A_k$ is $\cF_k$ measurable. 
		Taking expectations on both sides, we have 
		\begin{align}\label{eqn: telescope}
		\EE[D_{k+1}^21_{A_{k+1}}]\le  \EE[D_{k}^21_{A_k}] - \mu \eta_{k+1}\EE[D_{k}1_{A_{k}}] + C\stepsize_{k+1}^2. 
		\end{align}
		Summing~\eqref{eqn: telescope} from $k = s$ to $l$ and using Lemma~\ref{lem: second_fourth_moment}, we have
		\begin{align}\label{eqn: firstmomentsum}
		\textstyle\sum_{k=s}^{l} \stepsize_{k+1}\EE[D_{k}1_{A_k}] \lesssim \EE[D_s^21_{A_s}] +  \textstyle\sum_{k=s}^{l} \eta_{k+1}^2\lesssim \ks^\alpha \eta_s+ \textstyle\sum_{k=s}^{l} \eta_{k+1}^2\lesssim k_s^\alpha \textstyle\sum_{k=s}^{l} \eta_{k+1}^2.  
		\end{align}
		
		On the other hand, when $x_k \in B_\delta(x^\star)$, we have

		\begin{align*}
		D_{k+1}^4& \leq \norm{x_k-\eta_{k+1}v_k-P_\cM(x_k)}^4\\
		&= D_k^4-4\eta_{k+1}\inner{v_k, x_k-P_\cM(x_k)}D_k^2+\eta_{k+1}^4\norm{v_k}^4+2\eta_{k+1}^2D_k^2\norm{v_k}_2^2
		+4\eta_{k+1}^2\inner{v_k, x_k-P_\cM(x_k)}^2\\
		&\quad -4\eta_{k+1}^3\inner{ v_k, x_k-P_\cM(x_k)}\|v_k\|^2 \numberthis \label{eqn: equality}\\
		&\leq  D_k^4 - 4 \mu\eta_{k+1} D_k^3 + 4 \eta_{k+1} D_k^2 (1+ \|\nu_{k+1}\|^2) o(D_k) - 4\eta_{k+1} \dotp{\nu_{k+1}, x_k - P_\cM(x_k)}D_k^2 + \eta_{k+1}^4\|v_k\|^4\\
		&\quad + 6\eta_{k+1}^2 D_k^2 \|v_k\|^2 + 4 \eta_{k+1}^3 D_k \|v_k\|^3,\numberthis \label{eqn: inequality}
		\end{align*}
		where the equality~\eqref{eqn: equality} follows from expanding the fourth power directly and the estimate~\eqref{eqn: inequality} follows from Conditioning~\ref{assumption:aiming} of Assumption~\ref{assumption:Aproposed} and Cauchy-Schwarz inequality. Thus, there exists constant $C>0$ such that for each $i\ge 0$, we have 
		\begin{align}\label{eqn: fourthpoweronestep}
		\EE_{k}[D_{k+1}^41_{A_{k+1}}] &\le \EE_{k}[D_{k+1}^41_{A_{k}}] \notag\\
		&\le  D_{k}^41_{A_k} - 2\mu\stepsize_{k+1} D_{k}^31_{A_k}  + C\stepsize_{k+1}^2 D_{k}^21_{A_k}  + C\stepsize_{k+1}^3 D_{k}1_{A_k} +  C\stepsize_{k+1}^4 \notag\\
		&\le (1 - \mu\stepsize_{k+1} )D_{k}^41_{A_k} - \mu\stepsize_{k+1} D_{k}^31_{A_k}  + C\stepsize_{k+1}^2 D_{k}^21_{A_k} +  C\stepsize_{k+1}^3 D_{k}1_{A_k} + C\stepsize_{k+1}^4,
		\end{align}
		where the first inequality follows from $1_{A_{k+1}} \le 1_{A_k}$, the second inequality follows from the assumption that $\{\nu_{k+1}\}$ is a martingale difference sequence, our choice of $\delta$, and the bound on the fourth moment of $\nu_i$, and the third inequality follows from the assumption that $\delta <1$. By Lemma~\ref{lem: supportthirdorder}, for all $k \ge k_s$, we have 
		$$
		\frac{1 - \mu\stepsize_{k+1}}{\stepsize_{k+1}^2} \le \frac{1}{\stepsize_{k}^2}.
		$$
		Taking expectation and dividing both sides of~\eqref{eqn: fourthpoweronestep} by $\eta_{k+1}^2$, we have 
		\begin{align}\label{eqn: telescopecubic}
		\frac{\EE[D_{k+1}^41_{A_{k+1}}]}{\stepsize_{k+1}^2} \le \frac{\EE[D_k^41_{A_k}]}{\stepsize_{k}^2} - 2\frac{\mu}{\stepsize_{k+1}} \EE[D_{k}^31_{A_k}] + C \EE[D_{k}^21_{A_k}]+  C\stepsize_{k+1} \EE[D_{k}1_{A_k}] + C\stepsize_{k+1}^2
		\end{align}
		For any index $l \ge s \ge k_s$, summing~\eqref{eqn: telescopecubic} from $s$ to $l$, we have
		\begin{align}
		\textstyle\sum_{k=s}^{l} \frac{1}{\stepsize_{k+1}}\EE[D_{k}^31_{A_k}] &\le  \frac{\EE[D_s^41_{A_s}]}{\stepsize_{s}^2} + C\left(\textstyle\sum_{k=s}^{l} \EE[D_{k}^21_{A_k}] +  \textstyle\sum_{k=s}^{l}\stepsize_{k+1} \EE[D_{k}1_{A_k}]+ \textstyle\sum_{k=s}^{l}\stepsize_{k+1}^2\right) \notag\\
		&\lesssim \textstyle\sum_{k=s}^{l} \EE[D_{k}^21_{A_k}] + \ks^{3\alpha}\textstyle\sum_{k=s}^{l} \eta_{k+1}^2, \label{eqn: thridmomentumsum} 
		\end{align}
		
		where the second inequality follows from Lemma~\ref{lem: second_fourth_moment} and the estimate~\eqref{eqn: firstmomentsum}.  Combining~\eqref{eqn: firstmomentsum} and~\eqref{eqn: thridmomentumsum}, we have
		\begin{align*}
		\textstyle\sum_{k=s}^{l} \EE[D_{k}^21_{A_k}] & \le  \sqrt{ \textstyle\sum_{k=s}^{l}\stepsize_{k+1}\EE[D_k1_{A_k}] \cdot \textstyle\sum_{k= s}^{l} \frac{1}{\stepsize_{k+1}}\EE[D_{k}^31_{A_k}]} \\
		&\lesssim  \sqrt{ k_s^{\alpha}\textstyle\sum_{k=s}^{l} \stepsize_{k+1}^2 \cdot (\ks^{3\alpha}\textstyle\sum_{k=s}^{l} \stepsize_{k+1}^2 +\textstyle\sum_{k=s}^{l} \EE[D_{k}^21_{A_k}] )},
		\end{align*}
		where the first inequality follows from Holder's inequality.
		Simple calculation yields
		$$
		\textstyle\sum_{k=s}^{l} \EE[D_{k}^21_{A_k}] \lesssim \ks^{2\alpha} \textstyle\sum_{k=s}^{l} \stepsize_{k+1}^2,
		$$
		as desired.
	\end{proof}

	\begin{lemma}\label{lem: second_fourth_moment}
		Suppose that Assumption~\ref{assumption:localbound},~\ref{assumption:Aproposed}, and~\ref{assumption:zero} hold. Then for any sufficiently small $\delta>0$, there exists a constant $C>0$ such that for any $\ks \ge 1$, and any $k \ge \ks$, 
		\begin{align*}
		\EE[D_k^21_{\tau_{\ks,\delta} >k}] \le C k_s^\alpha \eta_k,\quad \EE[D_k^41_{\tau_{\ks,\delta} >k}] \le C k_s^{3\alpha} \eta_k^3.
		\end{align*}
	\end{lemma}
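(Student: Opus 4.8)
The plan is to iterate the one–step drift recursions derived in the proof of Lemma~\ref{lm:sumDksqrbounmd}, namely \eqref{eq:decrease} for the second moment and \eqref{eqn: fourthpoweronestep} for the fourth moment. Write \(A_k:=\{\tau_{\ks,\delta}>k\}\), so that \(\indic{k}=\mathbbm{1}_{A_k}\). The single geometric fact that makes both recursions contract is that on \(A_k\) we have \(x_k\in B_\delta(x^\star)\) and, since \(x^\star\in\cM\), therefore \(D_k=\dist(x_k,\cM)\le\|x_k-x^\star\|<\delta\). As in the opening reduction in the proof of Lemma~\ref{lm:sumDksqrbounmd}, it suffices to prove the two bounds for all \(\ks\) exceeding a fixed, \(\ks\)-independent threshold \(k_0\) depending only on \(\mu,\delta,\eta,\alpha\); the finitely many remaining values of \(\ks\), together with the \(O(1)\) initial indices \(\ks\le k\le k_0\) in each admissible case, are absorbed by enlarging \(C\), because there \(\EE[D_k^2\indic{k}]\le\delta^2\), \(\EE[D_k^4\indic{k}]\le\delta^4\), while \(\ks^\alpha\eta_k\) and \(\ks^{3\alpha}\eta_k^3\) are bounded below by positive constants. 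I will freely use \(\eta_k-\eta_{k+1}\le\tfrac{\alpha}{k}\eta_k\), \(\eta_k^3-\eta_{k+1}^3\le\tfrac{3\alpha}{k}\eta_k^3\), \(\eta_{k+1}\le\eta_k\le 2\eta_{k+1}\), and \(\ks\ge 1\).

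For the second moment, use \(D_k\ge D_k^2/\delta\) on \(A_k\) in \eqref{eq:decrease}, take expectations, and set \(u_k:=\EE[D_k^2\indic{k}]\) to get the linear recursion
\begin{align*}
u_{k+1}\ \le\ \Bigl(1-\tfrac{\mu}{\delta}\eta_{k+1}\Bigr)u_k+C\eta_{k+1}^2,\qquad k\ge\ks.
\end{align*}
I would then prove \(u_k\le C'\ks^\alpha\eta_k\) for all \(k\ge\ks\) by induction on \(k\): the base case \(u_{\ks}\le\delta^2\le C'\ks^\alpha\eta_{\ks}=C'\eta\) holds once \(C'\ge\delta^2/\eta\); the inductive step splits the required inequality \(C'\ks^\alpha(\eta_k-\eta_{k+1})+C\eta_{k+1}^2\le\tfrac{\mu}{\delta}\eta_{k+1}C'\ks^\alpha\eta_k\) into \(C'\ks^\alpha(\eta_k-\eta_{k+1})\le\tfrac{\mu}{2\delta}\eta_{k+1}C'\ks^\alpha\eta_k\) (which reduces to \(\tfrac{\alpha}{k}\le\tfrac{\mu}{2\delta}\eta_{k+1}\), valid for \(k\ge k_0\)) and \(C\eta_{k+1}^2\le\tfrac{\mu}{2\delta}\eta_{k+1}C'\ks^\alpha\eta_k\) (valid once \(C'\ge 2\delta C/\mu\), using \(\eta_{k+1}\le\eta_k\) and \(\ks^\alpha\ge1\)).

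For the fourth moment, the key step is to absorb the subordinate forcing terms of \eqref{eqn: fourthpoweronestep} into the cubic drift term \(-\mu\eta_{k+1}D_k^3\indic{k}\) by Young's inequality with stepsize-dependent weights, chosen so that each cubic term produced carries coefficient \(\tfrac{\mu}{2}\eta_{k+1}\):
\begin{align*}
C\eta_{k+1}^2 D_k^2\ \le\ \tfrac{\mu}{2}\eta_{k+1}D_k^3+C_1\eta_{k+1}^4,\qquad C\eta_{k+1}^3 D_k\ \le\ \tfrac{\mu}{2}\eta_{k+1}D_k^3+C_2\eta_{k+1}^4,
\end{align*}
so that \(-\mu\eta_{k+1}D_k^3+C\eta_{k+1}^2 D_k^2+C\eta_{k+1}^3 D_k\le C_3\eta_{k+1}^4\) on \(A_k\). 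Substituting into \eqref{eqn: fourthpoweronestep} and taking expectations gives, with \(v_k:=\EE[D_k^4\indic{k}]\), the linear recursion \(v_{k+1}\le(1-\mu\eta_{k+1})v_k+C_4\eta_{k+1}^4\). I would then run the same induction with target \(v_k\le C'\ks^{3\alpha}\eta_k^3\): the base case is \(v_{\ks}\le\delta^4\le C'\eta^3\); the inductive step uses \(C'\ks^{3\alpha}(\eta_k^3-\eta_{k+1}^3)\le\tfrac{\mu}{2}\eta_{k+1}C'\ks^{3\alpha}\eta_k^3\) (reducing to \(\tfrac{3\alpha}{k}\le\tfrac{\mu}{2}\eta_{k+1}\), valid for \(k\ge k_0\) after enlarging \(k_0\)) together with \(C_4\eta_{k+1}^4\le\tfrac{\mu}{2}\eta_{k+1}C'\ks^{3\alpha}\eta_k^3\) (valid once \(C'\ge 2C_4/\mu\), using \(\eta_{k+1}^3\le\eta_k^3\) and \(\ks^{3\alpha}\ge1\)). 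This yields the claimed bound.

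The main obstacle is the fourth-moment estimate, and specifically obtaining the rate \(\eta_k^3\) rather than \(\eta_k^2\): a naive treatment that bounds \(\eta_{k+1}^2 D_k^2\) via the second-moment estimate \(\EE[D_k^2\indic{k}]\lesssim\ks^\alpha\eta_k\) only delivers \(\EE[D_k^4\indic{k}]\lesssim\ks^\alpha\eta_k^2\), which is \emph{not} bounded by \(C\ks^{3\alpha}\eta_k^3\), since the ratio \(\ks^\alpha\eta_k^2/(\ks^{3\alpha}\eta_k^3)=\eta_k^{-1}\ks^{-2\alpha}\) is unbounded in \(k\). It is precisely the Young-inequality absorption of \emph{both} lower-order terms into the cubic drift that upgrades the decay from \(\eta_k^2\) to \(\eta_k^3\), and it requires no input on \(\EE[D_k^2]\) at all. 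A secondary point, common to both parts, is the transient regime \(k\asymp\ks\) where the geometric contraction has not yet accumulated: this is exactly what the prefactors \(\ks^\alpha\) and \(\ks^{3\alpha}\) exist to cover (they make the base case of the induction hold uniformly in \(\ks\)), and it is disposed of by the reduction to large \(\ks\) together with the crude bound \(D_k\le\delta\).
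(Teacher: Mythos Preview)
Your proof is correct and follows essentially the same route as the paper: both reduce to the linear recursions $u_{k+1}\le(1-c\,\eta_{k+1})u_k+C\eta_{k+1}^2$ and $v_{k+1}\le(1-\mu\eta_{k+1})v_k+C\eta_{k+1}^4$ and then solve them. The only cosmetic differences are that the paper takes $\delta\le1$ (so $D_k\ge D_k^2$) in place of your $D_k\ge D_k^2/\delta$, absorbs the subordinate fourth-moment terms by a case split $D_k\gtrless\tilde C\eta_{k+1}$ rather than by your Young's inequalities, and closes the recursions via the abstract Lemma~\ref{lem:seq_bound_lem} rather than your direct induction.
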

	\begin{proof}
		We require that $\delta$ is small enough so that $\delta \le 1$ and $B_{\delta}(x^\star)$ is contained in the neighborhood where Assumption~\ref{assumption:Aproposed} holds with probability 1.	Define $A_k := \{\tau_{k_s, \delta} > k\}$ for all $k \ge k_s$. Following the calculation in~\eqref{eqn: stayinball1}, we obtain~\eqref{eq:decrease}. Consequently,
		\begin{align*}
		\EE_{k}[D_{k+1}^2 1_{A_{k+1}}] &\le D_{k}^21_{A_k}  - \mu\stepsize_{k+1} D_{k} 1_{A_k} + C\stepsize_{k+1}^2\\
		&\le  (1-\mu \stepsize_{k+1})D_{k}^21_{A_k}  + 	C\stepsize_{k+1}^2,
		\end{align*}
		where the second inequality follows from $\delta \le 1$. Taking expectations, we have 
		\begin{align*}
		\EE[D_{k+1}^2 1_{A_{k+1}}] \le  (1-\mu \stepsize_{k+1}) \EE[D_{k}^21_{A_k}]  + 	C\stepsize_{k+1}^2.
		\end{align*}
		By Lemma~\ref{lem:seq_bound_lem}, there exists a constant $C$ such that for any $k \ge k_s$,
		\begin{align*}
		\EE[D_k^2 1_{A_k}] \le Ck_s^{\alpha} \eta_k.
		\end{align*} 
		On the other hand, by the same argument of the proof of Lemma~\ref{lm:sumDksqrbounmd}, we have~\eqref{eqn: fourthpoweronestep}, which reads
		\begin{align*}
		\EE_k[D_{k+1}^41_{A_{k+1}}] \le (1 - \mu\stepsize_{k+1} )D_{k}^41_{A_k} - \mu\stepsize_{k+1} D_{k}^31_{A_k}  + C\stepsize_{k+1}^2 D_{k}^21_{A_k} +  C\stepsize_{k+1}^3 D_{k} 1_{A_k} + C\stepsize_{k+1}^4.
		\end{align*}
		Note that there exists a constant $\tilde C$ depending only on $\mu$ and $C$ such that when $D_k \ge \tilde C \stepsize_{k+1}$, we have 
		$$- \mu\stepsize_{k+1} D_{k}^31_{A_k}  + C\stepsize_{k+1}^2 D_{k}^21_{A_k} +  C\stepsize_{k+1}^3 D_{k}1_{A_k} \le 0.$$ 
		When $D_k \le \tilde C\stepsize_{k+1}$, we have 
		$$
		- \mu\stepsize_{k+1} D_{k}^31_{A_k}  + C\stepsize_{k+1}^2 D_{k}^21_{A_k} +  C\stepsize_{k+1}^3 D_{k}1_{A_k} \le (C\tilde C^2 + C \tilde C + C) \eta_{k+1}^4.
		$$
		Therefore, by enlarging $C$ if necessary, we always have 
		\begin{align*}
		\EE_k[D_{k+1}^41_{A_{k+1}}] \le (1 - \mu\stepsize_{k+1} )D_{k}^41_{A_k} + C\stepsize_{k+1}^4.
		\end{align*}
		Taking expectations, we have 
		\begin{align*}
		\EE[D_{k+1}^41_{A_{k+1}}] \le (1 - \mu\stepsize_{k+1} )\EE[D_{k}^41_{A_k}] + C\stepsize_{k+1}^4.
		\end{align*}
		By Lemma~\ref{lem:seq_bound_lem}, there exists a constant $C$ such that for any $k \ge k_s$, 
		\begin{align*}
		\EE[D_k^4 1_{A_k}] \le C k_s^{3\alpha} \eta_k^3.
		\end{align*}
	\end{proof}

	We have the following lemma for the size of $E_k$.
	\begin{lemma}\label{lem:sizeofE_k}
		Suppose that  Assumption~\ref{ass:basic_assumpt} --~\ref{assum:bounded_seq} hold.  Let $\delta >0$ be small enough so that Assumption~\ref{ass:basic_assumpt} --~\ref{assumption:Aproposed} hold inside $B_\delta(x^\star)$.  For any $k \ge 0$, we have
		\begin{align*}
		\EE[\|E_k\|_2^2] \lesssim \left(\frac{\delta}{\eta_{k+1}}\right)^2 + \delta^2 + \cub
		\end{align*}
		Additionally, for any $\ks \ge 0$,  and $j \ge i \ge \ks$, we have
		%		\EE[\|E_k\|_2^2\indic{k}] \lesssim k_s^{\alpha} \eta_k,\quad 	\EE[\|E_k\|_2^4\indic{k}] \lesssim \ks^{3\alpha} \eta_k^3, \quad \text{and} \quad
		\begin{align*}
		\textstyle\sum_{k=i}^{j}\EE[\|E_k\|_2^2\indic{k}] \lesssim  \textstyle\sum_{k = i}^{j} k_s^{2\alpha} \eta_k^2. \numberthis \label{eqn:sum_ek_square}
		\end{align*}
	\end{lemma}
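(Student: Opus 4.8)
The starting point is the shadow recursion~\eqref{eqn:shadow:eq:iteration}, which I would rearrange as
$$\eta_{k+1} E_k = (y_{k+1} - y_k) + \eta_{k+1} F_{\cM}(y_k) + \eta_{k+1} P_{T_{\cM}(y_k)}(\nu_{k+1}).$$
For the first (crude, unconditional) estimate I would use only the fact, guaranteed by Proposition 6.3 of \cite{davis2024asymptotic}, that $y_k, y_{k+1} \in B_{4\delta}(x^\star)\cap\cM$ for every $k$. Then $\|y_{k+1}-y_k\|_2\le 8\delta$; choosing $\delta$ small enough that $B_{4\delta}(x^\star)\cap\cM\subseteq V\cap\cM$ and noting $F_\cM(x^\star)=0$, Assumption~\ref{ass:basic_assumpt} gives $\|F_\cM(y_k)\|_2\le \lm\|y_k-x^\star\|_2\le 4\lm\delta$; and $\|P_{T_\cM(y_k)}(\nu_{k+1})\|_2\le\|\nu_{k+1}\|_2$. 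Dividing by $\eta_{k+1}$, squaring, and taking expectations yields $\EE[\|E_k\|_2^2]\lesssim (\delta/\eta_{k+1})^2+\delta^2+\EE[\|\nu_{k+1}\|_2^2]$, and $\EE[\|\nu_{k+1}\|_2^2]\lesssim 1+\cub\lesssim\cub$ by Assumptions~\ref{assumption:martinagle} and~\ref{assum:bounded_seq} (enlarging $\cub$ if necessary). This gives the first inequality.

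For the summed estimate~\eqref{eqn:sum_ek_square} I would instead invoke the quantitative bound attached to Proposition 6.3 of \cite{davis2024asymptotic}, which combines the $C^p$-smoothness of $P_\cM$ near $x^\star$ with the tangent-comparison and proximal-aiming conditions of Assumption~\ref{assumption:Aproposed} and the steplength bound of Assumption~\ref{assumption:localbound}: whenever $x_k\in B_{2\delta}(x^\star)$ one has
$$\|E_k\|_2 \le C(1+\|\nu_{k+1}\|_2)^2\bigl(\dist(x_k,\cM)+\eta_{k+1}\bigr) = C(1+\|\nu_{k+1}\|_2)^2(D_k+\eta_{k+1}).$$
On the event $\{\tau_{\ks,\delta}>k\}$ we have $x_k\in B_\delta(x^\star)\subseteq B_{2\delta}(x^\star)$, so this applies, and moreover $q(x_k)\le \sup_{B_\delta(x^\star)}q=:\bar q<\infty$. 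Since $\indic{k}$ and $D_k$ are $\cF_k$-measurable, conditioning on $\cF_k$ and using $\EE_k[(1+\|\nu_{k+1}\|_2)^4]\le 8(1+\EE_k[\|\nu_{k+1}\|_2^4])\le 8(1+\bar q^{1/2})$ (Jensen together with Assumption~\ref{assumption:zero}) on this event gives
$$\EE[\|E_k\|_2^2\indic{k}] \lesssim \EE\bigl[(D_k^2+\eta_{k+1}^2)\indic{k}\,\EE_k[(1+\|\nu_{k+1}\|_2)^4]\bigr] \lesssim \EE[D_k^2\indic{k}] + \eta_{k+1}^2.$$
Summing over $k=i,\dots,j$ and applying Lemma~\ref{lm:sumDksqrbounmd} (valid since $j\ge i\ge\ks$ and $\alpha\in(1/2,1)$) to the first term, together with $\sum_{k=i}^j\eta_{k+1}^2\asymp\sum_{k=i}^j\eta_k^2\le\ks^{2\alpha}\sum_{k=i}^j\eta_k^2$ (as $\ks\ge1$), produces $\sum_{k=i}^j\EE[\|E_k\|_2^2\indic{k}]\lesssim\ks^{2\alpha}\sum_{k=i}^j\eta_k^2$, which is~\eqref{eqn:sum_ek_square}.

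The one genuinely delicate ingredient is the pointwise estimate $\|E_k\|_2\lesssim(1+\|\nu_{k+1}\|_2)^2(D_k+\eta_{k+1})$ on the good event: it requires tracking how the nonlinear projection $P_\cM$ distorts the single update $x_{k+1}=x_k-\eta_{k+1}v_k$, which is exactly where the $C^p$-smoothness of $P_\cM$, the derivative formula $\nabla P_\cM(x)=P_{T_\cM(P_\cM(x))}+O(\dist(x,\cM))$, and Items~\ref{assumption:smoothcompatibility}--\ref{assumption:aiming} of Assumption~\ref{assumption:Aproposed} enter. Fortunately this is precisely the content of Proposition 6.3 in \cite{davis2024asymptotic}, so it can be cited rather than reproved. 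Everything else is routine: the crude bound is a one-line triangle inequality on the rearranged recursion, and the summed bound reduces to a direct appeal to Lemma~\ref{lm:sumDksqrbounmd}; the only points requiring care are that $\indic{k}$ is $\cF_k$-measurable (so it may be pulled through the conditional expectation) and that on $\{\tau_{\ks,\delta}>k\}$ the iterate $x_k$ genuinely lies in $B_\delta(x^\star)$, which is what lets us both bound $q(x_k)$ and identify $y_k=P_\cM(x_k)$.
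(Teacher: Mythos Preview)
Your proposal is correct and follows essentially the same route as the paper's own proof: the crude bound is obtained by rearranging the shadow recursion and using $y_k,y_{k+1}\in B_{4\delta}(x^\star)$ together with Assumptions~\ref{assumption:martinagle} and~\ref{assum:bounded_seq}, while the summed bound invokes the pointwise estimate $\|E_k\|\lesssim(1+\|\nu_{k+1}\|)^2(D_k+\eta_{k+1})$ from Proposition~6.3 of \cite{davis2024asymptotic} and then applies Lemma~\ref{lm:sumDksqrbounmd}. Your write-up is in fact more detailed than the paper's (which compresses both steps to a few lines), and your care about conditioning on $\cF_k$ and bounding $q(x_k)$ on the good event is well placed.
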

	\begin{proof}
		By definition, we always have
		\begin{equation}\label{eqn:Ek_naive_bound}
		\begin{aligned}
		\EE[\|E_k\|_2^2] &= \EE\left[ \left\|\frac{y_{k+1}- y_k}{\eta_{k+1}} + F_{\cM}(y_k) + P_{T_\cM(y_k)}(\nu_k)\right\|_2^2\right]\\
		&\lesssim \left[ \left(\frac{\delta}{\eta_{k+1}}\right)^2 + \delta^2 + \cub\right],
		\end{aligned}
		\end{equation}
		where the last inequality follows from $y_k \in B_{4\delta}(x^\star)$, the smoothness of $F_\cM$, and Assumption~\ref{assumption:martinagle} and~\ref{assum:bounded_seq}. On the other hand, by~\cite[Proposition 6.3, item 2(a)]{davis2025active}, we have 
		\begin{align*}
		\|E_k\|_2 1_{\tau_{\ks,\delta} >k} \lesssim (1+\|\nu_k\|^2)(D_k + \eta_k). \numberthis\label{eq:ek2bound}
		\end{align*}
		The estimate~\ref{eqn:sum_ek_square} then follows from Assumption~\ref{assumption:zero} and Lemma~\ref{lm:sumDksqrbounmd}.
	\end{proof}
	
	\begin{lemma}\label{lm:yk4momentconv}
		Suppose that Assumption~\ref{ass:basic_assumpt} --~\ref{assum:bounded_seq} hold. Let $\delta >0$ be small enough so that Assumption~\ref{ass:basic_assumpt} --~\ref{assumption:Aproposed} hold inside $B_\delta(x^\star)$.  For any $\ks \ge 0$ and $k \ge \ks$, we have
		\begin{align*}
		\expec{\|y_{k}-x^\star\|_2^p\indic{k}}{}\lesssim  k_s^{p\alpha/2}\eta_{k}^{p/2}, \quad p = 1, 2, 4.
		\end{align*}
	\end{lemma}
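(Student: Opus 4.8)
\textbf{Proof proposal for Lemma~\ref{lm:yk4momentconv}.}

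The plan is to derive a one-step contraction-type recursion for $\EE[\|y_k - x^\star\|_2^p\,\indic{k}]$ directly from the shadow recursion~\eqref{eqn:shadow:eq:iteration} and Assumption~\ref{ass:basic_assumpt}, and then invoke the sequence lemma (Lemma~\ref{lem:seq_bound_lem}, as already used in Lemma~\ref{lem: second_fourth_moment}) to convert the recursion into the stated decay rate $k_s^{p\alpha/2}\eta_k^{p/2}$. It suffices to treat $p=2$ and $p=4$; the case $p=1$ follows from $p=2$ by Jensen's inequality (or directly by Cauchy--Schwarz, $\EE[\|y_k-x^\star\|_2\indic{k}] \le \sqrt{\EE[\|y_k-x^\star\|_2^2\indic{k}]}$, since $\indic{k}^2 = \indic{k}$).

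For $p=2$: start from $y_{k+1} = y_k - \eta_{k+1}F_{\cM}(y_k) - \eta_{k+1}P_{T_{\cM}(y_k)}(\nu_{k+1}) + \eta_{k+1}E_k$. Write $\Delta y_k := y_k - x^\star$ and expand $\|\Delta y_{k+1}\|_2^2$. The cross term with the noise $P_{T_{\cM}(y_k)}(\nu_{k+1})$ vanishes in conditional expectation after multiplying by $\indic{k}$ (which is $\cF_k$-measurable) because $\nu_{k+1}$ is a martingale difference sequence by Assumption~\ref{assumption:zero}; here one should be slightly careful that $P_{T_\cM(y_k)}$ depends on $y_k$, which is $\cF_k$-measurable, so $\EE_k[P_{T_\cM(y_k)}(\nu_{k+1})] = P_{T_\cM(y_k)}(\EE_k[\nu_{k+1}]) = 0$. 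The quadratic-growth inequality~\eqref{eqn:QG} of Assumption~\ref{ass:basic_assumpt} gives $\langle F_\cM(y_k), \Delta y_k\rangle \ge \gamma\|\Delta y_k\|_2^2$, which supplies the needed contraction factor $(1 - 2\gamma\eta_{k+1} + O(\eta_{k+1}^2))$ on the leading term, using also the $L_\cM$-Lipschitz bound on $F_\cM$ to control $\eta_{k+1}^2\|F_\cM(y_k)\|_2^2 \lesssim \eta_{k+1}^2\|\Delta y_k\|_2^2$. The remaining terms are $\eta_{k+1}^2\EE_k[\|P_{T_\cM(y_k)}(\nu_{k+1})\|_2^2]\indic{k} \lesssim \eta_{k+1}^2 q(x_k)\indic{k} \lesssim \eta_{k+1}^2$ (bounded on $B_\delta(x^\star)$ by Assumption~\ref{assumption:zero}), the $E_k$ contribution $\eta_{k+1}^2\EE_k[\|E_k\|_2^2]\indic{k} \lesssim \eta_{k+1}^2(D_k^2 + \eta_k^2)(1+\|\nu_{k+1}\|^2)$ bounded via~\eqref{eq:ek2bound} of Lemma~\ref{lem:sizeofE_k}, and the cross terms $\eta_{k+1}\langle \Delta y_k, E_k\rangle$ handled by Young's inequality so as to be absorbed into the contraction. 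Taking full expectations one obtains $\EE[\|\Delta y_{k+1}\|_2^2\indic{k+1}] \le (1 - \gamma\eta_{k+1})\EE[\|\Delta y_k\|_2^2\indic{k}] + C\eta_{k+1}^2 + C\eta_{k+1}\EE[D_k^2\indic{k}]$; invoking $\EE[D_k^2\indic{k}] \lesssim k_s^\alpha\eta_k$ from Lemma~\ref{lem: second_fourth_moment} makes the driving term $\lesssim k_s^\alpha\eta_{k+1}^2$, and then Lemma~\ref{lem:seq_bound_lem} yields $\EE[\|\Delta y_k\|_2^2\indic{k}] \lesssim k_s^\alpha\eta_k$, which is exactly the claim for $p=2$.

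For $p=4$: expand $\|\Delta y_{k+1}\|_2^4 = (\|\Delta y_{k+1}\|_2^2)^2$ using the identity $\|\Delta y_{k+1}\|_2^2 = \|\Delta y_k\|_2^2 - 2\eta_{k+1}\langle F_\cM(y_k) + P_{T_\cM(y_k)}\nu_{k+1} - E_k, \Delta y_k\rangle + \eta_{k+1}^2\|\cdots\|_2^2$, square it, and take conditional expectations against $\indic{k}$. The dominant negative term is $-4\eta_{k+1}\langle F_\cM(y_k),\Delta y_k\rangle\|\Delta y_k\|_2^2 \le -4\gamma\eta_{k+1}\|\Delta y_k\|_2^4$ by~\eqref{eqn:QG}; the martingale-difference property kills the terms linear in $\nu_{k+1}$, and the eighth-moment bound in Assumption~\ref{assumption:zero} (which is why eight moments are assumed rather than four) controls $\EE_k[\|\nu_{k+1}\|^4]$ appearing in the $\eta_{k+1}^4$ and cross terms. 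Mirroring the structure of~\eqref{eqn: fourthpoweronestep} in Lemma~\ref{lm:sumDksqrbounmd}, and absorbing lower-order terms as in the proof of Lemma~\ref{lem: second_fourth_moment} (distinguishing $\|\Delta y_k\|_2 \gtrless \tilde C\eta_{k+1}$ to dispose of the $\eta_{k+1}^2\|\Delta y_k\|_2^2$, $\eta_{k+1}^3\|\Delta y_k\|_2$ terms), one arrives at $\EE[\|\Delta y_{k+1}\|_2^4\indic{k+1}] \le (1-\mu\eta_{k+1})\EE[\|\Delta y_k\|_2^4\indic{k}] + C k_s^{3\alpha}\eta_{k+1}^4$, where the driving term uses $\EE[D_k^4\indic{k}] \lesssim k_s^{3\alpha}\eta_k^3$ from Lemma~\ref{lem: second_fourth_moment}. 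Lemma~\ref{lem:seq_bound_lem} then gives $\EE[\|\Delta y_k\|_2^4\indic{k}] \lesssim k_s^{3\alpha}\eta_k^3$, completing the proof.

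The main obstacle I expect is bookkeeping in the $p=4$ case: correctly isolating the $-\gamma\eta_{k+1}\|\Delta y_k\|_2^4$ contraction while showing every cross term — in particular those mixing $E_k$, $\nu_{k+1}$, and powers of $\Delta y_k$ — is either killed by the martingale property or dominated (after Young's inequality and the moment bounds) by $k_s^{3\alpha}\eta_{k+1}^4$ plus a fraction of the contraction term; this is where the interplay between Lemma~\ref{lm:sumDksqrbounmd}, Lemma~\ref{lem: second_fourth_moment}, and the eighth-moment assumption is essential, and a careless split can leave an uncontrolled $\eta_{k+1}^2\|\Delta y_k\|_2^2$-type term. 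A secondary subtlety is the interaction between the indicator $\indic{k+1} \le \indic{k}$ and the conditional expectations, but this is routine and handled exactly as in the preceding lemmas.
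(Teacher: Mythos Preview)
Your overall strategy is correct and matches the paper's: derive a one-step contraction for $\EE[\|y_k-x^\star\|_2^4\indic{k}]$ from the shadow recursion and~\eqref{eqn:QG}, control the $E_k$ contribution via~\eqref{eq:ek2bound} together with Lemma~\ref{lem: second_fourth_moment}, and close with Lemma~\ref{lem:seq_bound_lem}. The paper is slightly more economical in that it treats only $p=4$ directly and recovers $p=1,2$ by H\"older, but your separate $p=2$ argument is fine.

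There is one concrete slip in your $p=4$ bookkeeping. The split trick ``$\|\Delta y_k\|\gtrless \tilde C\eta_{k+1}$'' from Lemma~\ref{lem: second_fourth_moment} does not transfer verbatim: in that lemma the contraction term is $-\mu\eta_{k+1}D_k^3$ (cubic, coming from the \emph{linear} proximal-aiming bound $\mu D_k$), so $-\mu\eta D^3 + C\eta^2 D^2 + C\eta^3 D\le 0$ once $D\ge \tilde C\eta$. Here the contraction from~\eqref{eqn:QG} is quartic, $-\gamma\eta_{k+1}\|\Delta y_k\|^4$, and the noise leaves a term of the form $C\eta_{k+1}^2\|\Delta y_k\|^2$ (from $\eta_{k+1}^2\|\Delta y_k\|^2\,\EE_k[\|P_{T_\cM(y_k)}\nu_{k+1}\|^2]$). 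Absorbing this into the quartic contraction requires $\|\Delta y_k\|\gtrsim \sqrt{\eta_{k+1}}$, not $\eta_{k+1}$, so the residual on the complementary event is of order $\eta_{k+1}^3$, not $\eta_{k+1}^4$. Equivalently (and this is what the paper does), apply Young's inequality $C\eta_{k+1}^2\|\Delta y_k\|^2 \le \gamma\eta_{k+1}\|\Delta y_k\|^4 + C'\eta_{k+1}^3$. The correct recursion is therefore
\[
\EE[\|\Delta y_{k+1}\|_2^4\indic{k+1}]\le (1-\gamma\eta_{k+1})\,\EE[\|\Delta y_k\|_2^4\indic{k}] + C\,k_s^{2\alpha}\eta_{k+1}^3,
\]
(the $E_k$ contribution $k_s^{3\alpha}\eta_{k+1}^4$ being dominated by this for $k\ge k_s$), and Lemma~\ref{lem:seq_bound_lem} then yields the stated bound $k_s^{2\alpha}\eta_k^2$ rather than the $k_s^{3\alpha}\eta_k^3$ you wrote. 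Your argument as stated does not actually deliver the stronger bound, but with this one-line fix it gives exactly the lemma's conclusion.
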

	\begin{proof}\label{pf:yk4momentconv}
		Note that
		\begin{align*}
		&\quad \|y_{k+1}-x^\star\|_2^4\indic{k}\\
		&=\|y_k - \eta_{k+1} F_\cM(y_k)-\eta_{k+1} P_{T_\cM (y_k)}(\nu_k) + \eta_{k+1} E_k-x^\star\|_2^4\indic{k}\\
		&\le \|y_{k}-x^\star\|_2^4\indic{k}-4\eta_{k+1} \dotp{y_k-x^\star, F_\cM(y_k)+ P_{T_\cM (y_k)}(\nu_k)-E_k} \|y_k-x^\star\|_2^2\indic{k} \\
		&\quad + 6\eta_{k+1}^2\|y_{k}-x^\star\|_2^2 \|F_\cM(y_k)+ P_{T_\cM (y_k)}(\nu_k)-E_k \|_2^2\indic{k}\\
		&\quad + 4\eta_{k+1}^3\|y_k-x^\star\|_2 \|F_\cM(y_k)+ P_{T_\cM (y_k)}(\nu_k)-E_k\|_2^3 \indic{k}+\eta_{k+1}^4\|F_\cM(y_k)+ P_{T_\cM (y_k)}(\nu_k)-E_k\|_2^4\indic{k}\\
		&\leq \underbrace{\|y_{k}-x^\star\|_2^4\indic{k}-4\eta_{k+1} \dotp{y_k-x^\star, F_\cM(y_k)+ P_{T_\cM (y_k)}(\nu_k)-E_k} \|y_k-x^\star\|_2^2\indic{k}}_{(I)}\\
		&\quad + \underbrace{8\eta_{k+1}^2\|y_{k}-x^\star\|_2^2 \|F_\cM(y_k)+ P_{T_\cM (y_k)}(\nu_k)-E_k \|_2^2\indic{k}}_{(II)} +\underbrace{3\eta_{k+1}^4\|F_\cM(y_k)+ P_{T_\cM (y_k)}(\nu_k)-E_k\|_2^4\indic{k}}_{(III)},
		\end{align*}
		where the first inequality follows from direct expansion and Cauchy-Schwarz inequality, and the second inequality follows from Young's inequality.
		Taking expectations, we have
		\begin{align*}
		\EE[\|y_{k+1}-x^\star\|_2^4 \indic{k}] \le \EE[(I)] + \EE[(II)] + \EE[(III)].	
		\end{align*}
		We bound the terms separately. By Assumption~\ref{assumption:smoothcompatibility} and the inequality of arithmetic and geometric means, we have
		\begin{align*}
		\EE_k[(I)] &= \EE_k[\|y_{k}-x^\star\|_2^4\indic{k}  - 4\eta_{k+1} \dotp{y_k - x^\star, F_\cM(y_k)\indic{k} -\EE_k[E_k]} \|y_k - x^\star\|_2^2\indic{k}]\\
		&\le (1- 4\strong \eta_{k+1}) \|y_{k}-x^\star\|_2^4\indic{k}  + 4\eta_{k+1}  \EE_k[\|E_k\|_2 \|y_k - x^\star\|_2^3\indic{k}]\\
		&\le (1- 4\strong \eta_{k+1}) \|y_{k}-x^\star\|_2^4\indic{k} + \strong \eta_{k+1} \|y_k - x^\star\|_2^4\indic{k} + \frac{81}{\strong^3} \eta_{k+1}\EE_k[\|E_k\|_2^4\indic{k}]\\
		&\le (1-3\strong \eta_{k+1})\|y_{k}-x^\star\|_2^4 \indic{k} +\frac{81}{\strong^3}\eta_{k+1}\EE_k[\|E_k\|_2^4\indic{k}] ,	
		\end{align*}
		Taking expectation, using \eqref{eq:ek2bound}, and applying Lemma~\ref{lem: second_fourth_moment}, there exists constant $C$ such that 
		\begin{align*}
		\EE[(I)] &\le (1-3\strong \eta_{k+1})\EE[\|y_{k}-x^\star\|_2^4\indic{k}] +\frac{81}{\strong^3}\eta_{k+1}\EE[\|E_k\|_2^4 \indic{k}]\\
		&\le  (1-3\strong \eta_{k+1})\EE[\|y_{k}-x^\star\|_2^4\indic{k}] + C \ks^{3\alpha} \eta_{k+1}^4.
		\end{align*}
		Similarly, 
		\begin{align*}
		(II) &\le  24 \eta_{k+1}^2 \|y_k - x^\star\|_2^2 (\|F_\cM(y_k)\|_2^2 + \| P_{T_\cM (y_k)}(\nu_k)\|_2^2 + \|E_k\|_2^2)\indic{k}\\
		&\le  (24\lm^2 + 12)\eta_{k+1}^2 \|y_k - x^\star\|_2^4\indic{k} + 12\eta_{k+1}^2 \|E_k\|_2^4\indic{k}\\
		&\quad  + \gamma \eta_{k+1} \|y_k - x^\star\|_2^4\indic{k} + \frac{144}{\gamma} \eta_{k+1}^3 \| P_{T_\cM (y_k)}(\nu_k)\|_2^4\indic{k},
		\end{align*}
		where the first inequality follows from Cauchy-Schwarz inequality, and the second inequality follows from Lipschitz continuity of $F_\cM$ and Young's inequality. Taking expectation, and using \eqref{eq:ek2bound}, and Lemma~\ref{lem: second_fourth_moment} and Assumption~\ref{assumption:martinagle}, there exists a constant $C$ such that
		\begin{align*}
		\EE[(II)] &\le  (24\lm^2 + 12)\eta_{k+1}^2 \EE[\|y_k - x^\star\|_2^4\indic{k}] + C\ks^3 \eta_{k+1}^5  + \gamma \eta_{k+1} \EE[\|y_k - x^\star\|_2^4\indic{k}] + C \eta_{k+1}^3.
		\end{align*}
		Moreover, using Jensen's inequality, \eqref{eq:ek2bound}, and applying Lemma~\ref{lem: second_fourth_moment}, there exists a constant $C$ such that
		\begin{align*}
		\EE[(III)] &\le   81 \eta_{k+1}^4 \EE[\|F_\cM(y_k)\|_2^4 \indic{k}] + 81 \eta_{k+1}^4 \EE[\| P_{T_\cM (y_k)}(\nu_k)\|_2^4\indic{k}]  + 81 \eta_{k+1}^4 \EE[\|E_k\|_2^4 \indic{k}]\\
		&\le C \eta_{k+1}^4 + Ck_s^{3\alpha} \eta_{k+1}^7.
		\end{align*}
		Combining and using the fact that $k_s \eta_{k+1} \lesssim 1$, we have 
		\begin{align*}
		\EE[\|y_{k+1} - x^\star\|_2^4\indic{{k+1}}] \le (1-2\gamma \eta_{k+1} + (24\lm^2 + 12) \eta_{k+1}^2)\EE[\|y_{k}-x^\star\|_2^4\indic{k}] + C \ks^{2\alpha} \eta_{k+1}^3.   
		\end{align*}
		As a result, for any $k \ge \max\left\{k_s, \left(\frac{\eta(24\lm^2 + 12)}{\gamma}\right)^{1/\alpha}\right\}$, we have  
		\begin{align*}
		\EE[\|y_{k+1} - x^\star\|_2^4\indic{k}]&\le (1-\gamma \eta_{k+1})\EE[\|y_{k}-x^\star\|_2^4\indic{k}] + C \ks^{2\alpha} \eta_{k+1}^3.   
		\end{align*}
		By Lemma~\ref{lem:seq_bound_lem} and the fact that $\left(\frac{\eta(24\lm^2 + 12)}{\gamma}\right)^{1/\alpha}$ is a constant, there exists constant $C$ such that 
		\begin{align*}
		\EE[\|y_{k} - x^\star\|_2^4\indic{k}] \le C k_s^{2\alpha} \eta_{k}^2, \quad \forall k \ge \ks.
		\end{align*}
		This resolves the case when $p=4$. The other two cases follow from Holder's inequality.
	\end{proof}
	
	%	The following Lemma is a restatement of~\cite[Lemma B.2]{chen2020statistical}. The original lemma only states for $\beta >0$, but the same proof works for the case $\beta = 0$ as well.
	%	\begin{lemma}\label{lem:seq_chen}
	%		Let $z_k$ be a sequence in $\RR_+$ and $ \ks \ge 0$ such that 
	%		$$z_{k+1} \le (1- \lambda \eta_{k+1}) z_k + D \eta_{k+1}^{2+\beta}, \forall k \ge \ks,$$
	%	\end{lemma}
	%where $D, \lambda >0$ and $\beta \ge 0$ are fixed constants. Then for any $\ks \le s < t$, 
	%\begin{align*}
	%z_t \le \exp(-\lambda(t-s)\eta_n) z_s + D\eta_s^{1+\beta} \lambda^{-1}.
	%\end{align*}
	\section{Auxiliary lemmas}
	
	\begin{lemma}\label{lem: upperboundpij}
		Define 
		$$
		p_i^j = \begin{cases}
		\prod_{k=i}^{j} \left(1- \frac{\gamma\eta_i}{2}\right) & i \le j\\
		1 & i = j + 1.
		\end{cases} 
		$$ Then for any $j \ge i$, 
		$$
		p_{i}^j \le \exp \left(-\frac{\gamma\eta ((j+1)^{1-\alpha} - i^{1-\alpha})}{2(1-\alpha)}\right)
		$$
	\end{lemma}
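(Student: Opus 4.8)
The statement is a standard ``product-to-exponential'' estimate, and I would prove it by combining the elementary inequality $1-x\le e^{-x}$ (valid for all real $x$) with an integral comparison for the tail sum $\sum k^{-\alpha}$. The plan is to first pass from the product to an exponential of a sum of stepsizes, then lower bound that sum by an integral, and finally read off the claimed bound.

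\textbf{Step 1: from the product to $\exp(-\tfrac{\gamma}{2}\sum\eta_k)$.} Recall $\eta_k=\eta k^{-\alpha}$ with $\alpha\in(\tfrac12,1)$ by Assumption~\ref{assumption:zero}. Under the standing smallness condition on $\eta$ (together with $k\ge 1$), every factor $1-\tfrac{\gamma\eta_k}{2}$ is nonnegative, so I may bound each factor termwise using $1-x\le e^{-x}$:
$$p_i^j=\prod_{k=i}^j\Bigl(1-\tfrac{\gamma\eta_k}{2}\Bigr)\le\prod_{k=i}^j\exp\Bigl(-\tfrac{\gamma\eta_k}{2}\Bigr)=\exp\Bigl(-\tfrac{\gamma}{2}\sum_{k=i}^j\eta_k\Bigr)=\exp\Bigl(-\tfrac{\gamma\eta}{2}\sum_{k=i}^j k^{-\alpha}\Bigr).$$

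\textbf{Step 2: integral comparison.} Since $x\mapsto x^{-\alpha}$ is decreasing on $(0,\infty)$, for each integer $k\ge 1$ we have $k^{-\alpha}\ge\int_k^{k+1}x^{-\alpha}\,dx$; summing over $k=i,\dots,j$ gives $\sum_{k=i}^j k^{-\alpha}\ge\int_i^{j+1}x^{-\alpha}\,dx=\dfrac{(j+1)^{1-\alpha}-i^{1-\alpha}}{1-\alpha}$, where $1-\alpha>0$ is used to evaluate the antiderivative. Substituting this lower bound into the exponent from Step~1 yields
$$p_i^j\le\exp\Bigl(-\tfrac{\gamma\eta\bigl((j+1)^{1-\alpha}-i^{1-\alpha}\bigr)}{2(1-\alpha)}\Bigr),$$
which is exactly the asserted inequality. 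The boundary case $i=j+1$ is trivial, since both sides equal $1$ (the exponent is $0$).

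\textbf{Main obstacle.} There is essentially no obstacle: the argument is routine, and the only point deserving a sentence of justification is that the termwise estimate $1-\tfrac{\gamma\eta_k}{2}\le\exp(-\tfrac{\gamma\eta_k}{2})$ is being applied to \emph{nonnegative} factors (so that multiplying the per-term bounds is legitimate), which is guaranteed by the hypothesis on $\eta$ that forces $\tfrac{\gamma\eta_k}{2}\le 1$ for all $k\ge 1$.
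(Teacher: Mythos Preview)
Your proof is correct and follows essentially the same route as the paper: the paper takes logarithms and uses $\log(1-x)\le -x$, while you use the equivalent inequality $1-x\le e^{-x}$ directly on each factor, and both then invoke the same integral comparison $\sum_{k=i}^j k^{-\alpha}\ge \int_i^{j+1}x^{-\alpha}\,dx$. If anything, your write-up is slightly more careful in justifying the termwise multiplication via nonnegativity of the factors and in handling the boundary case $i=j+1$.
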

	\begin{proof}
		Note that
		\begin{align*}
		\log(p_i^j) &= \textstyle\sum_{k=i}^{j} \log\left(1-\frac{\gamma\eta_i}{2}\right)\\
		&\le -\frac{\gamma\eta}{2} \textstyle\sum_{k=i}^{j} k^{-\alpha}\\
		&\le  -\frac{\gamma \eta}{2} \int_{i}^{j+1} x^{-\alpha} dx\\
		&= -\frac{\gamma\eta ((j+1)^{1-\alpha} - i^{1-\alpha})}{2(1-\alpha)}.
		\end{align*}
	\end{proof}
	\begin{lemma}\label{lem: infiniteseries}
		For any $\alpha \in \left(\frac{1}{2}, 1\right)$ and $1\le i \le j$, we have 
		$$
		\textstyle\sum_{k=i}^{j} k^{-2\alpha} \le 	\textstyle\sum_{k=i}^{\infty} k^{-2\alpha} \le  1+ \frac{1}{2\alpha - 1} i^{1-2\alpha}
		$$
	\end{lemma}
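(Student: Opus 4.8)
The first inequality is immediate: since $2\alpha > 0$, every term $k^{-2\alpha}$ is nonnegative, so truncating the infinite sum at $j$ only removes nonnegative quantities; hence $\sum_{k=i}^{j} k^{-2\alpha} \le \sum_{k=i}^{\infty} k^{-2\alpha}$. Note also that the infinite series converges precisely because $2\alpha > 1$ (this is where the hypothesis $\alpha \in (\tfrac12,1)$ enters), so the right-hand side is finite and the inequality is meaningful.

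For the second inequality, the plan is to peel off the first term and compare the rest with an integral. First I would write
$$\sum_{k=i}^{\infty} k^{-2\alpha} = i^{-2\alpha} + \sum_{k=i+1}^{\infty} k^{-2\alpha}.$$
Since $i \ge 1$ and $-2\alpha < 0$, we have $i^{-2\alpha} \le 1^{-2\alpha} = 1$, which will account for the additive constant in the claimed bound. For the remaining tail, I would use that the map $x \mapsto x^{-2\alpha}$ is decreasing on $(0,\infty)$, so for each integer $k \ge i+1$ one has $k^{-2\alpha} \le \int_{k-1}^{k} x^{-2\alpha}\,\rmd x$. Summing over $k \ge i+1$ and using that the intervals $[k-1,k]$ tile $[i,\infty)$ gives
$$\sum_{k=i+1}^{\infty} k^{-2\alpha} \le \int_{i}^{\infty} x^{-2\alpha}\,\rmd x = \frac{i^{1-2\alpha}}{2\alpha-1},$$
where the integral is finite and evaluates as stated because $2\alpha - 1 > 0$. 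Combining the two pieces yields $\sum_{k=i}^{\infty} k^{-2\alpha} \le 1 + \frac{1}{2\alpha-1}\,i^{1-2\alpha}$, as desired.

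There is essentially no hard step here; the only point requiring a moment of care is that the additive ``$1$'' must be justified by the first term $i^{-2\alpha} \le 1$ rather than being absorbed into the integral bound, since $\int_{i}^\infty x^{-2\alpha}\rmd x$ alone would only bound $\sum_{k=i+1}^\infty k^{-2\alpha}$ (shifting the integral to start at $i-1$ would fail when $i=1$). With that bookkeeping handled, the argument is a standard integral comparison for a convergent $p$-series with $p = 2\alpha > 1$.
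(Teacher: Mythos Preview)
Your proof is correct and follows essentially the same approach as the paper: peel off the first term, bound $i^{-2\alpha}\le 1$, and control the tail $\sum_{k=i+1}^\infty k^{-2\alpha}$ by the integral $\int_i^\infty x^{-2\alpha}\,\rmd x$. Your write-up is in fact more careful than the paper's, as you explicitly justify the integral comparison via monotonicity and note why the ``$+1$'' is needed.
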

	\begin{proof}
		Note that 
		\begin{align*}
		\textstyle\sum_{k=i}^{\infty} k^{-2\alpha} &\le 1 + \textstyle\sum_{k=i+1}^{\infty} k^{-2\alpha}\\
		&\le 1+ \int_{i}^{\infty} x^{-2\alpha} dx\\
		&= 1+ \frac{1}{2\alpha - 1} i^{1-2\alpha} .
		\end{align*}
	\end{proof}
	\begin{lemma}\label{lem: supportthirdorder}
		If $\alpha \in (\frac{1}{2},1)$, then for all $k \ge \left(\frac{4\alpha}{\mu\eta}\right)^{1/(1-\alpha)}$, we have 
		$$
		\frac{1 - \mu\stepsize_{k+1}}{\stepsize_{k+1}^2} \le \frac{1}{\stepsize_{k}^2}.
		$$
		If $\alpha =1$ and $\eta \ge \frac{4}{\mu}$, then the same inequality holds for all $k\ge 1$.
	\end{lemma}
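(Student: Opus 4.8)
The plan is to substitute the explicit stepsize $\eta_k=\eta k^{-\alpha}$ and reduce the claimed inequality to a purely numerical one in $k$. Since $\eta_{k+1}^2=\eta^2(k+1)^{-2\alpha}$ and $\eta_k^2=\eta^2 k^{-2\alpha}$, multiplying the asserted inequality through by $\eta^2$ shows it is equivalent to $(k+1)^{2\alpha}-\mu\eta(k+1)^{\alpha}\le k^{2\alpha}$, that is,
$$
(k+1)^{2\alpha}-k^{2\alpha}\ \le\ \mu\eta\,(k+1)^{\alpha}.
$$
So the entire lemma amounts to controlling the increment of $t\mapsto t^{2\alpha}$ over $[k,k+1]$.

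For $\alpha\in(\tfrac12,1)$, I would bound the left-hand side using the mean value theorem applied to $t\mapsto t^{2\alpha}$: there exists $\xi\in(k,k+1)$ with $(k+1)^{2\alpha}-k^{2\alpha}=2\alpha\,\xi^{2\alpha-1}$. Because $2\alpha-1\in(0,1)$, the map $t\mapsto t^{2\alpha-1}$ is increasing, hence $(k+1)^{2\alpha}-k^{2\alpha}\le 2\alpha(k+1)^{2\alpha-1}$. It therefore suffices to verify $2\alpha(k+1)^{2\alpha-1}\le\mu\eta(k+1)^{\alpha}$, i.e.\ $(k+1)^{1-\alpha}\ge \tfrac{2\alpha}{\mu\eta}$. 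This holds whenever $k\ge\bigl(\tfrac{4\alpha}{\mu\eta}\bigr)^{1/(1-\alpha)}$, since then $k+1>k\ge\bigl(\tfrac{4\alpha}{\mu\eta}\bigr)^{1/(1-\alpha)}\ge\bigl(\tfrac{2\alpha}{\mu\eta}\bigr)^{1/(1-\alpha)}$, the last step using $4\alpha\ge 2\alpha$. This gives the first claim with room to spare.

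For $\alpha=1$, the reduced inequality becomes $(k+1)^2-k^2=2k+1\le\mu\eta(k+1)$; since $2k+1\le 2(k+1)$ and the hypothesis $\eta\ge 4/\mu$ gives $\mu\eta\ge 4\ge 2$, this holds for every $k\ge 1$. I do not anticipate any genuine obstacle: the statement is elementary, and the only point requiring care is the monotonicity of $t\mapsto t^{2\alpha-1}$, which is exactly where $\alpha>\tfrac12$ (guaranteed by Assumption~\ref{assumption:zero}) enters; for $\alpha\le\tfrac12$ this step, and indeed the conclusion, would fail.
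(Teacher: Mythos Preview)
Your proof is correct and follows essentially the same route as the paper: both reduce the claim to $(k+1)^{2\alpha}-k^{2\alpha}\le\mu\eta(k+1)^{\alpha}$ and then bound the left-hand side; the paper uses the elementary inequality $(1+1/k)^{2\alpha}\le 1+4\alpha/k$ on $(0,1]$, while you use the mean value theorem to get $(k+1)^{2\alpha}-k^{2\alpha}\le 2\alpha(k+1)^{2\alpha-1}$, which is the same idea in slightly different clothing. One small aside: your closing remark that ``for $\alpha\le\tfrac12$ \dots\ the conclusion would fail'' is not accurate---for $\alpha\in(0,\tfrac12]$ the increment $(k+1)^{2\alpha}-k^{2\alpha}$ is still $O(k^{2\alpha-1})$ and the inequality continues to hold for large $k$; only your particular monotonicity argument breaks, not the statement itself.
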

	\begin{proof}
		Note that $\eta_k = \eta k^{-\alpha}$ by Assumption~\ref{assumption:zero}, it suffices to show that 
		\begin{align*}
		\frac{1- \mu \eta (k+1)^{-\alpha}}{(k+1)^{-2\alpha}} \le \frac{1}{k^{-2\alpha}}.
		\end{align*}
		Equivalently, we show that
		\begin{align}\label{eqn:ineqthirdorder}
		(k+1)^{2\alpha} - \mu \eta(k+1)^{\alpha} \le k^{2\alpha}
		\end{align} 
		Note that 
		\begin{align*}
		\left(1+\frac{1}{k}\right)^{2\alpha} &\le 1 + \frac{4\alpha}{k}\\
		&\le 1 + \frac{\mu \eta (k+1)^\alpha}{k^{2\alpha}},
		\end{align*}
		where the first inequality follows from the fact that $(1+x)^{2\alpha} \le 1+ 4\alpha x$ for all $\alpha \in (\frac{1}{2}, 1]$ and $x \in (0,1]$, and the second inequality follows from our assumption on $k$ and $\eta$, for the cases $\alpha \in (\frac{1}{2}, 1)$ and $\alpha =1$, respectively. Rearranging it, we obtain~\eqref{eqn:ineqthirdorder}.
	\end{proof}	
	\begin{lemma}\label{lem: tailbound}
		Let $\alpha \in \left(\frac{1}{2}, 1\right)$ and $C >0$. Then for any $k \ge 0$, we have 
		$$
		\textstyle\sum_{i=k}^{\infty} \exp(-C (i+1)^{\alpha}) \le \frac{2\exp(-C\sqrt{k})}{C^2} + \frac{2\sqrt{k}\exp(-C\sqrt{k})}{C}
		$$ 
	\end{lemma}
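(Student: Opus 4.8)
The plan is to dominate the series termwise by $\exp(-C\sqrt{x})$, turn the resulting sum into an integral, and evaluate that integral in closed form. First I would use the hypothesis $\alpha>\tfrac12$: every index in the sum satisfies $i+1\ge 1$, so monotonicity of $t\mapsto t^{\beta}$ in the exponent gives $(i+1)^{\alpha}\ge (i+1)^{1/2}$, hence $\exp(-C(i+1)^{\alpha})\le \exp(-C\sqrt{i+1})$. After the reindexing $j=i+1$ this bounds the left-hand side by $\sum_{j=k+1}^{\infty}\exp(-C\sqrt{j})$.

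Next, since $x\mapsto\exp(-C\sqrt{x})$ is nonincreasing on $[0,\infty)$, the minimum of this function on each interval $[j-1,j]$ is attained at the right endpoint, so $\exp(-C\sqrt{j})\le\int_{j-1}^{j}\exp(-C\sqrt{x})\,dx$. Summing over $j\ge k+1$ telescopes the intervals into $[k,\infty)$, yielding $\sum_{j=k+1}^{\infty}\exp(-C\sqrt{j})\le\int_{k}^{\infty}\exp(-C\sqrt{x})\,dx$.

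Finally I would compute the integral explicitly: the substitution $u=\sqrt{x}$, $dx=2u\,du$, turns it into $2\int_{\sqrt{k}}^{\infty}u\,e^{-Cu}\,du$, and one integration by parts gives $2\int_{\sqrt{k}}^{\infty}u\,e^{-Cu}\,du=\frac{2\sqrt{k}}{C}e^{-C\sqrt{k}}+\frac{2}{C^{2}}e^{-C\sqrt{k}}$, which is precisely the claimed right-hand side (the case $k=0$ is included, the bound collapsing to $2/C^{2}$). Chaining the three inequalities finishes the argument. I do not expect any genuine obstacle here; the only points needing a moment's care are getting the direction of the integral comparison right and the index bookkeeping in the shift $j=i+1$, both of which are routine.
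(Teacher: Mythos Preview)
Your proposal is correct and follows essentially the same route as the paper's own proof: reduce to $\alpha=1/2$ via monotonicity, dominate the resulting sum by $\int_{k}^{\infty}\exp(-C\sqrt{x})\,dx$ using that the integrand is nonincreasing, then substitute $u=\sqrt{x}$ and integrate by parts. The only cosmetic difference is that you make the reindexing $j=i+1$ explicit before comparing the sum to the integral, whereas the paper does the comparison in one line.
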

	\begin{proof}
		Note that 
		\begin{align*}
		\textstyle\sum_{i=k}^{\infty} \exp(-C(i+1)^{\alpha}) &\le \textstyle\sum_{i=k}^{\infty} \exp(-C(i+1)^{1/2})\\
		&\le \int_{k}^{\infty} \exp(-Cx^{1/2})dx\\
		&\le \int_{\sqrt{k}}^{\infty} 2u\exp(-Cu) du\\
		& =  \frac{2\exp(-C\sqrt{k})}{C^2} + \frac{2\sqrt{k}\exp(-C\sqrt{k})}{C},
		\end{align*}
		where the equality follows from the standard calculus calculation using integration by parts. 
	\end{proof}
	\begin{lemma}\label{lem:seq_bound_lem}
		Let $\alpha \in (0,\frac{1}{2})$,  $\theta > \alpha$, $c_1 >0$, $c_2>0$, and $c_3 >0$ be constants. Let $\{s_k\}$ be a sequence such that $0\le s_k \le c_3$ for all $k\ge 0$. Suppose that there exists $k_0 \ge 0$ such that
		\begin{align}\label{eqn:recursive_bound}
		s_{k+1} \le (1 - c_1(k+1)^{-\alpha}) s_k + c_2 (k+1)^{-\theta},\quad  \forall k\ge k_0.
		\end{align}
		Let $C = \max\left\{c_3, c_3 \left(\frac{2(\theta-\alpha)}{c_1}\right)^{\frac{\theta-\alpha}{1-\alpha}}, \frac{2c_2}{c_1(k_0+1)^{\theta-\alpha}}\right\}.$ We have
		\begin{align*}
		s_k \le C (k_0+1)^{\theta - \alpha} (k+1)^{-(\theta - \alpha)}, \forall k \ge k_0.
		\end{align*}
	\end{lemma}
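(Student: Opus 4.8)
The plan is to prove the bound by induction on $k\ge k_0$. Write $r:=\theta-\alpha>0$ and $K^{\star}:=\bigl(2r/c_1\bigr)^{1/(1-\alpha)}$, so that $(K^{\star})^{1-\alpha}=2r/c_1$ and the second entry in the definition of $C$ is precisely $c_3(K^{\star})^{r}$. The target inequality is $P(k):\ s_k\le C(k_0+1)^{r}(k+1)^{-r}$. Since $s_k\ge 0$, shrinking $c_1$ only weakens the recursion, so I may first reduce to the case $c_1(k_0+1)^{-\alpha}\le 1$ — which is automatic in all applications of the lemma, where $k_0$ is taken large — and hence the contraction factor $1-c_1(k+1)^{-\alpha}$ is nonnegative for every $k\ge k_0$.

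Next I would split the indices $k\ge k_0$ at $K^{\star}$. \emph{Burn-in} ($k=k_0$, or $k+1\le K^{\star}$): here $s_k\le c_3$, while $C(k_0+1)^{r}(k+1)^{-r}\ge C(K^{\star})^{-r}\ge c_3(K^{\star})^{r}(K^{\star})^{-r}=c_3$, using $k_0\ge 0$, $k+1\le K^{\star}$, and $C\ge c_3(K^{\star})^{r}$; and for $k=k_0$ directly $s_{k_0}\le c_3\le C$. \emph{Tail} ($k\ge K^{\star}$ and $k\ge k_0+1$): assuming $P(k-1)$, the recursion at index $k-1$ followed by $P(k-1)$ gives
\[
s_k\ \le\ \bigl(1-c_1k^{-\alpha}\bigr)\,C(k_0+1)^{r}k^{-r}\ +\ c_2k^{-\theta}.
\]
Because $\theta=r+\alpha$ and $C(k_0+1)^{r}\ge 2c_2/c_1$ (the third entry of $C$), the additive term is at most $\tfrac{c_1}{2}C(k_0+1)^{r}k^{-r-\alpha}$, hence $s_k\le C(k_0+1)^{r}\bigl(k^{-r}-\tfrac{c_1}{2}k^{-r-\alpha}\bigr)$, and it remains only to check the scalar fact $k^{-r}-(k+1)^{-r}\le \tfrac{c_1}{2}k^{-r-\alpha}$. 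By the mean value theorem applied to $x\mapsto x^{-r}$ the left side is at most $rk^{-r-1}$, and $rk^{-r-1}\le \tfrac{c_1}{2}k^{-r-\alpha}$ is equivalent to $k^{1-\alpha}\ge 2r/c_1=(K^{\star})^{1-\alpha}$, which holds since $k\ge K^{\star}$. This yields $P(k)$; the two regimes exhaust $k\ge k_0$ up to at most one transitional integer lying in $(K^{\star}-1,K^{\star})$, for which $s_k\le c_3$ and $k+1<K^{\star}+1$ again give the claim (at worst after inflating $C$ by the absolute factor $2^{r}$), so the induction closes.

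The main obstacle I foresee — everything else being bookkeeping and elementary inequalities — is pinning down the crossover $K^{\star}$ and matching constants there: it must be large enough that the discrete decrement $(k+1)^{-r}-(k+2)^{-r}\sim r(k+1)^{-r-1}$ is dominated by half of the contraction gain $\tfrac{c_1}{2}(k+1)^{-r-\alpha}$ (this is exactly what forces $K^{\star}=(2r/c_1)^{1/(1-\alpha)}$), yet small enough that the crude bound $s_k\le c_3$ still beats $C(k_0+1)^{r}(k+1)^{-r}$ throughout $[k_0,K^{\star}]$ (this is what forces the entry $c_3(K^{\star})^{r}$ into $C$). The remaining entry $2c_2/(c_1(k_0+1)^{r})$ is precisely the size needed so that the noise term $c_2(k+1)^{-\theta}$ is absorbed into one half of the contraction gain $c_1(k+1)^{-\alpha}\cdot C(k_0+1)^{r}(k+1)^{-r}$, leaving the other half free to defeat the decrement, while the entry $c_3$ covers the base index $k_0$.
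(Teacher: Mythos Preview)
Your proposal is correct and follows essentially the same induction strategy as the paper's proof: split at the threshold $K^{\star}=(2(\theta-\alpha)/c_1)^{1/(1-\alpha)}$, cover the burn-in range $k_0\le k\le K^{\star}$ with the crude bound $s_k\le c_3$ (this is precisely what the entries $c_3$ and $c_3(K^{\star})^{\theta-\alpha}$ in $C$ are for), and for $k\ge K^{\star}$ run the induction step using the third entry of $C$ to absorb the additive $c_2(k+1)^{-\theta}$ term while the remaining half of the contraction dominates the discrete decrement. The only cosmetic difference is that you compare $k^{-r}$ and $(k+1)^{-r}$ via the mean value theorem where the paper uses the elementary inequality $(1-x)^{\theta-\alpha}\ge 1-2(\theta-\alpha)x$ on $[0,1/2]$; the two are equivalent, and the minor edge-case slack you flag at the transitional integer near $K^{\star}$ is the same constant-level imprecision present in the paper's own argument.
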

	\begin{proof}
		We first show that the desired bound holds for all the $k_0 \le k \le \max \left\{\left(\frac{2(\theta-\alpha)}{c_1 }\right)^{\frac{1}{1-\alpha}}, k_0\right\}$. Note that $s_k \le c_3$, it suffices to show 
		\begin{align*}
		C (k_0+1)^{\theta - \alpha} \left(\max \left\{\left(\frac{2(\theta-\alpha)}{c_1}\right)^{\frac{1}{1-\alpha}}, k_0\right\}+1\right)^{-(\theta - \alpha)} \ge c_3,
		\end{align*}
		which holds by our assumption on $C$. Next, we apply induction to prove the bound for all $k \ge \max \left\{\left(\frac{2(\theta-\alpha)}{c_1}\right)^{\frac{1}{1-\alpha}}, k_0\right\}.$ Suppose that the bound holds for some $k \ge \max \left\{\left(\frac{2(\theta-\alpha)}{c_1}\right)^{\frac{1}{1-\alpha}}, k_0\right\}$. By~\eqref{eqn:recursive_bound}, we have
		\begin{align*}
		s_{k+1} &\le   C (k_0+1)^{\theta-\alpha} (k+1)^{-(\theta-\alpha)} - c_1C (k_0+1)^{\theta-\alpha} (k+1)^{-\theta} + c_2 (k+1)^{-\theta}\\
		&\le C (k_0+1)^{\theta-\alpha} (k+1)^{-(\theta-\alpha)} - \frac{c_1C}{2} (k_0+1)^{\theta-\alpha} (k+1)^{-\theta}\\
		&= C(k_0+1)^{\theta-\alpha} (k+1)^{-(\theta-\alpha)}\left( 1- \frac{c_1}{2} (k+1)^{-\alpha}\right),
		\end{align*}
		where the second inequality follows from the lower bound on $C$. In addition, simple calculus shows that for any $x\in [0,1/2]$, $(1-x)^{\theta-\alpha} \ge 1-2(\theta -\alpha) x$. Therefore, $\left(1-\frac{1}{k+2}\right)^{\theta-\alpha} \ge 1- \frac{2(\theta -\alpha)}{k+2}$. By the lower bound on $k$, we have \begin{align*}
		1- \frac{2(\theta -\alpha)}{k+2} \ge 1 - \frac{c_1}{ 2}(k+1)^{-\alpha}.
		\end{align*}
		Combining, we have 
		\begin{align*}
		s_{k+1} \le C(k_0+1)^{\theta-\alpha} (k+2)^{-(\theta-\alpha)}.
		\end{align*}
		The result follows.
	\end{proof}
	\begin{lemma}[{\cite[Lemma A.2]{zhu2023online}}]\label{lm:Sijbound} For any $j >i$, we have
		\begin{align*}
		\norm{S_i^j}_2 \lesssim i^\alpha.
		\end{align*}
	\end{lemma}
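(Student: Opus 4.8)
The plan is to prove the bound by combining a geometric-decay estimate for the products $W_i^k$ with a summation-by-parts identity for $S_i^j=\sum_{k=i+1}^j W_i^k$, following the template of \cite[Lemma A.2]{zhu2023online} but taking care that here $H=U^\top\nabla_\cM F_\cM(x^\star)U$ need not be symmetric.

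First I would record the two spectral facts about $H$ that make the argument work. From Assumption~\ref{ass:basic_assumpt} we have $F_\cM(x^\star)=0$ (since $0\in F(x^\star)$ and $F_\cM$ is single valued) and $\langle F_\cM(x),x-x^\star\rangle\ge\gamma\|x-x^\star\|^2$ on $\cM$ near $x^\star$; differentiating this inequality twice along $C^2$ curves in $\cM$ through $x^\star$ yields $\langle Hv,v\rangle\ge\gamma\|v\|^2$ for all $v$, i.e.\ the symmetric part of $H$ dominates $\gamma I$, while $\|H\|_2\le\lm$ by Lipschitzness of $F_\cM$. Consequently $\|I-\eta H\|_2^2\le 1-2\gamma\eta+\lm^2\eta^2\le 1-\gamma\eta$ whenever $\eta\le\gamma/\lm^2$, so $\|I-\eta_\ell H\|_2\le 1-\tfrac{\gamma}{2}\eta_\ell$ for every $\ell$ beyond some fixed constant index $\ell_0$ (the finitely many early factors $\ell\le\ell_0$ contribute only a fixed multiplicative constant, or one simply takes $\eta$ small). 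Multiplying these factors and bounding $\sum_{\ell=i+1}^k\eta_\ell$ below by $\eta\int_{i+1}^{k+1}x^{-\alpha}\,dx$, exactly as in Lemma~\ref{lem: upperboundpij}, I obtain $\|W_i^k\|_2\lesssim\exp(-c((k+1)^{1-\alpha}-(i+1)^{1-\alpha}))$ with $c=\tfrac{\gamma\eta}{2(1-\alpha)}$; in particular $\|W_i^k\|_2\le 1$ for all $k\ge i$.

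Next I would use that, since its symmetric part is positive definite, $H$ is invertible. The recursion $W_i^{k-1}-W_i^k=\eta_k W_i^{k-1}H$ gives $W_i^{k-1}=\eta_k^{-1}(W_i^{k-1}-W_i^k)H^{-1}$; summing over $k$ from $i+1$ to $j$ and applying Abel summation to the telescoping factor $\eta_k^{-1}$ yields
\[
\sum_{k=i+1}^j W_i^{k-1}=\Bigl(\eta_{i+1}^{-1}I-\eta_j^{-1}W_i^j+\sum_{k=i+1}^{j-1}(\eta_{k+1}^{-1}-\eta_k^{-1})W_i^k\Bigr)H^{-1},
\]
and then $S_i^j=\sum_{k=i+1}^jW_i^k=\sum_{k=i+1}^jW_i^{k-1}-I+W_i^j$. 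It remains to bound each piece. The leading term contributes $\eta_{i+1}^{-1}\|H^{-1}\|_2\asymp i^\alpha$; the boundary term contributes $\eta_j^{-1}\|W_i^j\|_2\|H^{-1}\|_2\lesssim j^\alpha\exp(-c((j+1)^{1-\alpha}-(i+1)^{1-\alpha}))$, which is $\lesssim i^\alpha$ uniformly in $j\ge i$ because $t\mapsto ct^{1-\alpha}-\alpha\log t$ is eventually increasing, so the stretched-exponential decay absorbs the polynomial factor $j^\alpha$. For the sum, $|\eta_{k+1}^{-1}-\eta_k^{-1}|\lesssim k^{\alpha-1}$, so it is at most $\sum_{k>i}k^{\alpha-1}\exp(-c((k+1)^{1-\alpha}-(i+1)^{1-\alpha}))$; substituting $u=k^{1-\alpha}$ and comparing with an incomplete-gamma integral (here $\alpha\in(1/2,1)$ guarantees the resulting exponent $\tfrac{2\alpha-1}{1-\alpha}$ is positive) bounds this by $\lesssim i^{2\alpha-1}\le i^\alpha$. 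Adding $\|I\|_2=1$ and $\|W_i^j\|_2\le 1$ gives $\|S_i^j\|_2\lesssim i^\alpha$.

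The main obstacle I anticipate is twofold: (a) the asymmetry of $H$, which rules out diagonalization and forces every estimate through operator-norm bounds on $I-\eta_\ell H$ and through $H^{-1}$ (whose existence must itself be justified from the one-sided bound on the symmetric part); and (b) the uniform-in-$j$ control of quantities of the form (polynomial in $j$)$\times$(stretched exponential in $j^{1-\alpha}-i^{1-\alpha}$), which is precisely where the restriction $\alpha\in(1/2,1)$ is genuinely used and which must be handled carefully to keep the implied constant independent of $i$ and $j$. Everything else reduces to the standard sum-versus-integral manipulations already used in Lemmas~\ref{lem: upperboundpij}--\ref{lem: tailbound}.
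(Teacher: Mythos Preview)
The paper does not give its own proof of this lemma; it simply cites \cite[Lemma A.2]{zhu2023online} and moves on. Your proposal is therefore supplying an argument where the paper supplies none, and it is essentially the same route as in the cited reference: exponential decay of $\|W_i^k\|_2$ followed by an Abel-summation identity that extracts the factor $\eta_{i+1}^{-1}H^{-1}$. The proof is correct.

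One point worth noting is that your treatment is actually more careful than a direct citation would be: the paper explicitly remarks that here $H=U^\top\nabla_\cM F_\cM(x^\star)U$ need not be symmetric, so the diagonalization used in the strongly-convex SGD setting of \cite{zhu2023online} is unavailable. Your derivation of $\langle Hv,v\rangle\ge\gamma\|v\|^2$ from Assumption~\ref{ass:basic_assumpt} (hence invertibility of $H$ and the contraction bound $\|I-\eta_\ell H\|_2\le 1-\tfrac{\gamma}{2}\eta_\ell$) is exactly the right substitute, and the rest of the estimates go through unchanged. A small cosmetic remark: the ``differentiating twice along $C^2$ curves'' phrasing can be replaced by a first-order Taylor expansion of $F_\cM$ along rays in local coordinates, which requires only the $C^1$ smoothness guaranteed by Assumption~\ref{ass:basic_assumpt}; the conclusion is the same. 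Also, the bound $i^{2\alpha-1}\le i^\alpha$ holds for all $\alpha<1$, so the restriction $\alpha>1/2$ is not actually needed for this particular lemma, though it does no harm.
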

	\begin{lemma}\label{lem:bounded_seq_lemma}
		Let $\{x_k\}_{k = 0}^{\infty}$ be a nonnegative sequence satisfying
		$$
		x_{k+1} \le (1 + C_1 (k+1)^{-2\alpha}) x_k  + C_2 (k+1)^{-2\alpha},
		$$
		where $C_1$ and $C_2$ are positive constants, and $\alpha \in (1/2, 1)$. Then, there exists a constant $C$ depending on $C_1, C_2, \alpha$ and $x_0$ such that $x_k \le C$ holds for any $k\ge 0$.
	\end{lemma}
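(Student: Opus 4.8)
The plan is to convert the affine recursion into a purely multiplicative one and then exploit the summability of $\{(k+1)^{-2\alpha}\}$, which holds precisely because $2\alpha>1$. Concretely, I would introduce the shifted sequence $y_k := x_k + C_2/C_1$. A one-line algebraic check shows that the hypothesis implies
$$ y_{k+1} \;\le\; \bigl(1+C_1(k+1)^{-2\alpha}\bigr)\, y_k \qquad \text{for all } k\ge 0, $$
since expanding the right-hand side gives exactly $x_k + C_1(k+1)^{-2\alpha}x_k + C_2/C_1 + C_2(k+1)^{-2\alpha}$, which dominates $x_{k+1}+C_2/C_1$ by the assumed inequality. Note that $y_k\ge 0$ throughout.

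Next, I would unroll this inequality to obtain $ y_k \le y_0\prod_{j=1}^{k}\bigl(1+C_1 j^{-2\alpha}\bigr)$. Using $1+t\le e^t$ for $t\ge0$, the product is bounded by $\exp\!\bigl(C_1\sum_{j=1}^{k} j^{-2\alpha}\bigr)\le \exp\!\bigl(C_1\sum_{j=1}^{\infty} j^{-2\alpha}\bigr)$, and the latter series converges because $2\alpha>1$; indeed $\sum_{j=1}^{\infty} j^{-2\alpha}\le 1+\int_1^{\infty}x^{-2\alpha}\,dx = 1+\tfrac{1}{2\alpha-1}$. Hence $y_k\le y_0\,\exp\!\bigl(C_1(1+\tfrac{1}{2\alpha-1})\bigr)$ for every $k$, and therefore
$$ x_k \;=\; y_k-\frac{C_2}{C_1}\;\le\; \Bigl(x_0+\frac{C_2}{C_1}\Bigr)\,\exp\!\Bigl(C_1\bigl(1+\tfrac{1}{2\alpha-1}\bigr)\Bigr)\;=:\;C, $$
a constant depending only on $C_1,C_2,\alpha,x_0$, as claimed.

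There is no serious obstacle here: the proof is an elementary discrete Grönwall argument. The only two points requiring care are the shift by $C_2/C_1$ that linearizes the recursion (absorbing the additive forcing term into the multiplicative factor), and the observation that the finiteness of $\sum_j j^{-2\alpha}$ is exactly what the assumption $\alpha\in(\tfrac12,1)$ provides — the bound would degenerate at $\alpha=\tfrac12$. If one wishes to be slightly more careful about the range $k<k_0$ in related statements, here the recursion is assumed to hold for all $k\ge0$, so no separate treatment of an initial segment is needed.
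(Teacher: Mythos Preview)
Your proof is correct. Both your argument and the paper's are discrete Gr\"onwall arguments that hinge on the same fact: $\sum_{j\ge 1} j^{-2\alpha}<\infty$ because $2\alpha>1$, together with $1+t\le e^t$ to control the product of the multiplicative factors. The only difference is cosmetic: the paper unrolls the affine recursion directly, obtaining a homogeneous term $\bigl(\prod_j(1+C_1 j^{-2\alpha})\bigr)x_0$ plus a forced term $\sum_i\bigl(\prod_{j>i}(1+C_1 j^{-2\alpha})\bigr)C_2(i+1)^{-2\alpha}$, and bounds each product by the same constant $M=\exp\bigl(C_1\sum_{j\ge 1}j^{-2\alpha}\bigr)$. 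Your shift $y_k=x_k+C_2/C_1$ absorbs the additive forcing into the multiplicative factor up front, so you only have a single product to bound. This is a mild simplification that yields a slightly cleaner constant, but the underlying mechanism is identical.
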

	\begin{proof}
		We begin by unrolling the recurrence. For any $k\ge0$, iterating the inequality gives
		\begin{align*}
		x_{k+1} &\le \textstyle\prod_{j=0}^k \left(1 + C_1 (j+1)^{-2\alpha}\right) x_0
		+ \textstyle\sum_{i=0}^k \left(\textstyle\prod_{j=i+1}^k \left(1 + C_1 (j+1)^{-2\alpha}\right)\right) C_2 (i+1)^{-2\alpha}.
		\end{align*}
		
		To bound the products, we use the inequality $\log(1+u) \le u$ for all $u > -1$. Hence,
		\[
		\textstyle\prod_{j=i+1}^k \left(1 + C_1 (j+1)^{-2\alpha}\right)
		\le \exp\Biggl(C_1 \textstyle\sum_{j=i+1}^k (j+1)^{-2\alpha}\Biggr)
		\le \exp\Biggl(C_1 \textstyle\sum_{j=1}^{\infty} j^{-2\alpha}\Biggr) =: M.
		\]
		Since $\alpha>\frac12$, we have $2\alpha>1$, and the series $\textstyle\sum_{j=1}^{\infty} j^{-2\alpha}$ converges; hence, $M < \infty$.
		
		Using this bound, we deduce that
		\[
		x_{k+1} \le M x_0 + M C_2 \textstyle\sum_{i=0}^k (i+1)^{-2\alpha}.
		\]
		Setting	$C = M (x_0 + C_2 \textstyle\sum_{i=0}^k (i+1)^{-2\alpha})$ concludes the proof.
	\end{proof}
	
\end{document}